\documentclass[11pt]{article}
\usepackage{times}
\usepackage{amsmath}
\usepackage{amsthm}
\usepackage{amssymb}
\usepackage{algorithm}
\usepackage{color}
\usepackage[english]{babel}

\usepackage{graphicx}
\usepackage{caption}
\usepackage{subcaption}

\usepackage{enumitem}

\usepackage{natbib}

\usepackage{wrapfig,epsfig}
\usepackage{psfrag}
\usepackage{epstopdf}
\usepackage{url}
\usepackage{graphicx}
\usepackage{color,xcolor}
\usepackage{epstopdf}
\usepackage{algpseudocode}
\usepackage[hidelinks,pdfencoding=auto,psdextra]{hyperref}
\usepackage{hyperref}
\hypersetup{colorlinks=false}
\hypersetup{
	colorlinks,
	linkcolor={red!40!gray},
	citecolor={blue!40!gray},
	urlcolor={blue!70!gray}
}

\usepackage[margin=1in]{geometry}
\linespread{1}

\newtheorem{theorem}{Theorem}[section]
\newtheorem{lemma}[theorem]{Lemma}

\newtheorem{definition}[theorem]{Definition}

\newtheorem{corollary}[theorem]{Corollary}

\newtheorem{fact}[theorem]{Fact}
\newtheorem{remark}[theorem]{Remark}
\newtheorem{claim}[theorem]{Claim}

\newcommand{\wt}{\widetilde}
\newcommand{\ov}{\overline}

\newcommand{\eps}{\varepsilon}

\newcommand{\supp}{\mathsf{supp}}
\newcommand{\diag}{\mathsf{diag}}
\newcommand{\sign}{\mathsf{sign}}

\newcommand{\poiss}{\mathsf{Poisson}}

\DeclareMathOperator*{\argmin}{arg\,min}
\DeclareMathOperator*{\E}{\mathbb{E}}

\newcommand{\R}{\mathbb{R}}

\newcommand{\Mc}{\mathcal{M}}

\newenvironment{proofof}[1]{\bigskip \noindent {\bfseries \upshape Proof of #1.}\quad }
{\qed\par\vskip 4mm\par}

\def\cuc{{robust uniform convergence}}
\def\Cuc{{Robust uniform convergence}}

\begin{document}

\title{Query Complexity of
  Least Absolute Deviation Regression\\
  via Robust Uniform Convergence}




\author{\textbf{Xue Chen} \\
George Mason University\\
\texttt{xuechen@gmu.edu}
\and \textbf{Micha{\l} Derezi\'nski}\\
University of California, Berkeley\\
 \texttt{mderezin@berkeley.edu}
}

\maketitle

\date{}


\begin{abstract}%
  Consider a regression problem where the learner is given
  a large collection of $d$-dimensional data points, but can only
  query a small subset of the real-valued labels. How many queries are
  needed to obtain a $1+\epsilon$ relative error approximation of the
  optimum? While this problem has been extensively studied for least
  squares regression, little is known for other losses. An important
  example is least absolute deviation regression ($\ell_1$ regression)
  which enjoys superior robustness to outliers compared to least
  squares. We develop a new framework for analyzing importance
  sampling methods in regression problems, which enables us to show
  that the query complexity of 
  least absolute deviation regression is $\Theta(d/\epsilon^2)$ up to
  logarithmic factors. We further extend our techniques to show the
  first bounds on the query complexity for any $\ell_p$ loss with
  $p\in(1,2)$. As a key novelty in our analysis, we introduce the
  notion of \emph{\cuc}, which is a new approximation guarantee for the
  empirical loss. While it is inspired by uniform
  convergence in statistical learning, our approach additionally incorporates a correction
  term to avoid unnecessary variance due to outliers. This
  can be viewed as a new connection between statistical learning theory and
  variance reduction techniques in stochastic optimization,
which should be of
  independent interest.
\end{abstract}

\section{Introduction}

Consider a linear regression problem defined by an $n\times d$ data matrix
$A$ and a vector $y\in\R^n$ of labels (or responses). Our goal is to
approximately find a vector $\beta\in\R^d$ that minimizes the total
loss over the entire dataset, given by
$L(\beta)=\sum_{i=1}^nl(a_i^\top\beta-y_i)$, where $a_i^\top$ is the $i$th
row of $A$. Suppose that we are only given the data matrix $A$ and we
can choose to query some number of individual labels $y_i$ (with the
remaining labels hidden). When the
cost of obtaining those labels dominates other computational costs
(for example, because it requires performing a complex and
resource-consuming measurement), 
it is natural to ask how many
queries are required to obtain a good approximation of the optimal
solution over the entire dataset. This so-called query complexity
arises in a number of statistical learning tasks such as active learning
and experimental design.
\begin{definition}[Query complexity]\label{def:query-complexity}
  For a given loss function $l:\R\rightarrow \R_{\geq 0}$, the query
  complexity $\Mc:=\Mc_l(d,\epsilon,\delta)$ is the smallest number such that
  there is a randomized algorithm that, given any $n\times d$ matrix
  $A$ and any hidden vector $y\in\R^n$, queries $\Mc$ entries of $y$
  and returns $\wt{\beta}\in\R^d$, so that:
  \begin{align*}
L(\wt{\beta})\leq (1+\epsilon)\cdot \min_\beta
    L(\beta)\quad\text{with probability $1-\delta$},\quad\text{where}\quad L(\beta) =
    \sum_{i=1}^nl(a_i^\top\beta-y_i).
  \end{align*}
\end{definition}
Note that since the algorithm has access to the entire dataset on
which it will be evaluated, it can use that information to select the
queries better than, say uniformly at random, which differentiates
this problem from the traditional sample complexity in statistical
learning theory. Also, note that we require a relative error
approximation, as opposed to an additive one. This is because we
are not restricting the range of the labels, so a relative error
approximation provides a more useful scale-invariant guarantee (this is a common
practice when approximating regression problems).   

Significant work has been dedicated to various notions of query complexity in
classification \citep[for an overview, see][]{hanneke2014theory}.
On the other hand, in the context of regression, prior 
literature has primarily focused on the special case of least squares regression,
$l(a) = a^2$, where the optimum solution has a closed form expression,
which considerably simplifies the analysis. In this setting, the query
complexity is known to be $\Mc=O(d/\epsilon)$ \citep{CP19}. 
An important drawback of the square loss in linear regression is its
sensitivity to outliers, and to that end, a number of other loss
functions are commonly used in practice to ensure robustness to
outliers. Those losses no longer yield a closed form solution, so the
techniques from least squares do not  apply. The primary example is
\emph{least absolute deviation} regression, 
also known as $\ell_1$ regression, which uses the loss function
$l(a)=|a|$. A natural way of interpolating between the robustness of
the $\ell_1$ loss and the smoothness of least squares, is to use an
$\ell_p$ loss, i.e., $l(a)=|a|^p$ for $p\in(1,2)$. Other popular
choices include the Huber loss and the Tukey loss; for more discussion on robust regression
see \cite{CW15}. The basic question that motivates this paper is:
What is the query complexity of robust regression? This question has
remained largely open even for the special case of least absolute
deviation regression.

\subsection{Main Results}

Our first main result gives nearly-matching (up to logarithmic factors) upper and lower bounds for
the query complexity of least absolute deviation regression.
While the randomized algorithm that achieves the upper bound is in
fact based on an existing importance sampling method, our key contribution is to provide a
new analysis of this method that overcomes a significant limitation of
the prior work, resulting in the first non-trivial guarantee for query complexity.
\begin{theorem}\label{t:ell1}
  The query complexity of least absolute deviation regression, $l(a)=|a|$, satisfies:
  \begin{align*}
  \Omega( (d+\log(1/\delta))/\epsilon^2)  \leq \Mc_{l}(d,\epsilon,\delta) \leq
    O( d\log(d/\epsilon\delta)/\epsilon^2).
  \end{align*}
\end{theorem}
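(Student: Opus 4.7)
The proof naturally splits into upper and lower bounds, which I would treat in turn. For the upper bound $O(d\log(d/\epsilon\delta)/\epsilon^2)$, I plan to analyze the standard importance-sampling scheme based on $\ell_1$ Lewis weights of $A$: weights $w_1,\ldots,w_n\geq 0$ with $\sum_i w_i = O(d)$ satisfying the sensitivity-type property $|a_i^\top\beta|\leq w_i\|A\beta\|_1$ for all $\beta$ and $i$. Using sampling probabilities $p_i$ proportional to $w_i$ with $m = O(d\log(d/\epsilon\delta)/\epsilon^2)$ samples $S$, I let $\wt\beta$ be the minimizer of the reweighted empirical loss $\wt L(\beta)=\sum_{i\in S}\tfrac{1}{p_i}|a_i^\top\beta-y_i|$, and aim to show $L(\wt\beta)\leq(1+\epsilon)L(\beta^*)$ with probability $1-\delta$, where $\beta^*$ is the true minimizer of $L$.

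The key technical step is \cuc{}, established for the \emph{centered} deviation $g_S(\beta):=\bigl(\wt L(\beta)-\wt L(\beta^*)\bigr)-\bigl(L(\beta)-L(\beta^*)\bigr)$. Centering is necessary because direct uniform concentration of $\wt L$ around $L$ fails when outlier residuals inflate $L(\beta^*)$ in ways unrelated to the subspace geometry of $A$. Each summand of $g_S(\beta)$ is controlled, via the reverse triangle inequality, by $\tfrac{1}{p_i}|a_i^\top(\beta-\beta^*)|$, which the Lewis-weight bound turns into a quantity comparable to $\|A(\beta-\beta^*)\|_1$; this is exactly the scale of the gap $L(\beta)-L(\beta^*)$ that I need to estimate. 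I would then apply Bernstein's inequality for a fixed $\beta$ and extend to uniform control by union-bounding over an $\epsilon$-net of the $d$-dimensional image space $\{A\beta:\beta\in\R^d\}$, absorbing the logarithmic overhead into the final sample count. The multiplicative $(1+\epsilon)$-approximation on $L(\wt\beta)$ then follows by combining this uniform bound with the optimality of $\wt\beta$ for $\wt L$ and a short rearrangement.

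For the lower bound $\Omega((d+\log(1/\delta))/\epsilon^2)$, I plan an information-theoretic construction: partition the $n$ rows of $A$ into $d$ blocks, one aligned with each coordinate direction, and consider a family of label vectors indexed by a uniformly random sign pattern $s\in\{\pm 1\}^d$ where the labels in block $j$ are i.i.d.\ $\pm 1$ with bias $\tfrac{1}{2}+c\epsilon s_j$ for a small constant $c$. A short computation shows that any $(1+\epsilon)$-approximate minimizer must recover a constant fraction of the coordinates of $s$, and an Assouad/Fano-style argument then forces $\Omega(d/\epsilon^2)$ label queries; the $\Omega(\log(1/\delta)/\epsilon^2)$ term is obtained separately by amplifying a single-coordinate two-point test. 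The main obstacle I anticipate is the \cuc{} step in the upper bound, because without the centering correction the importance-sampling variance scales with $L(\beta^*)$ rather than with $\|A(\beta-\beta^*)\|_1$, and the ratio of these quantities can be arbitrarily large; obtaining the tight $\epsilon^{-2}$ dependence simultaneously for all $\beta$ therefore requires carefully pairing a range-and-variance-aware Bernstein bound for the centered process with a net argument that respects the intrinsic subspace geometry rather than any ambient norm.
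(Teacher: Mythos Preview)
Your lower-bound construction is essentially the same as the paper's (coordinate-block matrix, $\pm 1$ labels with bias $\tfrac12\pm c\epsilon$, reduction to biased-coin testing), so that half is fine.

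For the upper bound, your overall architecture---Lewis-weight sampling, the centered process $g_S(\beta)=(\wt L(\beta)-\wt L(\beta^*))-(L(\beta)-L(\beta^*))$, and the contraction $\bigl||a_i^\top\beta-y_i|-|a_i^\top\beta^*-y_i|\bigr|\le |a_i^\top(\beta-\beta^*)|$---matches the paper's robust uniform convergence framework exactly. The gap is in how you propose to make the bound uniform in $\beta$: Bernstein for a fixed $\beta$ plus a union bound over an $\epsilon$-net of the $d$-dimensional image $\{A\beta\}$ loses a factor of $d$. Concretely, with $p_i\ge w_i/u$ the range and variance of each centered summand are $O(u\,\|A(\beta-\beta^*)\|_1)$ and $O(u\,\|A(\beta-\beta^*)\|_1^2)$, so Bernstein at level $t=\epsilon\|A(\beta-\beta^*)\|_1$ gives failure probability $\exp(-\Omega(\epsilon^2/u))$. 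A net in a $d$-dimensional normed space has size $(C/\epsilon)^d$, so the union bound forces $\epsilon^2/u\gtrsim d\log(1/\epsilon)+\log(1/\delta)$, i.e.\ $u\lesssim \epsilon^2/(d\log)$ and hence sample size $m\approx d/u=\Theta(d^2\log/\epsilon^2)$, not $d\log/\epsilon^2$. This is the same $d$-versus-$d^2$ gap that separates naive net arguments from chaining in $\ell_1$ subspace embeddings.

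The paper closes this gap not by Bernstein-plus-net but by symmetrizing and Gaussianizing the centered process, then applying a Slepian--Fernique comparison: because the contraction inequality above holds coordinatewise, the Gaussian process $\sum_j g_j s_j(|a_j^\top\beta-y_j|-|y_j|)$ is dominated by the \emph{linear} process $\sum_j g_j s_j a_j^\top\beta$, whose supremum over any $\ell_1$-ball is then controlled by the Talagrand/Cohen--Peng chaining bound for Lewis-weight-bounded matrices (their Lemma~\ref{lem:Gaussian_proc_bounded_Lewis}). That chaining step is precisely what buys back the missing factor of $d$. If you want to keep your proof self-contained you need to replace the net argument by a Dudley/generic-chaining bound on the Gaussianized process, or invoke the $\ell_1$ subspace-embedding machinery directly after the comparison step; a pointwise Bernstein bound alone will not reach $O(d\log(d/\epsilon\delta)/\epsilon^2)$.
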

To obtain the upper bound, we use the so-called Lewis
weight sampling originally due to \cite{Lewis1978} (see Section
\ref{sec:preli}), which can be implemented in 
time that is nearly-linear in the support size of the matrix $A$
\citep{CP15_Lewis}. Lewis weight sampling is known to provide 
a strong guarantee called the $\ell_1$ subspace embedding
property~\citep{Talagrand90,CP15_Lewis}, which 
is an important tool in randomized linear algebra for approximately solving least absolute 
deviation regression. However, this strategy only works if the
algorithm has unrestricted access to the vector 
$y$ when computing the weights (e.g., when
the motivation is computational efficiency rather than query
complexity), or if we require only a constant factor approximation
(i.e., $\epsilon >1$).%
\footnote{A constant factor
  approximation with $\eps=7$ and $\delta=0.4$ is folklore (e.g.,
  Theorem 6 in \cite{dasgupta2009sampling}). To obtain a more accurate approximation,
one can use the $\ell_1$ subspace embedding property
with respect to the matrix $[A;y]$ (which has $d+1$ columns) instead of $A$, however this requires
unrestricted access to the vector $y$.}
So, prior work on Lewis weight sampling provides
no guarantees for query complexity with $\epsilon\leq 1$.

In this paper, we develop a new analysis of Lewis weight sampling,
which relies on what we call \emph{\cuc}: an approximation guarantee for the
empirical loss that is similar to uniform
convergence in statistical learning theory, except it incorporates a correction 
to reduce the variance due to outliers. This leads to an intriguing
connection between uniform convergence in statistical learning and
variance reduction techniques in stochastic optimization (see Section \ref{s:technique}). Our new
approach not only allows us to compute the weights without accessing
the label vector, thereby obtaining guarantees for query complexity,
but it also leads to simplifications of existing methods for approximately solving 
the regression problem, both in terms of the algorithms and the
analysis (see Section \ref{s:related-work}). Our analysis framework is
not specific to the $\ell_1$ loss, and is likely of interest beyond
robust regression. To demonstrate this, our second main result
uses {\cuc} to provide an upper bound on the query complexity of regression with
any $\ell_p$ loss where $p\in(1,2)$, which is again the first
non-trivial guarantee of this kind.
\begin{theorem}\label{t:ellp}
For any $p\in(1,2)$,  the query complexity of $\ell_p$ regression,
$l(a)=|a|^p$, satisfies:
\begin{align*}
  \Mc_{l}(d,\epsilon,\delta) \leq O(d^2\log(d/\epsilon\delta)/\epsilon^2\delta).
\end{align*}
\end{theorem}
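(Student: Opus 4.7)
The plan is to extend the \cuc-based framework used for Theorem~\ref{t:ell1} to the $\ell_p$ loss with $p\in(1,2)$. The algorithm is again importance sampling by $\ell_p$-Lewis weights of $A$: compute weights $w_1,\dots,w_n$ (well defined for every $p\geq 1$), form probabilities $p_i = w_i/\sum_j w_j$, draw $m = O(d^2\log(d/\epsilon\delta)/\epsilon^2\delta)$ row indices i.i.d.\ from this distribution, query the corresponding labels, and return $\wt\beta = \argmin_\beta \hat L(\beta)$, where
\begin{equation*}
\hat L(\beta) \;=\; \frac{1}{m}\sum_{j=1}^m \frac{|a_{i_j}^\top\beta - y_{i_j}|^p}{p_{i_j}}.
\end{equation*}

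The analysis proceeds in three steps. First, I would establish an $\ell_p$ version of \cuc: with probability at least $1-\delta$, simultaneously for all $\beta$,
\begin{equation*}
\bigl|\hat L(\beta) - L(\beta)\bigr| \;\leq\; \epsilon\,L(\beta) \;+\; \epsilon\,C_p(\beta,\beta^\ast),
\end{equation*}
where $\beta^\ast = \argmin_\beta L(\beta)$ and the correction $C_p(\beta,\beta^\ast)$ vanishes at $\beta = \beta^\ast$ and is dominated (via the $p$-th power triangle inequality $|x+y|^p \leq 2^{p-1}(|x|^p+|y|^p)$) by a constant multiple of $L(\beta)+L(\beta^\ast)$. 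Second, applying this bound at $\beta^\ast$ (where the correction vanishes) and at $\wt\beta$, combined with the optimality $\hat L(\wt\beta)\leq\hat L(\beta^\ast)$, yields $L(\wt\beta) \leq (1+O(\epsilon))L(\beta^\ast)$ after rescaling $\epsilon$. Third, to prove the \cuc{} statement itself, I would bound the second moment of each importance-sampled summand using the $\ell_p$-Lewis weight subspace embedding property, apply Chebyshev's inequality at a fixed $\beta$, and extend to all $\beta$ by an $\epsilon$-net argument over the $d$-dimensional parameter subspace, using convexity of $|\cdot|^p$ to handle the net error.

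The main obstacle is the concentration step. For $p=1$, every sampled term $|a_i^\top\beta - y_i|/p_i$ admits a uniform pointwise bound in terms of $L(\beta)$, enabling Bernstein's inequality and yielding the $\log(1/\delta)$ dependence of Theorem~\ref{t:ell1}. For $p\in(1,2)$ such a pointwise bound is no longer available: only a second-moment bound follows from the $\ell_p$-Lewis weight inequality, forcing the use of Chebyshev and producing the $1/\delta$ factor, while the weaker tail costs an extra factor of $d$ in the net union bound, yielding $d^2/\delta$ in place of $d\log(1/\delta)$. Finally, to transfer the empirical-loss gap back to a true-loss gap and to control $C_p(\wt\beta,\beta^\ast)$, I expect to exploit the (local) strong convexity of $|\cdot|^p$ for $p>1$, which is precisely the ingredient unavailable in the $p=1$ case.
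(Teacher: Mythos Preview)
Your formulation of \cuc{} has a genuine gap: by placing the correction $C_p(\beta,\beta^\ast)$ as an additive error on the right-hand side and requiring it to vanish at $\beta^\ast$, you are in effect demanding $|\hat L(\beta^\ast)-L(\beta^\ast)|\leq\epsilon L(\beta^\ast)$. This is precisely what cannot hold in the query model: if some $y_i$ is an outlier contributing a constant fraction of $L(\beta^\ast)$ while $a_i$ has a tiny Lewis weight $w_i$, then with high probability you never sample row $i$, and $\hat L(\beta^\ast)$ misses that contribution entirely. No second-moment bound via the Lewis-weight subspace embedding can repair this, because Lewis weights of $A$ carry no information about $y$; the variance of the term at row $i$ scales like $|y_i|^{2p}/w_i$, which is not controlled by $L(\beta^\ast)^2$. (Incidentally, the same obstruction applies at $p=1$: the sampled term $|a_i^\top\beta-y_i|/p_i$ does \emph{not} admit a pointwise bound in terms of $L(\beta)$; the paper's $\ell_1$ analysis already works with the difference $|a_i^\top\beta-y_i|-|a_i^\top\beta^\ast-y_i|$, which is bounded by $|a_i^\top(\beta-\beta^\ast)|$.) The paper's \cuc{} (Definition~\ref{def:uniform_approx}) instead subtracts $\Delta=L(\beta^\ast)-\hat L(\beta^\ast)$ \emph{inside} the absolute value, so that one studies the concentration of $\hat L(\beta)-\hat L(\beta^\ast)$ around $L(\beta)-L(\beta^\ast)$; the outlier's contribution cancels in this difference.

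The paper's actual argument (Theorem~\ref{thm:property_ellp} and Lemma~C.1) then Taylor-expands each summand, writing (after shifting $\beta^\ast$ to $0$) $|a_i^\top\beta-y_i|^p-|y_i|^p=-p|y_i|^{p-1}\sign(y_i)\,a_i^\top\beta+O(|a_i^\top\beta|^p)$ (Claim~\ref{clm:taylor_exp_ellp}). The $O(|a_i^\top\beta|^p)$ remainder lies in the column space of $A$, so the Lewis-weight importance bound gives genuine Bernstein concentration for it over an $\epsilon$-net (Claim~C.2), contributing the $d^2\log(d/\epsilon\delta)/\epsilon^2$ part of $m$. The first-order ``cross'' term has mean zero by the optimality condition $\partial L(\beta^\ast)=0$, but cannot be handled by Chebyshev followed by an $\epsilon$-net as you propose, since a $1/\delta$ tail cannot absorb an exponential-in-$d$ union bound. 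Instead the paper rewrites it as $\langle\beta,v\rangle$ for a single random vector $v\in\R^d$, bounds $\|v\|_2$ once by a second-moment argument (Claim~\ref{clm:inner_product}), and bounds $\|\beta\|_2$ deterministically by $\|A\beta\|_p$ using near-uniformity of the Lewis weights (obtained via a row-splitting reduction). This Cauchy--Schwarz step is what produces the $d^{2/p}/(\epsilon^2\delta)$ term and hence the overall $O(d^2\log(d/\epsilon\delta)/\epsilon^2\delta)$ bound.
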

While this result also relies on Lewis weight sampling, the analysis
is considerably more challenging, since it requires interpolating between
the techniques for $\ell_1$ and $\ell_2$ losses. We expect that the upper
bound can be improved to match the one from Theorem \ref{t:ell1} in
terms of the dependence on $d$, and we leave this as a new direction for
future work.

\subsection{Key Technique: Robust Uniform Convergence}
\label{s:technique}

Next, we give a brief overview of the proof techniques used to obtain
the main results, focusing on the
notion of {\cuc} which is central to our analysis.

A key building block in the algorithms for approximately solving
regression problems is randomized sampling, which is used to construct
an estimate of the loss
$L(\beta)=\sum_{i=1}^nl(a_i^\top\beta-y_i)$. This involves
constructing a sparse sequence of non-negative random variables
$s_1,...,s_n$, such that $\wt{L}(\beta)=\sum_{i=1}^ns_i l(a_i^\top\beta-y_i)$ 
is an unbiased estimate of $L(\beta)$, i.e.,
$\E[\wt{L}(\beta)]=L(\beta)$. The algorithm then solves the regression
problem defined by the loss estimate, $\wt{\beta}=\argmin_\beta
\wt{L}(\beta)$, and returns $\wt{\beta}$ as the approximate solution
to the original problem. To minimize the number of queries, we
have to make sure that the number of non-zero $s_i$'s is small
(if $s_i>0$, then we must query $y_i$), while
at the same time preserving the quality of the approximate solution.

A standard approach for establishing guarantees in statistical
learning is via the framework of uniform convergence: Showing that
$\sup_\beta|L(\beta)-\wt{L}(\beta)|$ vanishes at some rate with the
sample size \cite[for an overview,
see][]{vapnik1999overview,vapnik2013nature}. This approach can be
adapted to the setting of relative 
error approximation, requiring that for sufficiently large sample size:
\begin{align*}
  \text{(uniform convergence of relative error)}\quad
  \sup_\beta\frac{|L(\beta)-\wt{L}(\beta)|}{L(\beta)}\leq\epsilon
  \quad\text{with probability }1-\delta.
\end{align*}
This has proven successful for importance sampling in
regression problems where the access to label vector $y$ is
unrestricted and the primary aim is computational efficiency. However, in the query model, it may happen that one
entry $y_i$ (unknown to the algorithm) is an outlier which significantly contributes to the loss
$L(\beta)$, and without sampling that entry we will not obtain a good
estimate $\wt{L}(\beta)$. In the worst-case, a randomized algorithm
that is oblivious to $y$ may need to sample almost all of the entries
before discovering the outlier. Does this mean that it is impossible to
obtain a relative error approximation of the optimum without catching
the outlier? In fact it does not, and this is particularly intuitive in
the case of least absolute deviation regression, where our goal is
specifically to ignore the outliers. For this reason, we use a
modified version of the uniform convergence property, where the
contribution of the outliers is subtracted  from the loss, using a
correction term denoted by $\Delta$ in the following definition.
\begin{definition}[\Cuc]\label{def:uniform_approx}
  A  randomized algorithm satisfies the
  {\cuc} property with query complexity
  $m(d,\epsilon,\delta)$ if, given any $n\times d$ matrix
  $A$ and hidden vector $y\in\R^n$, it queries $m(d,\epsilon,\delta)$ entries of $y$
  and then with probability $1-\delta$ returns $\wt{L}(\cdot)$ s.t.:  
  \begin{align}
    \sup_\beta\frac{|L(\beta)-\wt{L}(\beta) - \Delta|}{L(\beta)}
    \leq \epsilon,\quad\text{where}\quad \Delta = L(\beta^*) -
    \wt{L}(\beta^*),\quad \beta^*=\argmin_\beta L(\beta).\label{eq:cuc}
  \end{align}
\end{definition}
Note that as long as $\wt{L}(\cdot)$ is
an unbiased estimate of $L(\cdot)$, then $\Delta$ is a mean zero
correction of the  error quantity in uniform convergence. This bares
much similarity to variance reduction 
techniques which have gained considerable attention in stochastic
optimization \cite[for an overview, see][]{gower2020variance}. The key
difference is that we are using the correction purely for the analysis, and not for the
algorithm. Thus, whereas in optimization algorithms
such as Stochastic Variance Reduced Gradient \cite[SVRG,][]{johnson2013accelerating}, one must
explicitly compute the correction based on an estimate of $\beta^*$,
in our setting one never has to compute $\Delta$, so we can simply use
$\beta^*$ itself to define the correction. Nevertheless, the analogy is apt
in that the corrected error quantity will in fact have reduced
variance, particularly in the presence of outliers, which is what
enables our query complexity analysis.

The guarantee from \eqref{eq:cuc} immediately implies that the regression
estimate $\wt{\beta}=\argmin_\beta\wt{L}(\beta)$ is a
relative error approximation of $\beta^*$, as long as $\epsilon<1$:
\begin{align*}
  L(\wt{\beta})-L(\beta^*)\leq \wt{L}(\wt{\beta})-\wt{L}(\beta^*) +
  \epsilon\cdot L(\wt{\beta})\leq \epsilon\cdot L(\wt{\beta}),
\end{align*}
where we used that $\wt{L}(\wt{\beta})\leq
\wt{L}(\beta^*)$, and after some manipulations, we get
$L(\wt{\beta})\leq(1+\frac\epsilon{1-\epsilon})\cdot L(\beta^*)$. Thus
to bound the query complexity in Theorems \ref{t:ell1} and
\ref{t:ellp}, it suffices to bound the complexity of ensuring
{\cuc} for a given loss function. 

Finally, to obtain our results we must still use
carefully chosen
importance sampling to construct the loss estimate, where the
importance weights depend on the data matrix $A$. Here, we use the Lewis
weights, which is an extension of statistical leverage scores (see
Section \ref{sec:preli} for details) that is known to be effective in
approximating $\ell_p$ losses. Existing guarantees for Lewis
weight sampling (namely, the $\ell_p$ subspace embedding property)
prove insufficient for establishing {\cuc}, so 
we develop new techniques, which are presented for $p=1$ in
Section~\ref{sec:analysis_ell1} and for $p\in(1,2)$ in
Section~\ref{sec:analysis_ellp}. 
Moreover, in Section~\ref{sec:preli}, we discuss Lewis weights and
their connection to natural importance weights defined as
$\underset{\beta}{\max} \frac{ |a_i^{\top} \beta|^p}{\|A \beta\|_p^p}$
for each row $a_i$.  
Also, note that our results only
require a constant factor approximation of Lewis weights, where the
constant enters into the query complexity. As a consequence, we can
obtain new relative error guarantees for uniform sampling, where the sample size
depends on a notion of matrix coherence based on the degree of
non-uniformity of Lewis~weights.

\subsection{Related Work}
\label{s:related-work}

There is significant prior work related to query complexity and
relative error approximations for linear regression, which we summarize below.

Much of the work on importance sampling for linear regression has been
done in the context of Randomized Numerical Linear Algebra 
\citep[RandNLA; see, e.g.,][]{DM16_CACM,dpps-in-randnla}, where the primary goals are computational efficiency or
reducing the size of the problem. This line of work was initiated by
\cite{drineas2006sampling}, using importance sampling via statistical
leverage scores to obtain relative error approximations of $\ell_2$
regression. Leverage score sampling is known to require $\Theta(d\log d + d/\epsilon)$
queries \citep[see, e.g.,][]{correcting-bias-journal}, where we let
the failure probability $\delta$ be a small constant for
simplicity. Other techniques, such as random projections
\citep[e.g.,][]{sarlos-sketching,regression-input-sparsity-time},
have been 
used for efficiently solving regression problems, however those
methods generally require unrestricted access to the label vector
$y$. More recently, \cite{unbiased-estimates} considered the query
complexity of $\ell_2$ regression, showing that $d$ queries are
sufficient to obtain a relative error approximation with
$\epsilon=d$, by using a non-i.i.d.\ importance sampling technique
called Volume Sampling. Note that at least $d$ queries are necessary
for regression with any non-trivial loss
\citep{unbiased-estimates-journal}.  Efficient algorithms for 
Volume Sampling were given by
\cite{leveraged-volume-sampling,correcting-bias}. A different
non-i.i.d.~sampling approach was used by \cite{CP19} to reduce
the query complexity of least squares to $O(d/\epsilon)$, with a matching
lower bound.

Sampling algorithms for regression with a general $\ell_p$ loss were
studied by \cite{dasgupta2009sampling}. The importance weights they used are
closely related to Lewis weights, and most of our analysis can
be adapted to work with those weights (we use Lewis weights because
they yield a better polynomial dependence on $d$). They use a two-stage sampling scheme, where the
importance weights in the first stage are computed without accessing
the label vector. Their analysis of the first stage leads to $O(d^{2.5})$ query
complexity for obtaining a relative error approximation with
$\epsilon=O(1)$. However, to further improve the approximation, their
second stage constructs refined importance weights using the entire
label vector. Remarkably, using {\cuc} one can show that, at least for
$p\in[1,2]$, their second stage weights are not necessary (it suffices
to use the weights from the first stage with an adjusted sample size),
simplifying both the algorithm and the analysis. Other randomized
methods have been proposed for robust regression 
\citep[e.g.,
see][]{clarkson2005subgradient,iterative-row-sampling,mm-sparse,DLS18a}. In
particular, \cite{CW15} provide a more general 
framework that includes the Huber loss. However, all of these
approaches require unrestricted access to the label vector, and so
they do not imply any bounds for query complexity. Finally,
\cite{CP15_Lewis} proposed to use Lewis weights as an improvement to the
importance weights of \cite{dasgupta2009sampling}, however their results
again focus on computational efficiency.  We discuss this in
more detail in Section \ref{sec:preli}.

Query complexity has been studied extensively in the context of active
learning \citep[see,
e.g.,][]{cohn1994improving,balcan2009agnostic,hanneke2014theory}. These
works focus mostly on classification 
problems, where one can take advantage of adaptivity: selecting next
query based on the previously obtained labels. Our framework
(Definition \ref{def:query-complexity}) does allow for such adaptivity,
however, in the context of regression it appears to be of limited use 
beyond the initial selection of sampling weights. Approaches without
adaptivity are sometimes referred to as pool-based 
active learning \citep{pool-based-active-learning-regression} or
experimental design \citep{chaloner1984optimal,minimax-experimental-design}.

Finally, in classification problems where each $y_i \in \{0,1\}$,
prior work considered a variance reducing correction term to
obtain faster rates of uniform convergence \citep[for an overview,
see][]{boucheron2005theory}. Despite 
some similarities, our notion of robust uniform convergence serves a
different purpose in that it is used to
remove the effect of outliers in a regression problem where $y_i$
could be~unbounded. 

\section{Preliminaries}\label{sec:preli}
Throughout this work, let $A$ denote the data matrix of dimension $n \times d$ and $a_1,\ldots,a_n$ be its $n$ rows such that $A^{\top}= [a_1^{\top},\ldots,a_n^{\top}]$. Then let $\beta \in \mathbb{R}^d$ denote the coefficient vector and $y \in \mathbb{R}^n$ be the hidden vector of labels. When $p$ is clear, $\beta^*$ denotes the minimizer of $\ell_p$ regression $L(\beta)=\|A \beta - y\|_p^p$, where $\|\cdot\|_p$ denotes the $\ell_p$ vector norm.

We always use $\wt{L}$ to denote the empirical loss $\wt{L}(\beta)=\sum_{i=1}^n s_i \cdot |a_i^{\top} \beta - y_i|^p$ with weights $(s_1,\ldots,s_n)$. When we write down the weights as a diagonal matrix $S=\diag(s_1^{1/p},\ldots,s_n^{1/p})$, the loss becomes $\wt{L}(\beta)=\|S A \beta - S y\|_p^p$. Since it is more convenient to study the concentration of the random variable $\wt{L}(\beta)-\wt{L}(\beta^*)$ around its mean $L(\beta)-L(\beta^*)$, we will rewrite \eqref{eq:cuc} as
\[
\wt{L}(\beta)-\wt{L}(\beta^*) = L(\beta)-L(\beta^*) \pm \epsilon\cdot L(\beta)\quad\forall \beta \in \mathbb{R}^d,
\]
where $a=b \pm \epsilon$ means that $a$ and $b$ are $\eps$-close, i.e., $a \in [b-\epsilon,b+\epsilon]$.
Furthermore, for two variables (or numbers) $a$ and $b$, we use $a \approx_{\alpha} b$ to denote that $a$ is an $\alpha$-approximation of $b$ for $\alpha \ge 1$, i.e., that $b/\alpha \le a \le \alpha \cdot b$.  Also, let $\poiss(\lambda)$ denote the Poisson random variable with mean $\lambda$ and $\sign(x)$ be the sign function which is 0 for $x=0$ and $x/|x|$ for $x\neq 0$.

\paragraph{Importance weights and Lewis weights.} Given $A$ and the norm $\ell_p$, we define the importance weight of a row $a_i$ to be \[
\sup_{\beta} \frac{|a_i^{\top} \beta|^p}{\|A \beta\|_p^p}.
\]
Note that this definition is rotation free, i.e., for any rotation matrix $R \in \mathbb{R}^{d \times d}$, vector $R^\top a_i$ has the same importance weight in $AR$ since we could replace $\beta$ in the above definition by $R^{-1}\beta$. 

While most of our results hold for importance weights, it will be more convenient to work with Lewis weights \citep{Lewis1978}, which have efficient approximation algorithms  and provide additional useful properties. The Lewis weights $(w_1,\ldots,w_n)$ of $A \in \mathbb{R}^{n \times d}$ are defined as the unique solution \citep{Lewis1978,CP15_Lewis} of the following set of equations:
\begin{equation}\label{eq:def_Lewis_weight}
\forall i \in [n], a_i^{\top} \cdot (A^{\top} \cdot \diag(w_1,\ldots,w_n)^{1-2/p} \cdot A)^{-1} a_i = w_i^{2/p}.
\end{equation}
Note that Lewis weights always satisfy $\sum_i w_i=d$. We will use extensively the following relation between the importance weights and Lewis weights, which is potentially of independent interest.
\begin{theorem}\label{thm:Lewis_weight_importance}
Given any $p \in [1,2]$ and matrix $A$, let $(w_1,\ldots,w_n)$ be the Lewis weights of $A$ defined in \eqref{eq:def_Lewis_weight}. Then the importance weight of every row $i$ in $A$ is bounded by
\[
\underset{\beta \in \mathbb{R}^{d}}{\max} \frac{|a_i^{\top} \beta|^p}{\|A \beta\|_p^p} \in \left[ d^{-(1-p/2)} \cdot w_i, w_i \right].
\]
\end{theorem}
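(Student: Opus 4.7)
My plan is to prove the upper and lower bounds in tandem, both driven by a single identity from the Lewis construction. Define $M := A^{\top} W^{1-2/p} A$ and $B := W^{1/2 - 1/p} A$, so that $B^{\top} B = M$ and each row $b_i := w_i^{1/2 - 1/p} a_i$ of $B$ has leverage score $b_i^{\top} M^{-1} b_i = w_i^{1-2/p} \cdot w_i^{2/p} = w_i$ by \eqref{eq:def_Lewis_weight}. The orthogonal projection $P_B := B M^{-1} B^{\top}$ therefore satisfies $(P_B)_{ii} = w_i$, and expanding $(P_B^2)_{ii} = (P_B)_{ii}$ on the diagonal yields the key identity
\[
\sum_{j=1}^n w_j^{1-2/p} (a_j^{\top} M^{-1} a_i)^2 = w_i^{2/p}.
\]
Throughout I would write $c_j := a_j^{\top} M^{-1} a_i$.

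For the lower bound I would take the explicit witness $\beta^* := M^{-1} a_i$. The Lewis equation gives $a_i^{\top} \beta^* = w_i^{2/p}$, so $|a_i^{\top} \beta^*|^p = w_i^2$. To upper bound the denominator $\|A \beta^*\|_p^p = \sum_j |c_j|^p$, I would apply H\"older with conjugate pair $(2/p, 2/(2-p))$ to the factorization $|c_j|^p = \bigl(w_j^{p/2 - 1} |c_j|^p\bigr) \cdot w_j^{1 - p/2}$; after raising to the H\"older powers, the two factors simplify to $w_j^{1-2/p} c_j^2$ and $w_j$, so using $\sum_j w_j = d$ and the identity above,
\[
\sum_j |c_j|^p \leq \Bigl(\sum_j w_j^{1-2/p} c_j^2\Bigr)^{p/2} \Bigl(\sum_j w_j\Bigr)^{(2-p)/2} = w_i \cdot d^{1 - p/2}.
\]
Dividing, $|a_i^{\top} \beta^*|^p / \|A \beta^*\|_p^p \geq w_i^2/(d^{1-p/2} w_i) = d^{-(1-p/2)} w_i$.

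For the upper bound I would pass to the H\"older dual: with $p' := p/(p-1)$, $\max_\beta |a_i^{\top} \beta|^p/\|A \beta\|_p^p = \inf_{u : u^{\top} A = a_i^{\top}} \|u\|_{p'}^p$, so it suffices to exhibit one admissible $u$ with $\|u\|_{p'}^p \leq w_i$. I would take $u_j := w_j^{1-2/p} c_j$, which satisfies $u^{\top} A = a_i^{\top} M^{-1} A^{\top} W^{1-2/p} A = a_i^{\top}$. The core estimate combines (i) the pointwise Cauchy-Schwarz bound $|c_j| \leq (a_i^{\top} M^{-1} a_i)^{1/2}(a_j^{\top} M^{-1} a_j)^{1/2} = (w_i w_j)^{1/p}$ with (ii) the projection identity above: writing $|u_j|^{p'} = w_j^{(1-2/p) p'} \cdot |c_j|^{p' - 2} \cdot c_j^2$ and using (i) to bound $|c_j|^{p' - 2} \leq (w_i w_j)^{(p'-2)/p}$, the total power of $w_j$ simplifies exactly to $1 - 2/p$, so
\[
\sum_j |u_j|^{p'} \leq w_i^{(p'-2)/p} \sum_j w_j^{1-2/p} c_j^2 = w_i^{(p'-2)/p + 2/p} = w_i^{p'/p}.
\]
Raising to the power $p/p' = p - 1$ yields $\|u\|_{p'}^p \leq w_i$.

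The main obstacle, and the reason the proof needs both ingredients, is that for $p \in (1,2)$ neither the pointwise estimate nor the global $\ell_2$ identity is tight on its own: pointwise Cauchy-Schwarz alone inflates the bound by a spurious factor of $d^{p-1}$ (tight only at $p=1$), while the projection identity controls only an $\ell_2$-weighted quadratic sum, not the $\ell_{p'}$ sum dictated by duality. The sharp constant $w_i$ emerges precisely because the conjugate arithmetic between $p$ and $p'$ aligns the exponents so that the two estimates interpolate cleanly, collapsing the powers of $w_j$ onto exactly the projection identity.
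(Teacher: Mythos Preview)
Your proof is correct and genuinely different from the paper's. The paper proceeds indirectly: it first proves the bounds for matrices with (almost) uniform Lewis weights via elementary norm inequalities relating $\|A\beta\|_p$, $\|A\beta\|_2$, and $\|A\beta\|_\infty$ (Lemma~\ref{lem:uniform_lewis_bounds_importance}), and then reduces the general case to the uniform one by splitting each row $a_i$ into $\approx w_i/\varepsilon$ scaled copies, invoking the invariance of both Lewis weights and importance weights under row splitting (Claims~\ref{clm:split_Lewis_weight} and~\ref{clm:split_importance}), and finally taking $\varepsilon\to 0$.

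Your argument bypasses the uniformization step entirely by working directly with the projection $P_B = B(B^\top B)^{-1}B^\top$ for $B = W^{1/2-1/p}A$, extracting the single identity $\sum_j w_j^{1-2/p} c_j^2 = w_i^{2/p}$ from $P_B^2 = P_B$, and then producing explicit primal and dual witnesses: $\beta^* = M^{-1}a_i$ for the lower bound (with one application of H\"older against $\sum_j w_j = d$), and $u_j = w_j^{1-2/p} c_j$ for the upper bound (combining the pointwise Cauchy--Schwarz estimate $|c_j|\le (w_iw_j)^{1/p}$ with the same identity so that the $w_j$-exponents collapse exactly to $1-2/p$). This is cleaner: it avoids the limiting argument, yields the sharp constants in one shot, and makes transparent that the upper bound is attained by a concrete dual certificate. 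The paper's route, on the other hand, isolates the uniform-weight case as a standalone lemma, which it later reuses to control uniform sampling (Claim~\ref{clm:non_uniform_Lewis_w}); your approach does not give that intermediate statement for free.
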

We remark that when $p=2$, the Lewis weights are equal to the statistical leverage scores of $A$ (the $i$th leverage score is defined as $a_i^\top(A^\top A)^\dagger a_i$, where $^\dagger$ denotes the Moore-Penrose pseudoinverse), which are known to be equal to the $\ell_2$  importance weights \citep{spielman2011graph,CP19}. The proof of Theorem~\ref{thm:Lewis_weight_importance} is via a study of the relationships between importance weights, statistical leverages scores, and Lewis weights. In particular, our analysis (Lemma~\ref{lem:uniform_lewis_bounds_importance} and Claim~\ref{clm:non_uniform_Lewis_w}) also gives the sample complexity of uniform sampling for $\ell_p$ regression by comparing Lewis weights with uniform sampling in terms of the non-uniformity of the statistical leverage scores. We defer the detailed discussions and proofs to Appendix~\ref{sec:Lewis_weight_importance}.

\paragraph{Computing Lewis weights.} Given two sequences of weights $(w'_1,\ldots,w'_n)$ and $(w_1,\ldots,w_n)$, we say $w'$ is an $\gamma$-approximation of $w$ if $w'_i \approx_{\gamma} w_i$ for every $i \in [n]$. Now we invoke the contraction algorithm by \cite{CP15_Lewis} to approximate $w_i$. While we state it for a $(1+\epsilon)$-approximation, our results only need a constant approximation say $\gamma=2$.
\begin{lemma}\label{lem:compute_Lewis}[Theorem 1.1 in \cite{CP15_Lewis}]
Given any matrix $A \in \mathbb{R}^{n \times d}$ and $p \in [1,2]$, there is an algorithm that runs in time $\log (\log \frac{n}{\epsilon}) \cdot O(\mathbf{nnz}(A)\log n+d^{\omega+o(1)} \big) $ to output a $(1+\epsilon)$-approximation of the Lewis weights of $A$, where $\mathbf{nnz}(A)$ denotes the number of nonzero entries in $A$ and $\omega$ is the matrix-multiplication exponent.
\end{lemma}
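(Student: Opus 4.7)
The plan is to exhibit a fixed-point iteration whose unique attractor is the Lewis-weight vector, show it is a strict contraction in a suitable log-metric, and implement each step cheaply via fast approximate leverage-score computation. Define $T:\R_{>0}^n\to\R_{>0}^n$ by
\begin{equation*}
T(w)_i \;=\; \Bigl(a_i^\top\bigl(A^\top \diag(w)^{1-2/p} A\bigr)^{-1} a_i\Bigr)^{p/2},
\end{equation*}
so that by \eqref{eq:def_Lewis_weight} the Lewis weights $w^\star$ form the unique fixed point of $T$. I would run $w^{(t+1)}=T(w^{(t)})$ from the initializer $w^{(0)}=(1,\ldots,1)$ and measure progress in the log-metric $\rho(w,w')=\max_i|\log(w_i/w'_i)|$.

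The core analytic step is to show that $\rho(T(w),T(w'))\le (1-p/2)\,\rho(w,w')$ for every $p\in[1,2]$. If $\rho(w,w')\le \log\gamma$, then because $1-2/p\le 0$ the ratios $w_i^{1-2/p}/(w'_i)^{1-2/p}$ all lie in $[\gamma^{-(2/p-1)},\gamma^{2/p-1}]$; consequently the two matrices $A^\top\diag(w)^{1-2/p}A$ and $A^\top\diag(w')^{1-2/p}A$ agree up to a $\gamma^{2/p-1}$ factor in the Loewner order. Inverting and sandwiching by $a_i$ preserves this ratio, and raising to the $p/2$ power yields a final ratio of $\gamma^{(2/p-1)\cdot p/2}=\gamma^{1-p/2}$, proving the contraction. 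Crude a priori bounds on Lewis weights (after removing rows in $\ker A^\top$ and rescaling to an orthonormal basis, both of which leave the weights invariant) give $\rho(w^{(0)},w^\star)=O(\log n)$, so $t=O(\log\log(n/\epsilon))$ iterations reduce the log-error to $\epsilon$.

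Each step requires evaluating $T(w)_i$ for every $i$. Recognizing that $a_i^\top(A^\top W^{1-2/p}A)^{-1}a_i$ equals, up to the explicit scalar $w_i^{2/p-1}$, the leverage score of row $i$ of the reweighted matrix $B=W^{(1-2/p)/2}A$, I would invoke a standard fast leverage-score approximator: apply a Johnson--Lindenstrauss sketch of width $O(\log n)$ to $B$, precompute the sketched $d\times d$ normal-equations inverse in $d^{\omega+o(1)}$ time, and then read off constant-factor leverage estimates for all $n$ rows in $O(\mathbf{nnz}(A)\log n)$ total. The main obstacle is that these leverage scores are only approximate, so the true $T$ is replaced by a noisy map $\wh{T}$; I would handle this by maintaining the invariant that each applied step adds at most an $O(1)$ multiplicative error, and since the true contraction rate is at most $1/2$, the iterates still track $w^\star$ geometrically down to a constant-factor neighborhood. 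Once the process reaches this regime, sharpening the sketch width once more drives the final accuracy to $1+\epsilon$, which accounts for the $\log\log(n/\epsilon)$ outer factor in the runtime stated in Lemma~\ref{lem:compute_Lewis}.
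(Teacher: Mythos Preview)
The paper does not prove this lemma at all: it is quoted verbatim as Theorem~1.1 of \cite{CP15_Lewis} and invoked as a black box, so there is no ``paper's own proof'' to compare against. Your sketch is essentially the Cohen--Peng argument itself---the fixed-point map $T$, the contraction in the $\ell_\infty$ log-metric with rate $|1-p/2|$, and fast approximate leverage scores for each evaluation of $T$---so from the paper's point of view your proposal is correct and simply reproduces the cited source.

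One technical point worth tightening: your claim that $\rho(w^{(0)},w^\star)=O(\log n)$ from the all-ones start requires a polynomial \emph{lower} bound on each Lewis weight, not just the upper bound $w_i\le 1$. Individual Lewis weights can be much smaller than $1/\poly(n)$ in general (e.g., when a row is a tiny multiple of another). Cohen--Peng sidestep this by noting that if $\rho(w^{(0)},w^\star)$ is large it is still finite, and a fixed contraction rate $\le 1/2$ reduces it below any constant in $O(\log\rho(w^{(0)},w^\star))$ steps; the $\log\log(n/\epsilon)$ bound then comes from observing that after one application of $T$ the iterate already lies within a $\poly(n)$ factor of $w^\star$ (since $T(w^{(0)})_i$ is a leverage score of a reweighted matrix and hence in $[0,1]$ with controlled ratios). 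Your ``crude a priori bounds'' line should be replaced by this one-step warm-start observation; otherwise the iteration count is not justified.
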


\paragraph{Subspace embedding.} Since the diagonal matrix $S=\diag(s_1^{1/p},\ldots,s_n^{1/p})$ will be generated from the Lewis weights with support size $O(\frac{d \log \frac{d}{\eps\delta}}{\eps^2})$ for $p=1$ and $O(\frac{d^2 \log d/\eps\delta}{\eps^2 \delta})$ for $p\in (1,2]$, by the main results in \cite{CP15_Lewis}, we can assume that $SA$ satisfies the $\ell_p$ subspace embedding property, given below.
\begin{fact}\label{fact:subspace_emd}
Given any $p \in [1,2]$ and $A$, the sketch matrix $S$ generated in Theorem~\ref{thm:concentration_contraction} for $p=1$ or Theorem~\ref{thm:property_ellp} for $p\in (1,2]$ has the following property: With probability $1-\delta$,
\[
\|S A \beta\|_p^p \approx_{1+\epsilon} \|A \beta\|_p^p \quad\text{ for any } \beta.
\]
\end{fact}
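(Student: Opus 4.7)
The plan is to reduce this statement directly to the main subspace embedding theorem of \cite{CP15_Lewis}, since the sketch $S$ in both regimes is produced by sampling rows of $A$ with probabilities proportional to (a constant factor approximation of) the Lewis weights $w_i$, followed by the standard $S_{ii} = s_i^{1/p}$ rescaling. The Cohen--Peng theorem asserts that for $p \in [1,2]$, sampling roughly $m$ rows in this way yields a $(1+\epsilon)$ $\ell_p$ subspace embedding with probability $1-\delta$, provided $m \geq C \cdot d \cdot \log(d/\delta)/\epsilon^2$ for $p=1$ and $m \geq C \cdot d^{p/2} \log(d/\delta)/\epsilon^2$ for $p \in (1,2]$ (and in fact $m \geq C d/\epsilon^2 \cdot \poly\log(d/\epsilon\delta)$ suffices throughout). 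So the bulk of the work is a bookkeeping check that the sample sizes used to construct $S$ are at least this large.

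Concretely, for the $p=1$ case I would point to Theorem~\ref{thm:concentration_contraction}, whose sampling scheme draws $\Theta\bigl(d \log(d/\epsilon\delta)/\epsilon^2\bigr)$ rows from the Lewis weight distribution; this majorizes the Cohen--Peng threshold for $p=1$, so Fact~\ref{fact:subspace_emd} follows immediately in this range. For $p \in (1,2]$, Theorem~\ref{thm:property_ellp} produces a sketch of support size $\Theta\bigl(d^2 \log(d/\epsilon\delta)/(\epsilon^2 \delta)\bigr)$, which dominates $d^{p/2} \log(d/\delta)/\epsilon^2$ since $p/2 \leq 1 \leq 2$, so again Cohen--Peng applies. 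In both cases one should briefly note that the Lewis weights used in the construction need only be a constant-factor approximation (Lemma~\ref{lem:compute_Lewis}), which only inflates the sampling constant by an $O(1)$ factor, absorbable into the leading constant.

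A small technicality to be careful about is the form of the sampling scheme: \cite{CP15_Lewis} typically analyze i.i.d.\ sampling with replacement, whereas our sketch is naturally described via Poisson sampling (each row $i$ kept with probability $\min(1, m \cdot w_i/d)$, rescaled by $s_i = 1/p_i$ in the $\ell_p^p$ sense). Both variants give essentially the same guarantee by standard Poissonization arguments, so this is a formatting issue rather than a real obstacle. The only other thing to confirm is that the high-probability bound holds uniformly over all $\beta \in \R^d$ simultaneously, which is the content of the $\epsilon$-net plus concentration argument in the Cohen--Peng proof, and is exactly the quantifier ``for any $\beta$'' appearing in the statement.

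The proof is therefore essentially a citation, and I expect no real obstacle beyond matching notation. If one wanted a self-contained argument, the hardest step would be the Lewis-weight concentration inequality used by Cohen--Peng (a matrix-Chernoff-style bound applied to the reweighted rows), but since that result is already in the literature and we only need to invoke it, we can treat it as a black box.
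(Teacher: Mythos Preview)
Your proposal is correct and matches the paper's own treatment: the paper states Fact~\ref{fact:subspace_emd} as a direct consequence of the main results in \cite{CP15_Lewis}, justified by the observation that the sample sizes in Theorems~\ref{thm:concentration_contraction} and~\ref{thm:property_ellp} exceed the Cohen--Peng threshold. Your bookkeeping and the remarks on approximate Lewis weights and Poisson-vs.-i.i.d.\ sampling are exactly the right sanity checks, but the paper itself does not spell these out and simply cites the result.
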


\section{Near-optimal Analysis for $\ell_1$ Regression}\label{sec:analysis_ell1}
We consider the $\ell_1$ loss function $L(\beta)=\|A \beta - y\|_1$ in this section. The main result is to prove the upper bound in Theorem~\ref{t:ell1}, namely that we can use Lewis weight sampling with support size $\wt{O}(d/\eps^2)$ to construct $\wt{L}(\beta)$ that satisfies the {\cuc} property.

\begin{theorem}\label{thm:concentration_contraction}
Given an $n\times d$ matrix $A$ and $\epsilon,\delta\in(0,1)$, let $(w'_1,\ldots,w'_n)$ be a $\gamma$-approximation of the Lewis weights $(w_1,\ldots,w_n)$ of $A$, i.e., $w'_i \approx_{\gamma} w_i$ for all $i \in [n]$. Let $u = \Theta\big( \frac{\eps^2}{\log \gamma d/\delta \eps} \big)$ and $p_i=\min\{\gamma w'_i/u, 1\}$. For each $i\in [n]$, with probability $p_i$, set $s_i=1/p_i$ (otherwise, let $s_i=0$). 

Let $L(\beta)=\|A \beta - y\|_1$ with an unknown label vector $y \in \mathbb{R}^n$. With probability $1-\delta$, the support size of $s$ is $O(\frac{\gamma^2 d \cdot \log \gamma d/\eps\delta}{\eps^2})$ and $\wt{L}(\beta):=\sum_{i=1}^n s_i \cdot |a_i^\top \beta - y_i|$ satisfies \cuc:
\[
\wt{L}(\beta)-\wt{L}(\beta^*) = L(\beta)-L(\beta^*) \pm \epsilon \cdot L(\beta) \quad\text{ for all } \beta.
\]
\end{theorem}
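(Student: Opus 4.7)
The plan is to establish the \cuc\ property \eqref{eq:cuc} through a direct concentration analysis of the centered empirical process
\[
T(z) \;=\; \sum_{i=1}^{n}(s_i-1)\,h_i(z), \qquad h_i(z) \;=\; |a_i^{\top}z - r_i| - |r_i|,
\]
where $z = \beta-\beta^*$ and $r_i = y_i - a_i^{\top}\beta^*$. With this notation the target \eqref{eq:cuc} becomes $\sup_{z\in\mathbb{R}^d}|T(z)|\le \eps\, L(\beta)$, and $\E[T(z)]=0$ since $\E[s_i]=1$. The crucial observation, which is the quantitative form of the ``variance reduction'' promised by the correction $\Delta$ in Definition~\ref{def:uniform_approx}, is the reverse-triangle bound $|h_i(z)| \le |a_i^{\top}z|$. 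Individual residuals $|a_i^{\top}\beta-y_i|$ can be arbitrarily large when a row is an outlier, yet each summand of $T(z)$ depends on $y$ only through the \emph{signal} $a_i^{\top}z$, which lives in the $d$-dimensional column span of $A$; this is precisely what makes it possible to give a query guarantee without ever touching the outlier labels.

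For a fixed $z$ I would apply Bernstein's inequality. Using $\Var(s_i)\le 1/p_i\le u/(\gamma w_i')\le u/w_i$ (since $w_i'\ge w_i/\gamma$) together with the variance-reduction bound,
\[
\Var\!\bigl(T(z)\bigr) \;\le\; u\sum_{i=1}^{n}\frac{(a_i^{\top}z)^{2}}{w_i}.
\]
Theorem~\ref{thm:Lewis_weight_importance} for $p=1$ gives $|a_i^{\top}z|\le w_i\|Az\|_1$, so $\sum_i(a_i^{\top}z)^2/w_i\le \|Az\|_1\sum_i|a_i^{\top}z| = \|Az\|_1^{\,2}$. Combined with $\|Az\|_1\le L(\beta)+L(\beta^*)\le 2L(\beta)$, this yields $\Var(T(z))\le 4u\,L(\beta)^{2}$, and a parallel computation bounds the per-term $\ell_\infty$ magnitude by $O(u\,L(\beta)/\gamma)$. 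Bernstein then delivers, for each fixed $z$, a tail estimate $\Pr\!\bigl[|T(z)|\ge \eps L(\beta)/2\bigr]\le \exp(-\Omega(\eps^2/u))$, which with $u=\Theta(\eps^{2}/\log(\gamma d/\eps\delta))$ is small enough to absorb a union bound.

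The third step, which I expect to be the main obstacle, is upgrading this pointwise concentration into a uniform statement over all $z\in\mathbb{R}^d$ while preserving the $\Ot(d/\eps^2)$ sample complexity. A single-level $\eps$-net on the $d$-dimensional range of $A$ has size $(1/\eps)^{O(d)}$, which would inject an extra factor of $d\log(1/\eps)$ into $u^{-1}$ and inflate the support size to $\Ot(d^{2}/\eps^2)$. To get the near-optimal bound I would instead use a multi-scale chaining (generic chaining / Dudley-type) argument: build a hierarchy of nets of $\{Az:z\in\mathbb{R}^d\}$ under the $\ell_1$ pseudo-metric at geometrically decreasing scales, invoke the $\ell_1$ subspace embedding guarantee of Lewis weight sampling (Fact~\ref{fact:subspace_emd}) together with Theorem~\ref{thm:Lewis_weight_importance} to control the metric entropy at each scale, and telescope the resulting increments so that only the coarsest scale pays the full dimensional cost. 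Finally, to handle the normalization by $L(\beta)$ I would peel over dyadic shells $L(\beta)\in[2^k L(\beta^*),2^{k+1}L(\beta^*)]$ (losing only an additional $\log$ factor absorbed into the choice of $u$), and verify the stated support bound $O(\gamma^{2}d\log(\gamma d/\eps\delta)/\eps^2)$ by a Chernoff bound on $|\supp(s)|$, whose expectation is $\sum_i p_i\le \gamma^{2}d/u$.
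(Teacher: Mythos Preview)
Your proposal captures the central insight --- the contraction bound $|h_i(z)|\le|a_i^\top z|$, which is exactly what makes the centered process oblivious to outlier labels --- and the pointwise Bernstein computation is correct. The gap is in the uniformization step, which you rightly flag as the obstacle but do not resolve.

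With the increment variance bound $\Var\bigl(T(z)-T(z')\bigr)\le u\,\|A(z-z')\|_1^2$, the natural pseudo-metric is $\sqrt{u}\,\|A(\cdot)\|_1$ on a $d$-dimensional set. A Dudley entropy integral over the $\ell_1$ ball then contributes a factor $\sqrt{d}$, forcing $u\lesssim\eps^2/d$ and hence an $\Ot(d^2/\eps^2)$ sample bound --- the very rate you were trying to avoid. Neither Fact~\ref{fact:subspace_emd} nor Theorem~\ref{thm:Lewis_weight_importance} helps here: they give per-norm and per-row information, not an improvement of the covering numbers of a $d$-dimensional $\ell_1$ ball.

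The paper takes a different route. Instead of Bernstein-plus-chaining on $T$, it symmetrizes and Gaussianizes to obtain $\sum_j g_j s_j h_j(z)$, and then applies the Slepian--Fernique comparison: your contraction estimates $|h_j(z)-h_j(z')|\le|a_j^\top(z-z')|$ and $|h_j(z)|\le|a_j^\top z|$ are precisely the hypotheses needed to dominate this process by $\langle g, SAz\rangle$. That reduces the problem \emph{wholesale} to the $\ell_1$ subspace-embedding analysis of \cite{CP15_Lewis}, whose key lemma exploits the Lewis-weight projection $\Pi=A(A^\top W^{-1}A)^{-1}A^\top W^{-1}$ to prove $\E_g\sup_z|\langle g,Az\rangle|\lesssim\sqrt{u\log n}\,\sup_z\|Az\|_1$ when all Lewis weights are at most $u$. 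The factor $\sqrt{\log n}$ (rather than $\sqrt{d}$) is what rescues the linear-in-$d$ rate, and it comes from this Lewis-weight-specific argument, not from generic chaining.

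Your peeling is also slightly off: dyadic shells in $L(\beta)$ are unbounded in number. The paper peels over finitely many shells in $\|A(\beta-\beta^*)\|_1$ up to radius $O(\|y\|_1/\eps\delta)$, and disposes of the exterior region by a direct triangle-inequality argument combining the subspace embedding with a Markov bound on $\|Sy\|_1$.
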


First of all, we bound the sample size (i.e., the support size of $s$). Since $\sum_i w_i=d$ and $w'_i \approx_{\gamma} w_i$, let $m:=\sum_i p_i$ denote the expected sample size, which is at most $\gamma/u \cdot \sum_i w'_i \le \gamma^2 d/u$. Since the variance of $|\supp(s)|$ is at most $m$, with probability $1-\delta/2$, the support size of $s$ is $O(m+ \sqrt{m \cdot \log 1/\delta} + \log 1/\delta)=O(\frac{\gamma^2 d \cdot \log \gamma d/\eps\delta}{\eps^2})$ by the Bernstein inequalities. Moreover, Lemma~\ref{lem:compute_Lewis} provides an efficient algorithm to generate $s$ with $\gamma=2$.

In the rest of this section, we prove the \cuc~property in Theorem~\ref{thm:concentration_contraction}. Before showing the formal proof in Section~\ref{sec:concentration_contraction}, we discuss the main technical tool --- a concentration bound for all $\beta$ with bounded $\|A (\beta-\beta^*)\|_1$. 

\begin{lemma}\label{lem:high_prob_guarantee}
Given $A \in \mathbb{R}^{n \times d}$, $\epsilon$ and a failure probability $\delta$, let $w_1,\ldots,w_n$ be the Lewis weight of each row $A_i$. Let $p_1,\ldots,p_n$ be a sequence of numbers upper bounding $w$, i.e.,
\[
p_i \ge \min\{ w_i/u, 1\} \text{ for } u =\Theta\left( \frac{\eps^2}{\log (m/\delta+d/\eps\delta)} \right) \text{ where } m = \sum_i p_i.
\]
Suppose that the coefficients $(s_1,\ldots,s_n)$ in $\wt{L}(\beta):=\sum_{i=1}^n s_i \cdot |a_i^{\top} \beta - y_i|$ are generated as follows: For each $i \in [n]$, with probability $p_i$, we set $s_i=1/p_i$. Given any subset $B \subset \mathbb{R}^d$ of $\beta$, with probability at least $1-\delta$,
\[
\wt{L}(\beta)-\wt{L}(\beta^*) = L(\beta)-L(\beta^*) \pm \eps \cdot \sup_{\beta \in B}\|A (\beta-\beta^*)\|_1 \quad \text{ for all } \beta \in B.
\]
\end{lemma}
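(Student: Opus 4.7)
The plan is to write
\[
\bigl[\wt L(\beta)-\wt L(\beta^*)\bigr]-\bigl[L(\beta)-L(\beta^*)\bigr] \;=\; \sum_{i=1}^n (s_i-1)\,g_i(a_i^\top\alpha) \;=:\; Z(\beta),
\]
where $\alpha := \beta-\beta^*$ and $g_i(t) := |t+a_i^\top\beta^*-y_i|-|a_i^\top\beta^*-y_i|$, so that each $g_i$ is $1$-Lipschitz with $g_i(0)=0$. The crucial feature is that $|g_i(a_i^\top\alpha)| \leq |a_i^\top\alpha|$, a bound that does \emph{not} depend on the hidden vector $y$; consequently the deviation $Z(\beta)$ is controlled purely in terms of $\|A\alpha\|_1$. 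This is the analytic manifestation of subtracting the mean-zero correction $\Delta$ in Definition~\ref{def:uniform_approx} and is what makes the query-model guarantee possible at all.

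For a fixed $\beta\in B$, write $D_\beta := \|A\alpha\|_1$ and $D := \sup_{\beta\in B}\|A\alpha\|_1$. Theorem~\ref{thm:Lewis_weight_importance} with $p=1$ yields $|a_i^\top\alpha|\le w_i D_\beta$, and combined with $p_i\ge w_i/u$ this gives
\[
\bigl|(s_i-1)g_i(a_i^\top\alpha)\bigr| \;\leq\; \tfrac{1}{p_i}|a_i^\top\alpha| \;\leq\; u\,D_\beta,\qquad \sum_i \Var\bigl[(s_i-1)g_i(a_i^\top\alpha)\bigr] \;\leq\; \sum_i \tfrac{1}{p_i}(a_i^\top\alpha)^2 \;\leq\; u\,D_\beta^{\,2}.
\]
Bernstein's inequality then yields $|Z(\beta)| \leq \eps D_\beta$ with probability $1-\exp\bigl(-\Omega(\eps^2/u)\bigr)$, which is exactly the regime allowed by the choice of $u$.

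To lift this pointwise estimate to a uniform bound over all $\beta\in B$, the plan is to normalize so that $D=1$ and then symmetrize and apply Talagrand's contraction principle to peel off the nonlinearity: since each $g_i$ is $1$-Lipschitz with $g_i(0)=0$,
\[
\E\sup_{\beta\in B}|Z(\beta)| \;\leq\; 4\,\E_{s,\eps}\sup_{\beta\in B}\Bigl|\sum_i \eps_i\, s_i\, a_i^\top\alpha\Bigr|.
\]
The resulting linear Rademacher process is precisely the object that arises in the chaining-based proofs of the $\ell_1$ subspace embedding property of Lewis weight sampling \citep{Talagrand90,CP15_Lewis}; that machinery bounds the expected supremum by $O\bigl(\sqrt{u\,\log(m/\delta+d/\eps\delta)}\bigr)$, and a Bousquet/Talagrand concentration inequality for suprema of empirical processes then converts this expectation bound into the high-probability statement $\sup_{\beta\in B}|Z(\beta)| \leq \eps$, provided $u=\Theta\bigl(\eps^2/\log(m/\delta+d/\eps\delta)\bigr)$.

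The main obstacle is obtaining the correct $\log(m/\delta+d/\eps\delta)$ denominator rather than the $d\log(1/\eps)$ that a naive $\eps$-net on the $d$-dimensional $\ell_1$-ball $\{A\alpha:\|A\alpha\|_1\leq 1\}$ would produce; such a net would blow up the sample complexity to $\Ot(d^2/\eps^2)$ and destroy Theorem~\ref{t:ell1}. The way out is exactly the contraction step above: once the $g_i$'s have been stripped away, the remaining linear process is one for which the Lewis-weight chaining argument already gives the right answer. Care is required to apply the contraction principle conditionally on the sampling variables $s_i$ (so that the coefficients of $\eps_i$ are treated as fixed) and to use a concentration inequality robust enough to handle the random envelope $u$ on the summands.
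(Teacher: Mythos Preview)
Your proposal is correct and follows essentially the same strategy as the paper: symmetrize, apply a contraction principle to strip off the $1$-Lipschitz functions $g_i$ (the paper phrases this via Gaussianization and the higher-moment Slepian--Fernique comparison, Corollary~\ref{cor:comparison_higher}, whereas you use the Rademacher version of Ledoux--Talagrand contraction), and then invoke the Lewis-weight chaining machinery of \cite{CP15_Lewis} on the resulting linear process. The only substantive difference is in the high-probability upgrade: the paper bounds the $\ell$-th moment for $\ell=O(\log(n/\delta))$ and applies Markov, while you propose to bound the expectation and then apply a Bousquet/Talagrand concentration inequality for suprema; both routes work here since the summands are uniformly bounded by $uD$ and have total variance at most $uD^2$, and the choice $u=\Theta(\eps^2/\log(m/\delta+d/\eps\delta))$ absorbs the $\log(1/\delta)$ from either method. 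One technical wrinkle you skate over (and which the paper handles explicitly in the proof of Lemma~\ref{lem:concentration_deviation}) is that applying the chaining bound to $SA$ requires control of the Lewis weights of $SA$ itself; the paper achieves this by augmenting $SA$ with an auxiliary matrix $A'$ so that the combined matrix has uniformly small Lewis weights, but this is part of the ``machinery'' you invoke and does not affect the overall strategy.
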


We remark that Lemma~\ref{lem:high_prob_guarantee} holds for any choice of $B$ while the error depends on its radius via the term $\eps \cdot \sup_{\beta \in B}\|A (\beta-\beta^*)\|_1$. For example, we could apply this lemma to $B:=\big\{ \beta \mid \|A (\beta-\beta^*)\|_1 \le 3 L(\beta^*) \big\}$ to conclude the error is at most $\epsilon \cdot \|A (\beta-\beta^*)\|_1 \le 3 \epsilon\cdot L(\beta^*)$, which is at most $3 \epsilon \cdot L(\beta)$ by the definition of $\beta^*$. However, this only gives guarantee for bounded $\beta$. To bound the error in terms of $\eps L(\beta)$ for \emph{all} $\beta \in \mathbb{R}^d$ with high probability, we partition $\beta \in \mathbb{R}^d$ into several subsets and apply Lemma~\ref{lem:high_prob_guarantee} to those subsets separately. We defer the detailed proof of Theorem~\ref{thm:concentration_contraction} to Section~\ref{sec:concentration_contraction}. 

The proof of Lemma~\ref{lem:high_prob_guarantee} is based on the contraction principle from~\cite{LTbook} and chaining arguments introduced in \cite{Talagrand90, CP15_Lewis}. The main observation in its proof is that after rewriting $\wt{L}(\beta)-\wt{L}(\beta^*)=\sum_{i=1}^n s_i \cdot (|a_i^{\top} \beta - y_i|-|a_i^{\top} \beta^* - y_i|)$, all vectors in this summation, $\big( |a_i^{\top} \beta - y_i|-|a_i^{\top} \beta^* - y_i| \big)_{i \in [n]}$, contract from the vector $\big( |a_i^{\top} (\beta-\beta^*) |\big)_{i \in [n]}$ in the $\ell_1$ summation. So, using the contraction principle \citep{LTbook}, $\wt{L}(\beta)-\wt{L}(\beta^*)$ will concentrate at least as well as $\|SA(\beta-\beta^*)\|_1$, which can be analyzed using the $\ell_1$ subspace embedding property (see Fact \ref{fact:subspace_emd}). We defer the formal proof to Appendix~\ref{sec:proof_thm_deviation}. 

\subsection{Proof of Theorem~\ref{thm:concentration_contraction}}\label{sec:concentration_contraction}
Recall that $m=\sum_i p_i \le \gamma^2 d / u$ and $u =\Theta\left( \frac{\eps^2}{d/\eps\delta} \right)$. Since $\log m/\eps\delta=O(\log d\gamma /\eps\delta)$, we choose a small constant for $u$ such that for some large $C$, $
u \le \frac{\eps^2}{C \log (m/\eps\delta +d/\eps\delta)} \text{ and } p_i \ge \min\{1,w_i/u\}$.

For convenience, and without loss of generality, we assume $\beta^*=0$ such that~$L(\beta^*)=\|y\|_1$ (this is because we can always shift $y$ to $y- A \beta^*$ and $\beta^*$ to $0$). Thus we can write \[
\wt{L}(\beta)-\wt{L}(\beta^*)=\|S A \beta - Sy\|_1 - \|S y\|_1 \quad\text{ and }\quad L(\beta)-L(\beta^*)=\|A \beta - y\|_1 - \|y\|_1
\] so that we can use the triangle inequality of the $\ell_1$ norm. Next, notice that $
\E_S[\|S y\|_1]=\|y\|_1.$
By Markov's inequality, $\|S y\|_1 \le \frac{1}{\delta} \cdot \|y\|_1$ holds with probability $1-\delta$. 
We now apply Lemma~\ref{lem:high_prob_guarantee} multiple times with different choices of $B_{\beta}$. We choose $t=O(1/\eps^2\delta)$ and $B_i=\big\{ \beta \mid \|A \beta\|_1 \le (3+\eps \cdot t) \|y\|_1 \big\}$ for $i=0,1,\ldots,t$. Then we apply Lemma~\ref{lem:high_prob_guarantee} with failure probability $\frac{\delta}{t+1}=O(\delta^2 \epsilon^2)$ to guarantee that with probability $1-\delta$, we have for \emph{all} $i=0,1,\ldots,t$: 
\[
  \|S A \beta - Sy\|_1 - \|S y\|_1 = \|A \beta\|_1 - \|y\|_1 \pm \eps \cdot \sup_{\beta \in B_i} \|A \beta\|_1 \quad\text{ for all } \beta \in B_i.
\]
Furthermore, with probability $1-\delta$ over $S$, we have the $\ell_1$ subspace embedding property:
\[
\forall \beta, \|S A \beta\|_1 = (1\pm \eps) \cdot \|A \beta\|_1.
\]
We can now argue that the concentration holds for all $\beta$, by considering the following three cases:
\begin{enumerate}
\item $\beta$ has $\|A \beta\|_1 < 3 \|y\|_1$: From the concentration of $B_0$, the error is $\leq\eps \cdot 3 \|y\|_1 \le 3\eps \cdot L(\beta)$.

\item $\beta$ has $\|A \beta\|_1 \in \big[(3+\eps \cdot i) \cdot \|y\|_1, (3+ \eps \cdot (i+1))\cdot \|y\|_1 \big)$ for $i<t$: Let $\beta'$ be the rescaling of $\beta$ with $\|A \beta'\|_1=(3+\eps \cdot i) \|y\|_1$ such that $\|A (\beta -\beta')\|_1 \le \eps \|y\|_1$. Then we rewrite $\|A \beta - y\|_1 - \|y\|_1$ as
\[
\|A (\beta - \beta' + \beta') - y\|_1 - \|y\|_1 = \|A \beta' - y\|_1 - \|y\|_1 \pm \|A (\beta-\beta')\|_1 = \|A \beta' - y\|_1 - \|y\|_1 \pm \eps \|y\|_1.
\]
Similarly, we rewrite $\wt{L}(\beta)-\wt{L}(\beta^*)$ as $\|S A \beta - S y\|_1 - \|S y\|_1$ and bound it by
\begin{align*}
 \|S A \beta' - S y\|_1 - \|S y\|_1 \pm \|S A (\beta-\beta')\|_1 & = \|S A \beta' - S y\|_1 - \|S y\|_1 \pm (1+\eps ) \cdot \|A(\beta-\beta')\|_1 \\ 
 & =\|S A \beta' - S y\|_1 - \|S y\|_1 \pm (1+\eps ) \cdot \eps\|y\|_1,
\end{align*}
where we use the $\ell_1$ subspace embedding in the middle step. Next we use the guarantee of $\beta' \in B_i$ to bound the error between $\|A \beta' - y\|_1 - \|y\|_1$ and $\|S A \beta' - S y\|_1$ by $\eps \cdot \|A \beta'\|_1$. So the total error is $O(\eps) \cdot (\|A \beta'\|_1 +\|y\|_1) = O(\eps) \cdot L(\beta)$ since $L(\beta) \ge \|A \beta'\|_1/2$ by the triangle inequality.

\item $\beta$ has $\|A \beta\|_1 \ge \frac{25}{\eps \delta} \cdot \|y\|_1$: We always have $\|A \beta - y\|_1 - \|y\|_1 = \|A \beta\|_1 \pm 2 \|y\|_1$ by the triangle inequality. On the other hand, the $\ell_1$ subspace embedding and the bound on $\|Sy\|_1$ imply: 
\[
\|SA\beta - S y\|_1 - \|S y\|_1 = \| S A \beta\|_1 \pm 2 \|S y\|_1= (1 \pm \eps)\|A \beta\|_1 \pm \frac{2}{\delta} \|y\|_1.
\] Since $\|A \beta\|_1 \ge \frac{25}{\eps \delta} \cdot \|y\|_1$, this becomes $(1 \pm \epsilon) \|A\beta\|_1 \pm \frac{2}{\delta} \cdot \frac{\eps \delta}{25} \|A \beta\|_1 = (1 \pm \epsilon \pm \frac{2\epsilon}{25}) \|A \beta\|_1$.
Again, the error is $2\eps \|A \beta\|_1 = O(\eps) \cdot L(\beta)/2$ by the triangle inequality.
\end{enumerate}

\section{General Analysis for $\ell_p$ Regression}\label{sec:analysis_ellp}
In this section, we consider the $\ell_p$ loss function $L(\beta):=\sum_{i=1}^n |a_i^{\top} \beta - y_i|^p$ for a given $p \in (1,2]$. The main result is to show the upper bound in Theorem~\ref{t:ellp}, i.e., that $\wt{L}(\beta)$, when generated properly according to the Lewis weights with sample size $\wt{O}(\frac{d^2}{\eps^2 \delta})$, satisfies {\cuc}. 

\begin{theorem}\label{thm:property_ellp}
Given $p \in (1,2]$ and a matrix $A \in \mathbb{R}^{n \times d}$, let $(w'_1,\ldots,w'_n)$ be a $\gamma$-approximation of the Lewis weights $(w_1,\ldots,w_n)$ of $A$, i.e., $w_i \approx_{\gamma} w'_i$ for all $i \in [n]$. For $m=O \big( \frac{\gamma \cdot d^2 \log d/\eps\delta}{\eps^2} + \frac{ \gamma \cdot d^{2/p}}{\eps^2 \delta} \big)$ with a sufficiently large constant, we sample $s_i \sim \frac{d}{m \cdot w'_i} \cdot \poiss(\frac{m \cdot w'_i}{d})$ for each $i \in [n]$.

Then with probability $1-\delta$, $|\supp(s)|=O(m)$ and $\wt{L}(\beta):=\sum_{i=1}^n s_i \cdot |a_i^{\top} \beta - y_i|^p$ satisfies \cuc. Namely,
\[ \wt{L}(\beta) - \wt{L}(\beta^*) = L(\beta)-L(\beta^*) \pm \eps \cdot L(\beta) \quad\text{ for any $\beta \in \mathbb{R}^d$}. \]
\end{theorem}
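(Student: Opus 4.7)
The plan is to follow the structure of the $\ell_1$ analysis in Section~\ref{sec:concentration_contraction}, replacing the contraction step by a variance/Bernstein argument tailored to $p\in(1,2]$. Without loss of generality I shift coordinates so $\beta^*=0$, making $L(\beta)-L(\beta^*)=\|A\beta-y\|_p^p-\|y\|_p^p$ and $\wt L(\beta)-\wt L(\beta^*)=\sum_i s_i(|a_i^\top\beta-y_i|^p-|y_i|^p)$. Each scaled-Poisson weight $s_i=(d/m w_i')\poiss(m w_i'/d)$ satisfies $\E s_i=1$ and $\Var s_i=d/(m w_i')$, so $\E|\supp(s)|\le\sum_i m w_i'/d\le\gamma m$, and a Chernoff bound for Poisson sums yields $|\supp(s)|=O(m)$ with probability $1-\delta/3$. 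Fact~\ref{fact:subspace_emd} gives the uniform subspace embedding $\|SA\beta\|_p^p=(1\pm\eps)\|A\beta\|_p^p$, and Markov's inequality applied to $\wt L(0)=\sum_i s_i|y_i|^p$ yields $\wt L(0)\le\delta^{-1}\|y\|_p^p$, each with probability $1-\delta/3$.

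\textbf{Core deviation lemma and the cross term.} The main task is an analog of Lemma~\ref{lem:high_prob_guarantee}: for any bounded set $B\subseteq\R^d$ the error
\[
\wt L(\beta)-\wt L(0)-(L(\beta)-L(0))=\sum_i(s_i-1)\,g_i(\beta),\quad g_i(\beta):=|a_i^\top\beta-y_i|^p-|y_i|^p,
\]
is uniformly small on $B$ with high probability. Since $t\mapsto|t|^p$ is no longer a contraction, I replace the contraction step in the proof of Lemma~\ref{lem:high_prob_guarantee} by the elementary pointwise estimate (valid for $p\in(1,2]$)
\[
|g_i(\beta)|\le C_p\bigl(|a_i^\top\beta|^p+|a_i^\top\beta|\cdot|y_i|^{p-1}\bigr),
\]
which splits the analysis into a pure $\ell_p$ term $\sum_i(s_i-1)|a_i^\top\beta|^p$ handled by the Lewis-weight subspace-embedding machinery of~\cite{CP15_Lewis}, and a new cross term $C(\beta)=\sum_i s_i|a_i^\top\beta|\cdot|y_i|^{p-1}$ that must be concentrated around its mean.

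\textbf{Controlling the cross term.} For each fixed $\beta$ I would apply Bernstein's inequality to $C(\beta)$. The variance admits the bound $(a_i^\top\beta)^2\le w_i^{2/p}\|A\beta\|_p^2$ via Theorem~\ref{thm:Lewis_weight_importance}, followed by H\"older's inequality with conjugate exponents $p/(2-p)$ and $p/(2p-2)$:
\[
\sum_i w_i^{2/p-1}|y_i|^{2(p-1)}\le\Bigl(\sum_i w_i\Bigr)^{(2-p)/p}\Bigl(\sum_i|y_i|^p\Bigr)^{2(p-1)/p}=d^{(2-p)/p}\|y\|_p^{2(p-1)}.
\]
This gives $\Var[C(\beta)]\lesssim \gamma d^{2/p}\|A\beta\|_p^2\|y\|_p^{2(p-1)}/m$, and Bernstein delivers deviation of order $\eps\|A\beta\|_p\|y\|_p^{p-1}$ at any prescribed confidence once $m$ absorbs a log factor. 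A net plus subspace-embedding argument on $B$ (of the type used in~\cite{CP15_Lewis}) promotes the pointwise bound to a uniform one on $B$. The pointwise H\"older cost of $d^{2/p}$ feeds into the $d^{2/p}/(\eps^2\delta)$ summand of $m$, while the union-bound/chaining cost for uniformity over $B$ yields the $d^2\log(d/\eps\delta)/\eps^2$ summand.

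\textbf{Partitioning and main obstacle.} The last step mirrors the three-case partition of Section~\ref{sec:concentration_contraction}: apply the core lemma to shells $B_i=\{\beta:\|A\beta\|_p^p\le(3+i\eps)\|y\|_p^p\}$ for $i=0,\dots,O(1/\eps^2\delta)$ (the $\delta^{-1}$ arising from the union bound over shells), rescale each $\beta$ to a representative $\beta'\in\partial B_i$ with $\|A(\beta-\beta')\|_p\le\eps\|y\|_p$, and for $\|A\beta\|_p^p\gg(\eps\delta)^{-1}\|y\|_p^p$ use only the subspace embedding together with $\wt L(0)\le\delta^{-1}\|y\|_p^p$. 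The RUC correction $\Delta=L(\beta^*)-\wt L(\beta^*)$ then cancels the outlier contribution of $\wt L(0)$ exactly, converting the pointwise deviation bounds into the desired relative-error guarantee. The principal obstacle throughout is the cross term $\sum_i s_i|a_i^\top\beta||y_i|^{p-1}$: it couples the unknown $y$ with $\beta$ and enjoys no contraction structure for $p\in(1,2)$, so the H\"older/Lewis-weight argument above appears essentially tight for a union-bound approach and is what forces the $d^{2/p}/(\eps^2\delta)$ scaling. Closing the gap toward the conjectured $d\log d/\eps^2$ bound (matching Theorem~\ref{t:ell1}) would demand a chaining-based interpolation between the $\ell_1$ contraction principle and the $\ell_2$ leverage-score variance bound, which the authors explicitly flag as future work.
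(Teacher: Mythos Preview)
Your proposal has a genuine gap at the ``splitting'' step and, relatedly, misses the key structural idea that drives the paper's proof.

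First, the pointwise inequality $|g_i(\beta)|\le C_p(|a_i^\top\beta|^p+|a_i^\top\beta|\,|y_i|^{p-1})$ is only an upper bound on $|g_i|$, not a decomposition of $g_i$; you cannot split $\sum_i(s_i-1)g_i(\beta)$ into $\sum_i(s_i-1)|a_i^\top\beta|^p$ plus a cross term as you write. The most charitable reading is that you intend to feed the bound into Bernstein's variance/max terms for the whole sum and then union-bound over an $\eps'$-net. But then the cross part of the variance, $\sum_i\tfrac{d}{mw_i'}(a_i^\top\beta)^2|y_i|^{2(p-1)}\lesssim\tfrac{\gamma d^{2/p}}{m}\|A\beta\|_p^2\|y\|_p^{2(p-1)}$, must survive the net of size $(1/\eps')^{O(d)}$, forcing $m\gtrsim \gamma\,d^{1+2/p}\log(1/\eps\delta)/\eps^2$. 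For $p\in(1,2)$ this exceeds the theorem's stated $d^2\log + d^{2/p}/\delta$, so your route cannot recover the claimed bound.

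What you are missing is that the paper replaces your absolute-value cross term by the \emph{signed} first-order Taylor term. Claim~\ref{clm:taylor_exp_ellp} gives the exact expansion
\[
g_i(\beta)= -\,p\,|y_i|^{p-1}\sign(y_i)\,a_i^\top\beta \;+\; O(|a_i^\top\beta|^p),
\]
and this linear term is what makes the argument go through. On the true-loss side, $\sum_i p|y_i|^{p-1}\sign(y_i)a_i^\top\beta=0$ for every $\beta$ by the first-order optimality condition $\nabla L(\beta^*)=0$ (which you never use). On the sampled side, $\sum_i s_i\,p|y_i|^{p-1}\sign(y_i)a_i^\top\beta=\langle\beta,v\rangle$ for a single random vector $v$ that does not depend on $\beta$; Claim~\ref{clm:inner_product} bounds $\|v\|_2$ once (Chebyshev, hence the $1/\delta$) and then Cauchy--Schwarz with a bound on $\|\beta\|_2$ in terms of $\|A\beta\|_p$ handles all $\beta$ simultaneously with no net. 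The remainder $O(|a_i^\top\beta|^p)$ is what goes through Bernstein plus the net (Claim~\ref{clm:concentration_single_beta}), but its variance is only $O(d/m)\|A\beta\|_p^{2p}$, yielding the $d^2\log$ term. Finally, the $\|\beta\|_2\lesssim\|A\beta\|_p$ step needs nearly uniform Lewis weights and leverage scores (Fact~\ref{fact:almost_uniform}), which is why the paper first performs the row-splitting reduction to uniform Lewis weights; this reduction is also absent from your plan and is the reason for the Poisson (rather than Bernoulli) sampling in the statement.
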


First of all, we bound the support size of $s$. Note that $\Pr[s_i>0]= 1- e^{-\frac{m \cdot w_i'}{d}} \le \frac{m \cdot w_i'}{d}$. So $\E[|\supp(s)|] \le \sum_i \frac{m \cdot w'_i}{d} \le \gamma \cdot m$. Since the variance of $|\supp(s)|$ is bounded by $\gamma \cdot m$, with probability $1-\delta/2$, the sample size is $O(\gamma m+ \sqrt{\gamma m \cdot \log 1/\delta} + \log 1/\delta)=O(\gamma m)$ by the Bernstein inequalities.
Also, Lemma~\ref{lem:compute_Lewis} provides an efficient algorithm to generate $s$ with $\gamma=2$.

In the rest of this section, we outline the proof of the \cuc~property in Theorem~\ref{thm:property_ellp}, while the formal proof is deferred to Appendix~\ref{sec:property_uniform_Lewis}. The proof has two steps. 

The 1st step is a reduction to the case of uniform Lewis weights.
Specifically, we create an equivalent problem with matrix $A' \in \mathbb{R}^{N \times d}$ and $y' \in \mathbb{R}^{N}$ such that $L(\beta) = \|A' \beta - y'\|^p_p$ and the Lewis weights of $A'$ are almost uniform. Moreover, we show an equivalent way to generate $(s_1,\ldots,s_n)$ so that  $\wt{L}(\beta)=\|S'A'\beta - S' y'\|_p^p$. Thus, showing that the new empirical loss $\|S' A' \beta -S'  y'\|_p^p$ satisfies {\cuc} with respect to $\|A' \beta - y'\|_p^p$ will imply the same property for $SA$. 

In the 2nd step, we interpolate techniques for $\ell_1$ and $\ell_2$ losses to prove the property for the special case of almost uniform Lewis weights (namely $A'$ in the above paragraph). Let us discuss the key ideas in this interpolation. For ease of exposition, we assume $\beta^*=0$ and $A^{\top} A = I$. Furthermore let $\alpha$ denote the approximation parameter of the uniform sampling compared to the Lewis weights of $A$ after the 1st step reduction, i.e., $w_i \approx_{\alpha} d/n$. 

We will again consider the \cuc~as a concentration of $\wt{L}(\beta)-\wt{L}(\beta^*)$, which can be written as $\sum_i s_i \cdot \big( |a_i^\top \beta - y_i|^p - |y_i|^p \big)$ from the assumption $\beta^*=0$. Our proof heavily relies on the following two properties of $A$ when its Lewis weights are almost uniform.
\begin{fact}\label{fact:almost_uniform}
   Let $A$ be an $n\times d$ matrix such that $A^\top A=I$. If $w_i \approx_{\alpha} d/n$ for every $i\in [n]$, then:
   \begin{enumerate}
   \item
     The leverage scores are almost uniform (recall that when $A^{\top} A=I$, the $i$th leverage score becomes $\|a_i\|_2^2$):
\begin{equation}\label{eq:leverage_score_ell2}
\|a_i\|_2^2 \approx_{\alpha^{C_p}} d/n \quad \text{ for } \quad C_p=4/p-1.
\end{equation}
\item The importance weights of $\|A \beta\|^p_p$ are almost uniform:
\begin{equation}\label{eq:importance_ellp}
\forall \beta, \quad \forall i \in [n], \quad \frac{\|a_i^{\top} \beta\|_p^p}{\|A \beta\|_p^p} \le w_i \le \alpha \cdot d/n.
\end{equation}
\end{enumerate}
\end{fact}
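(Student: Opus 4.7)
The plan is to prove the two parts separately, exploiting Theorem~\ref{thm:Lewis_weight_importance} for \eqref{eq:importance_ellp} and the defining equation~\eqref{eq:def_Lewis_weight} of the Lewis weights for \eqref{eq:leverage_score_ell2}.

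For \eqref{eq:importance_ellp} the argument is essentially a one-liner: Theorem~\ref{thm:Lewis_weight_importance} already provides $\max_\beta \frac{|a_i^\top \beta|^p}{\|A\beta\|_p^p}\le w_i$, and the hypothesis $w_i\le\alpha\cdot d/n$ closes the chain. No extra machinery is needed, and in particular the orthonormality $A^\top A=I$ is not used here.

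For \eqref{eq:leverage_score_ell2} the plan is to start from the defining equation
$$w_i^{2/p}=a_i^\top\bigl(A^\top W^{1-2/p}A\bigr)^{-1}a_i,\qquad W:=\diag(w_1,\ldots,w_n),$$
and to propagate the near-uniformity of $w$ through it. Since $p\in(1,2]$ gives $1-2/p\in[-1,0]$, the hypothesis $w_i\approx_\alpha d/n$ yields $W^{1-2/p}\approx_{\alpha^{2/p-1}}(d/n)^{1-2/p}I$ entrywise (raising to a non-positive exponent merely swaps the roles of the upper and lower endpoints). Because $A^\top A=I$, this PSD-level comparison transfers to $A^\top W^{1-2/p}A\approx_{\alpha^{2/p-1}}(d/n)^{1-2/p}I$ and, after inversion, to $(A^\top W^{1-2/p}A)^{-1}\approx_{\alpha^{2/p-1}}(d/n)^{2/p-1}I$. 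Substituting this back into the Lewis equation gives
$$w_i^{2/p}\approx_{\alpha^{2/p-1}}(d/n)^{2/p-1}\cdot\|a_i\|_2^2,$$
and combining with $w_i^{2/p}\approx_{\alpha^{2/p}}(d/n)^{2/p}$ (obtained by raising $w_i\approx_\alpha d/n$ to the positive exponent $2/p$) and solving for $\|a_i\|_2^2$ yields $\|a_i\|_2^2\approx_{\alpha^{(2/p-1)+2/p}}d/n$, whose exponent is exactly $C_p=4/p-1$.

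The only real subtlety is bookkeeping how the two sources of $\alpha$-slack combine: one factor of $\alpha^{2/p-1}$ enters from bounding $W^{1-2/p}$ (and survives the PSD inversion unchanged) and another factor of $\alpha^{2/p}$ enters from bounding $w_i^{2/p}$ directly; their product is precisely $\alpha^{C_p}$, so the stated exponent is tight with the argument. Beyond verifying that PSD inversion reverses the direction of the inequalities and preserves the multiplicative slack, I do not anticipate any conceptual obstacle.
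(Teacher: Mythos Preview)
Your proposal is correct and matches the paper's approach essentially line for line. The paper derives \eqref{eq:leverage_score_ell2} inside the proof of Claim~\ref{clm:non_uniform_Lewis_w} by exactly your argument (sandwich $W^{1-2/p}$, push through $A^\top(\cdot)A$ and its inverse, combine with $w_i^{2/p}\approx_{\alpha^{2/p}}(d/n)^{2/p}$ to extract the leverage score with exponent $4/p-1$), and it obtains \eqref{eq:importance_ellp} by citing Theorem~\ref{thm:Lewis_weight_importance} together with the hypothesis $w_i\le\alpha d/n$, just as you do.
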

The first property is implied by Claim~\ref{clm:non_uniform_Lewis_w} in Appendix~\ref{sec:Lewis_weight_importance} and the 2nd one is from Theorem~\ref{thm:Lewis_weight_importance}.

The major difference between the analyses of $p>1$ and $p=1$ is that we cannot use the contraction principle to remove the effect of outliers in $\sum_{i} s_i \cdot |a_i^{\top} \beta - y_i|^p$. For example, when $p=2$, we always have the cross term $2\sum_i s_i \cdot (a_i^{\top} \beta) y_i$ besides $\sum_i s_i \big((a_i^{\top} \beta)^2 + y_i^2 \big)$. For general $p\in(1,2)$, the cross term is replaced by the 1st order Taylor expansion of $\sum_{i} s_i \cdot |a_i^{\top} \beta - y_i|^p$. While the assumption that $\beta^*=0$ implies that its expectation is $\sum_i (a_i^{\top} \beta) y_i=0$, we can only use Chebyshev's inequality to conclude that the cross term is small for \emph{a fixed} $\beta$. Since the coefficient $s_i$ is independent of $y_i$, we cannot rely on a stronger Chernoff/Bernstein type concentration. This turns out to be insufficient to use the union bound \emph{for all} $\beta$. 

So our key technical contribution in the 2nd step is to bound the cross term for all $\beta$ simultaneously. To do this, we rewrite it as an inner product between $\beta$ and $( \langle A[*,j], S^p \cdot y \rangle )_{j \in [d]}$ (say $p=2$), where $A[*,j]$ denotes the $j$th column of $A$. To save the union bound for $\beta$, we bound the $\ell_2$ norms of these two vectors separately to apply the Cauchy-Schwartz inequality. While prior work \citep{CP19} for $\ell_2$ loss provides a way to bound the 2nd vector $( \langle A[*,j], S^p \cdot y \rangle )_{j \in [d]}$ given bounded leverage scores in \eqref{eq:leverage_score_ell2}, the new ingredient is to bound $\|\beta\|_2$ in terms of $\|A \beta\|_p$ for $p<2$ when the Lewis weights are uniform. This is summarized in the following claim, stated for general $p \in (1,2)$, in which we bound the 1st order Taylor expansion of $\sum_{i} s_i \cdot |a_i^{\top} \beta - y_i|^p$.
\begin{claim}\label{clm:inner_product}
  If $w_i \approx_{\alpha} d/n$ for every $i\in [n]$, then with probability at least $1-\delta$, we have the following bound for all $\beta\in\R^d$:
  \vspace{-3mm}
\begin{equation}\label{eq:bound_inner}
\sum_{i=1}^n s_i \cdot p \cdot |y_i|^{p-1} \cdot \sign(y_i) \cdot a_i^{\top} \beta = O\bigg(\sqrt{\frac{\alpha^{\frac{2+p}{p}} \cdot \gamma \cdot d^{2/p}}{\delta \cdot m}} \cdot \|A \beta\|_p \cdot \|y\|_p^{p-1}\bigg).
\end{equation}
\end{claim}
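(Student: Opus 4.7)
The plan is to write the left-hand side as a mean-zero inner product in $\mathbb{R}^d$ and then separate $\beta$ from the randomness via Cauchy--Schwarz, so that the final bound holds uniformly in $\beta$ with no union bound over directions.

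\textbf{Setup.} Put $u_i := s_i\cdot p|y_i|^{p-1}\sign(y_i)$. The first-order optimality condition of $L$ at $\beta^*=0$ reads $A^\top\bigl(p|y|^{p-1}\sign(y)\bigr)=0$, so $\E[A^\top u] = A^\top\E[u] = 0$. The left-hand side of~\eqref{eq:bound_inner} equals the mean-zero quantity $(A^\top u)^\top\beta$, and Cauchy--Schwarz gives
\[
\bigl|(A^\top u)^\top\beta\bigr| \;\le\; \|A^\top u\|_2\cdot\|\beta\|_2.
\]

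\textbf{Bound on $\|\beta\|_2$ (the new ingredient).} Since $A^\top A = I$, one has $\|\beta\|_2 = \|A\beta\|_2$, and the monotonicity of $\ell_q$-norms on $\mathbb{R}^n$ with $q=2\ge p$ gives $\|A\beta\|_2 \le \|A\beta\|_p$. Hence $\|\beta\|_2 \le \|A\beta\|_p$ deterministically for every $\beta$. This is the analogue for $p<2$ of the trivial $\|\beta\|_2 = \|A\beta\|_2$ used in the $\ell_2$ case of~\cite{CP19}, and it is where the assumption that Lewis weights are uniform (equivalently, that $A^\top A = I$ after the reduction in Step~1) is ultimately spent.

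\textbf{Bound on $\|A^\top u\|_2$.} Writing $A^\top u = \sum_i (s_i-1)\,p|y_i|^{p-1}\sign(y_i)\,a_i$ as a sum of independent mean-zero $\mathbb{R}^d$-valued vectors, I would compute the second moment
\[
\E\|A^\top u\|_2^2 \;=\; p^2 \sum_i \Var(s_i)\,|y_i|^{2(p-1)}\,\|a_i\|_2^2,
\]
then plug in $\Var(s_i) = d/(m w'_i) \le n\alpha\gamma/m$ (from the Poisson construction and $w'_i\ge d/(n\alpha\gamma)$) together with the leverage-score bound $\|a_i\|_2^2\le \alpha^{C_p} d/n$ with $C_p = 4/p-1$ supplied by Fact~\ref{fact:almost_uniform}. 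The remaining factor $\sum_i|y_i|^{2(p-1)}\|a_i\|_2^2$ is controlled by Hölder against the leverage-weighted probability measure $\mu_i = \|a_i\|_2^2/d$: power mean gives $\E_\mu[|y|^{2(p-1)}]\le (\E_\mu[|y|^p])^{2(p-1)/p}$, and uniformity of $\mu$ yields $\E_\mu[|y|^p]\le (\alpha^{C_p}/n)\|y\|_p^p$. Tracking exponents carefully so that the $n$-factor from $\Var(s_i)$ is balanced against the $n^{-2(p-1)/p}$ factor produced by the weighted power mean and the $d/n$ from the leverage bound, one obtains $\E\|A^\top u\|_2^2 \lesssim p^2\,\alpha^{(2+p)/p}\gamma\cdot d^{2/p}/m\cdot\|y\|_p^{2(p-1)}$; Markov's inequality then upgrades this to a high-probability bound $\|A^\top u\|_2^2 \le \alpha^{(2+p)/p}\gamma d^{2/p}/(\delta m)\,\|y\|_p^{2(p-1)}$ with probability $1-\delta$.

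\textbf{Combining} with Cauchy--Schwarz and $\|\beta\|_2\le\|A\beta\|_p$ yields
\[
\bigl|(A^\top u)^\top\beta\bigr| \;\le\; \|A^\top u\|_2\cdot\|A\beta\|_p \;\le\; O\!\left(\sqrt{\tfrac{\alpha^{(2+p)/p}\gamma d^{2/p}}{\delta m}}\right)\|y\|_p^{p-1}\|A\beta\|_p
\]
simultaneously for every $\beta\in\mathbb{R}^d$, because $\|A^\top u\|_2$ does not depend on $\beta$.

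\textbf{Main obstacle.} The delicate step is the second-moment bookkeeping in Step~3. A naive application of power mean to $\sum_i |y_i|^{2(p-1)}$ in isolation costs a spurious factor of $n^{(2-p)/p}$, which would invalidate the target $d^{2/p}$ bound whenever $n\gg d$. The Hölder step has to be set up against the leverage-weighted measure so that the extra $n^{-2(p-1)/p}$ produced by the power-mean inequality cancels the $n$ coming from $\Var(s_i) \lesssim n/m$; this cancellation is exactly what turns the dimension factor $d$ arising from the $\ell_2$ case (where $\sum y_i^2=\|y\|_2^2$ trivially) into $d^{2/p}$ for $p<2$, and the exponent $C_p = 4/p-1$ on $\alpha$ is the numerical witness of this interpolation between $p=1$ and $p=2$.
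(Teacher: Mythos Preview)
Your overall architecture (Cauchy--Schwarz, then bound $\|A^\top u\|_2$ by a second-moment computation plus Markov, then bound $\|\beta\|_2$) is exactly the paper's, but the exponent bookkeeping in Step~3 does not close, and the reason is the bound you chose in Step~2.

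The inequality $\|\beta\|_2=\|A\beta\|_2\le\|A\beta\|_p$ is correct but too weak. Your second-moment calculation gives, after either the naive or the leverage-weighted H\"older step,
\[
\E\|A^\top u\|_2^2 \ \lesssim\ \frac{\alpha^{O(1)}\gamma\,d}{m}\cdot n^{(2-p)/p}\cdot\|y\|_p^{2(p-1)},
\]
and the leverage-weighted power mean does \emph{not} remove the $n^{(2-p)/p}$: since $\|a_i\|_2^2\approx_{\alpha^{C_p}} d/n$ under the hypothesis, your measure $\mu_i=\|a_i\|_2^2/d$ is itself $\approx 1/n$, so the weighted power mean is the unweighted one up to $\alpha^{O(1)}$. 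Concretely, $n\cdot n^{-2(p-1)/p}=n^{(2-p)/p}$, and the extra ``$d/n$ from the leverage bound'' you invoke has already been consumed in forming $\mu$; it cannot be used a second time. With only $\|\beta\|_2\le\|A\beta\|_p$, the final bound blows up with $n$.

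What the paper does instead is sharpen Step~2: using the importance-weight uniformity \eqref{eq:importance_ellp} one has $\|A\beta\|_\infty\le(\alpha d/n)^{1/p}\|A\beta\|_p$, whence
\[
\|\beta\|_2^2=\|A\beta\|_2^2\le\|A\beta\|_p^p\cdot\|A\beta\|_\infty^{2-p}\le\|A\beta\|_p^2\cdot\Big(\tfrac{\alpha d}{n}\Big)^{(2-p)/p}.
\]
This supplies exactly the missing $(d/n)^{(2-p)/p}$ that cancels the $n^{(2-p)/p}$ in $\E\|A^\top u\|_2^2$ and converts the dimension factor to $d\cdot d^{(2-p)/p}=d^{2/p}$. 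Note also that $A^\top A=I$ is a free basis change, not where the uniform-Lewis assumption is spent; the assumption enters precisely through \eqref{eq:importance_ellp} in this sharper bound on $\|\beta\|_2$ (and through \eqref{eq:leverage_score_ell2} in the variance step).
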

The rest of the proof (given in Appendix~\ref{sec:property_uniform_Lewis}) is centered on obtaining a strong concentration bound for the 2nd order Taylor expansion using \eqref{eq:importance_ellp}. We remark that in the special case of $p=2$, this analysis simplifies considerably, and it can be easily adapted to show an $O(\frac{d\log d}{\delta\epsilon^2})$ bound on the query complexity of {\cuc} for least squares regression.



\section{Lower Bound for $\ell_1$ Regression}\label{sec:lower_bound}
In this section, we prove the information-theoretic lower bound in Theorem~\ref{t:ell1} for the query complexity of $\ell_1$ regression.
\begin{theorem}\label{thm:information_lower_bound}
  Given any $\epsilon$, $\delta<0.01$, and $d$, consider the $n \times d$ matrix $A$ defined as follows: \[A^{\top}=[\underbrace{e_1^{\top},\ldots,e_1^{\top}}_{n/d},e_2,\ldots,e_{d-1},\underbrace{e_d^{\top},\ldots,e_d^{\top}}_{n/d}]\]
  for the canonical basis $e_1,\ldots,e_d \in \mathbb{R}^d$ with a sufficiently large $n$. It takes $m=\Omega(\frac{d + \log 1/\delta}{\eps^2})$ queries to $y$ to produce $\tilde{\beta}$ satisfying $\|A \tilde{\beta}- y\|_1 \le (1+\epsilon) \cdot \|A \beta^* - y\|_1$ with probability $1-\delta$.
\end{theorem}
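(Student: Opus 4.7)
The key structural observation is that every row of $A$ is a canonical basis vector and each $e_j$ is repeated $N := n/d$ times, so the loss decomposes blockwise: $L(\beta) = \sum_{j=1}^{d} \sum_{i:a_i=e_j}|\beta_j-y_i|$. Thus the optimum $\beta^\star_j$ is the empirical median of the $N$ labels in block $j$, and the regression problem splits into $d$ independent one-dimensional median-estimation tasks. My plan is to build a random hard instance that yields the $d/\epsilon^2$ bound by a mutual-information argument, and then use a single-block restriction to obtain the $\log(1/\delta)/\epsilon^2$ bound; the final bound follows by taking the maximum of the two.

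\textbf{Hard distribution for $\Omega(d/\epsilon^2)$.} Sample $\sigma \in \{-1,+1\}^d$ uniformly and, independently across rows, set $y_i = \sigma_j$ with probability $(1+c\epsilon)/2$ and $y_i=-\sigma_j$ with probability $(1-c\epsilon)/2$ for each $i$ with $a_i=e_j$, where $c$ is a small absolute constant. A short calculation on the scalar $\ell_1$ objective (piecewise linear with slopes determined by the block's empirical bias) shows that for $N$ large, with probability $1-o(1)$ over $y$,
\[
L(\beta^\star) = (1\pm o(1))(1-c\epsilon)\,n, \qquad L(\hat\beta)-L(\beta^\star) \;\geq\; c'\epsilon N \cdot \bigl|\{j:\sign(\hat\beta_j)\neq \sigma_j\}\bigr|
\]
for every $\hat\beta\in\R^d$, where $c'$ is another absolute constant. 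Combined with $L(\beta^\star)=\Theta(n)$, a $(1+\epsilon)$-approximation forces $\sign(\hat\beta_j)=\sigma_j$ on at least $9d/10$ coordinates.

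\textbf{Information-theoretic step.} Let $T$ be the transcript of the at most $m$ adaptive queries. Because every query to a row in block $j$ returns a Bernoulli sample whose law depends only on $\sigma_j$, and the $\sigma_j$'s are independent, the per-step mutual information obeys $I(\sigma;Y_t\mid\text{hist}_t)\leq I(\sigma_{j_t};Y_t)=O(\epsilon^2)$, where $j_t$ is the coordinate chosen at step $t$. The chain rule and data processing give $I(\sigma;\hat\beta)\leq I(\sigma;T)\leq O(m\epsilon^2)$. Conversely, recovering $\sigma\in\{-1,+1\}^d$ to within Hamming distance $d/10$ with probability $1-\delta$ forces, by Fano's inequality applied with a Hamming-ball packing argument, $I(\sigma;\hat\beta)\geq d - H(1/10)\,d - o(d) = \Omega(d)$. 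Combining yields $m=\Omega(d/\epsilon^2)$.

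\textbf{The $\log(1/\delta)/\epsilon^2$ term and main obstacle.} Restrict to the single block $j=1$: set $y_i\equiv 0$ for all rows with $a_i\neq e_1$, and let the first-block labels be i.i.d.\ Bernoulli with bias $\pm c\epsilon$, governed by a single unknown $\sigma_1\in\{-1,+1\}$. The same excess-loss calculation shows a $(1+\epsilon)$-approximation must recover $\sigma_1$; queries in the zero blocks convey no information, so the entire budget is effectively spent on Bernoulli-bias samples, and a standard Le Cam / Chernoff--Stein two-point bound gives $m=\Omega(\log(1/\delta)/\epsilon^2)$. Taking the max yields $\Omega((d+\log(1/\delta))/\epsilon^2)$. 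The main obstacle is handling adaptivity in the first part: the algorithm may concentrate queries in blocks where it has been lucky. The chain-rule bound above addresses this cleanly, since conditional on any fixed history the next queried index is deterministic and the answer's distribution depends only on the single bit $\sigma_{j_t}$. A secondary technical point is verifying the linear excess-loss bound for all $\hat\beta_j\in\R$, not just $\hat\beta_j\in\{-1,+1\}$; this follows from the convexity of the 1D $\ell_1$ objective, whose slope jumps by $\Theta(\epsilon N)$ at the correct sign when the empirical bias is $\pm c\epsilon$, so any $\hat\beta_j$ on the wrong side of zero already incurs $\Omega(\epsilon N)$ excess.
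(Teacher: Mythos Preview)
Your argument is correct, and it takes a genuinely different route from the paper for the $\Omega(d/\epsilon^2)$ part (the $\log(1/\delta)/\epsilon^2$ part via a single-block restriction is essentially identical in both).

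For the $d/\epsilon^2$ bound, the paper proceeds combinatorially: after invoking Yao's principle to pass to deterministic algorithms, it shows (as you do) that a $(1+\epsilon)$-approximation must recover $\sigma$ on a $0.99$-fraction of coordinates, and then runs an averaging argument over edges of the Boolean cube $\{\pm 1\}^d$. Specifically, it defines coordinate $i$ to be ``good'' for $b$ if the algorithm both spends $O(m/d)$ queries in block $i$ and recovers $b_i$ with constant probability, and counts to find some edge $(b,b^{(i)})$ where $i$ is good on both endpoints. This pair yields a strategy for the single-coordinate biased-coin game of Lemma~\ref{lem:information_lower_dim_1} using $O(m/d)$ queries, whence $m/d = \Omega(1/\epsilon^2)$. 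Your approach instead bounds $I(\sigma;T)$ directly via the chain rule (each adaptive query reveals at most $1-H(\tfrac{1-c\epsilon}{2})=O(\epsilon^2)$ bits about $\sigma$, regardless of history), and lower-bounds $I(\sigma;\hat\beta)$ by Fano with a Hamming-ball packing. This is cleaner and handles adaptivity and randomization in one stroke, without the edge-counting bookkeeping; the paper's route has the mild advantage that both halves of the theorem reduce to the same elementary coin-testing lemma and avoid Fano entirely. Both are standard and either would be accepted.
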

Our proof will make a reduction to the classical biased coin testing problem \citep{Kleinberg}, by relying on Yao's minmax principle so that we can consider a deterministic querying algorithm and a randomized label vector $y$.

We rely on the fact that the $\ell_1$ regression problem $\|A \beta - y\|_1$ for $A$ specified in Theorem~\ref{thm:information_lower_bound} can be separated into $d$ independent subproblems of dimension $n' \times 1$ for $n':=n/d$. First of all, let us consider the 1-dimensional subproblem, which is a reformulation of the biased coin test. In this subproblem, we will use the following two distributions with different biases to generate $y' \in \{\pm 1\}^{n'}$:
\begin{enumerate}
\item $D_1:$ Each label $y'_i=1$ with probability $1/2+\epsilon$ and $-1$ with probability $1/2-\epsilon$.
\item $D_{-1}:$ Each label $y'_i=1$ with probability $1/2-\epsilon$ and $-1$ with probability $1/2+\epsilon$.
\end{enumerate}
The starting point of our reduction is that a $(1+\eps)$-approximate solution of $\beta^*$ could distinguish between the two biased coins of $D_1$ and $D_{-1}$.
\begin{claim}\label{clm:distinguish_two_dist}
  Let $d=1$, $n' \ge \frac{100 \log 1/\delta}{\epsilon^2}$, and $A'=[1,\ldots,1]^{\top}$ whose dimension is $n' \times 1$. Let $L(\beta)$ for $\beta \in \mathbb{R}$ denote $\|A' \beta - y'\|_1$ for $y' \in \mathbb{R}^{n'}$ and $\beta^*$ be the minimizer of $L(\beta)$.

  If $y'$ is generated from $D_1$, then with probability $1-\frac{\delta}{100}$, we will have $\beta^*=1$ and any $(1+\eps)$-approximation $\tilde{\beta}$ will be positive. Similarly, if $y' \sim D_{-1}$, then with probability $1-\frac{\delta}{100}$  we will have $\beta^*=-1$ and any $(1+\eps)$-approximation $\tilde{\beta}$ will be negative.
\end{claim}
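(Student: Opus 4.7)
\begin{proofof}{Claim~\ref{clm:distinguish_two_dist} (proof proposal)}
The plan is to reduce the claim to a Hoeffding-type concentration of the sign counts. Let $n_+ = |\{i: y'_i=1\}|$ and $n_- = n' - n_+$. Since every entry of $y'$ lies in $\{\pm 1\}$ and $A'\beta=\beta\cdot\mathbf{1}$, we can write the objective explicitly as a piecewise linear function
\[
L(\beta)=n_+|\beta-1|+n_-|\beta+1|,
\]
which is increasing on $[1,\infty)$, affine with slope $n_--n_+$ on $[-1,1]$, and decreasing on $(-\infty,-1]$. In particular, under the event $n_+>n_-$, the minimizer on the line is attained at $\beta^*=1$ with $L(\beta^*)=2n_-$, and for every $\beta\le 0$ we have $L(\beta)\ge L(0)=n_++n_-$.

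The first step in the argument is to control $n_+$ and $n_-$ under $D_1$. Since $\E[n_+]=n'(1/2+\epsilon)$ and each $y'_i$ is independently $\pm 1$, Hoeffding's inequality gives $\Pr[\,|n_+-\E n_+|\ge \epsilon n'/2\,]\le 2\exp(-\epsilon^2 n'/2)$. Plugging in $n'\ge 100\log(1/\delta)/\epsilon^2$, this probability is at most $\delta/100$. On the complementary event we simultaneously have
\[
n_+\ge n'\bigl(\tfrac12+\tfrac{\epsilon}{2}\bigr),\qquad n_-\le n'\bigl(\tfrac12-\tfrac{\epsilon}{2}\bigr),
\]
so in particular $n_+>n_-$, which by the previous paragraph yields $\beta^*=1$.

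The second step is to rule out non-positive $\tilde\beta$. Suppose $\tilde\beta\le 0$; then $L(\tilde\beta)\ge n_++n_-$, so it suffices to show $n_++n_->(1+\epsilon)\cdot 2n_-$ on the concentration event, i.e.\ $n_+-(1+2\epsilon)n_->0$. Substituting the two bounds above,
\[
n_+-(1+2\epsilon)n_-\ge n'\bigl(\tfrac12+\tfrac{\epsilon}{2}\bigr)-(1+2\epsilon)\,n'\bigl(\tfrac12-\tfrac{\epsilon}{2}\bigr) = n'\epsilon^2>0,
\]
so every $(1+\epsilon)$-approximation $\tilde\beta$ must satisfy $\tilde\beta>0$. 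The case $y'\sim D_{-1}$ is identical by the symmetry $\beta\leftrightarrow-\beta$ and $y'\leftrightarrow -y'$, yielding $\beta^*=-1$ and $\tilde\beta<0$. The only place any real care is needed is getting the Hoeffding constants to line up with the prescribed $n'\ge 100\log(1/\delta)/\epsilon^2$, which is routine; there is no real technical obstacle beyond these piecewise-linear computations and one scalar concentration bound.
\end{proofof}
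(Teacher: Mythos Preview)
Your proposal is correct and follows essentially the same approach as the paper: both use a Hoeffding-type bound to get $n_+\ge(1/2+\epsilon/2)n'$ (hence $n_-\le(1/2-\epsilon/2)n'$), deduce $\beta^*=1$ with $L(\beta^*)=2n_-$, and then show any $\tilde\beta\le 0$ has $L(\tilde\beta)\ge n'>(1+\epsilon)\cdot 2n_-$. Your piecewise-linear observation that $L(\beta)\ge L(0)$ for $\beta\le 0$ is a slightly cleaner way to avoid the paper's case split into $\tilde\beta\in(-1,0]$ and $\tilde\beta\le -1$, but the substance is identical.
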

\begin{proof}
Let us consider $y'$ generated from $D_1$ of dimension $n' \ge \frac{100 \log 1/\delta}{\epsilon^2}$. Let $n_1$ and $n_{-1}$ denote the number of $1$s and $-1$s in $y'$. From the standard concentration bound, with probability $1-\frac{\delta}{100}$, we have $n_1 \ge (1/2+\epsilon/2)n'$ for a random string $y'$ from $D_1$. So for this 1-dimensional problem, $\beta^*$ minimizing $\|A' \beta-y'\|_1$ will equal $1$ with loss $L(\beta^*):=\|A' \beta^* - y'\|_1 = 2 \cdot n_{-1} \le (1-\epsilon)n$. Let $\tilde{\beta}$ be any $(1+\eps)$-approximation s.t. $\|A' \tilde{\beta}- y'\|_1 \le (1+\epsilon) \cdot \|A' \beta^* - y\|_1$. 

We show that $\tilde{\beta}$ must be positive given $n_1 \ge (1/2+\epsilon/2)n$. If $\tilde{\beta} \in (-1,0]$, then the loss of $\tilde{\beta}$ is given by
\[
n_1 \cdot (1-\tilde{\beta}) + n_{-1} \cdot (1+\tilde{\beta}) = n - \tilde{\beta} (n_1 - n_{-1}) \ge n\geq \frac{L(\beta^*)}{1-\epsilon}.
\]
Otherwise if $\tilde{\beta} \le -1$, the $\ell_1$ loss of $\tilde{\beta}$ becomes
\[
n_1 \cdot (1-\tilde{\beta}) + n_{-1} \cdot (-\tilde{\beta} - 1)=-\tilde{\beta} \cdot n + n_1 - n_{-1} \ge n\geq \frac{L(\beta^*)}{1-\epsilon}.
\]
In both cases, $L(\tilde{\beta})$ does not give a $(1+\epsilon)$-approximation of $L(\beta^*)$. Similarly when $y$ is generated from $D_{-1}$ and $n_{-1} \ge (1/2+\epsilon/2)n$, then any $\tilde{\beta}$ that is a $(1+\eps)$-approximation of $\beta^*$, must be negative.
\end{proof}

Given the connection between the 1-dimensional subproblem and the biased coin test in Claim~\ref{clm:distinguish_two_dist}, the $\Omega(\frac{\log 1/\delta}{\eps^2})$ lower bound in the main theorem comes from the classical lower bound for distinguishing between biased coins.
\begin{lemma}\label{lem:information_lower_dim_1}
Let us consider the following game between Alice and Bob:
\begin{enumerate}
\item Let Alice generate $\alpha \sim \{\pm 1\}$ randomly.
\item Alice generates $y \in \{\pm 1\}^{n'}$ from $D_\alpha$ defined above and allows Bob to query $m'$ entries in $y$.
\item After all queries, Bob wins the game only if he predicts $\alpha$ correctly.
\end{enumerate}
If Bob wants to win this game with probability $1-\delta$, he needs to make $m'=\Omega(\frac{\log 1/\delta}{\eps^2})$ queries.
\end{lemma}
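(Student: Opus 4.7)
The plan is to reduce the lemma to a standard information-theoretic lower bound for distinguishing two product Bernoulli distributions under (possibly adaptive) sampling, and to obtain the sharper $\log(1/\delta)$ factor via the Bretagnolle--Huber inequality rather than Pinsker's inequality. Since we are lower-bounding a randomized strategy over a randomized input, I would first argue that it suffices to consider a deterministic adaptive querying strategy for Bob: averaging over his private coins can only preserve or reduce his success probability. Model his interaction as producing a transcript $T = \bigl((i_1,y_{i_1}),\ldots,(i_{m'},y_{i_{m'}})\bigr)$ followed by a decision $\hat\alpha(T) \in \{\pm 1\}$, and let $Q_1$ and $Q_{-1}$ denote the distributions of $T$ under $\alpha = 1$ and $\alpha = -1$, respectively. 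By the standard reduction between binary hypothesis testing and total variation distance together with the data processing inequality,
\[
\Pr[\hat\alpha \ne \alpha] \;\ge\; \tfrac{1}{2}\bigl(1 - \mathrm{TV}(Q_1,Q_{-1})\bigr),
\]
so it suffices to upper bound $\mathrm{TV}(Q_1,Q_{-1})$ in terms of $m'$ and $\eps$.

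Next, I would control the KL divergence $D_{KL}(Q_1 \| Q_{-1})$ using the chain rule. Even though Bob's queries are adaptive, conditional on the past transcript the next observed label is an independent $\mathrm{Bern}(1/2+\alpha\eps)$ random variable, so the chain rule telescopes and yields
\[
D_{KL}(Q_1 \| Q_{-1}) \;\le\; m' \cdot D_{KL}\bigl(\mathrm{Bern}(1/2+\eps) \,\big\|\, \mathrm{Bern}(1/2-\eps)\bigr) \;=\; O(m' \eps^2),
\]
where the last estimate follows from a Taylor expansion of $(1/2+\eps)\log\tfrac{1/2+\eps}{1/2-\eps} + (1/2-\eps)\log\tfrac{1/2-\eps}{1/2+\eps}$ around $\eps=0$ for $\eps$ bounded away from $1/2$.

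Finally, I would invoke the Bretagnolle--Huber inequality
\[
\mathrm{TV}(Q_1,Q_{-1}) \;\le\; \sqrt{1 - \exp\bigl(-D_{KL}(Q_1\|Q_{-1})\bigr)}.
\]
If Bob achieves error at most $\delta$, the first step gives $\mathrm{TV}(Q_1,Q_{-1}) \ge 1 - 2\delta$, and combining with the Bretagnolle--Huber bound yields $4\delta(1-\delta) \ge 1 - (1-2\delta)^2 \ge e^{-Cm'\eps^2}$. Rearranging produces $m' \ge c \cdot \log(1/\delta)/\eps^2$ for an absolute constant $c > 0$, as required.

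The main obstacle is precisely the final step: a naive application of Pinsker's inequality only yields $m' = \Omega(1/\eps^2)$ and loses the $\log(1/\delta)$ factor, because $\mathrm{TV} \le \sqrt{D_{KL}/2}$ saturates once $\mathrm{TV}$ is close to $1$. The Bretagnolle--Huber inequality is exactly the right tool since its exponential-in-KL form survives the regime $\mathrm{TV} \to 1$. An equivalent route would be a direct likelihood-ratio-test analysis combined with a binomial anti-concentration bound such as Slud's inequality, but Bretagnolle--Huber gives the cleanest one-shot argument and cleanly handles the adaptivity of Bob's queries through the chain rule.
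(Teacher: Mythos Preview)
Your proposal is correct. The paper does not actually prove this lemma: it defers to Theorem~1.5 in \cite{Kleinberg} and merely remarks that the argument goes through KL divergence or total variation distance, which is precisely the route you take. The reduction to a deterministic Bob, the chain-rule bound $D_{KL}(Q_1\|Q_{-1})\le m'\cdot D_{KL}\bigl(\mathrm{Bern}(\tfrac12+\eps)\,\big\|\,\mathrm{Bern}(\tfrac12-\eps)\bigr)=O(m'\eps^2)$ (which is exact here because distinct entries of $y$ are i.i.d., so adaptivity gains nothing beyond fresh samples), and the use of Bretagnolle--Huber rather than Pinsker to retain the $\log(1/\delta)$ factor are all sound and standard. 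One cosmetic remark: the more commonly cited form of Bretagnolle--Huber is $1-\mathrm{TV}(P,Q)\ge\tfrac12\,e^{-D_{KL}(P\|Q)}$, which combined with $\mathrm{TV}\ge 1-2\delta$ yields $e^{-D_{KL}}\le 4\delta$ directly; this gives the same conclusion with one fewer algebraic step than the squared form you wrote.
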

The proof of Lemma~\ref{lem:information_lower_dim_1} follows from KL divergence or the variation distance (a.k.a. statistical distance). We refer to Theorem 1.5 in \cite{Kleinberg} for a proof with minor modifications such as replacing the fair coin by one of bias $1/2 - \eps$.

Next, we discuss how to obtain the $\Omega(d/\eps^2)$ part. Due to the space constraint, we give a high-level overview and defer the formal proof of Theorem~\ref{thm:information_lower_bound} to Appendix~\ref{app:proof_lower}. First of all, for each string in $b \in \{\pm 1\}^d$, we consider the $n$-dimensional distribution of $y$ generated as $D_b=D_{b_1} \circ D_{b_2} \circ \cdots \circ D_{b_d}$. Namely, each chunk of $n/d$ bits of $y$ is generated from $D_1$ or $D_{-1}$ mentioned above. 

The second observation is that when we generate $y \sim D_b$ for a random $b \in \{\pm 1\}^n$, then a $(1+\eps)$-approximation algorithm could decode almost all entries of $b$ except a tiny fraction (w.h.p. based on Claim~\ref{clm:distinguish_two_dist}). Then we show the existence of $b \in \{\pm 1\}^d$ and its flip $b^{(i)}$ on the $i$th coordinate such that: (1) the algorithm could decode entry $i$ in $b$ and $b^{(i)}$ (w.h.p. separately) when $y \sim D_b$ or $D_{b^{(i)}}$; (2) the algorithm makes $O(m/d)$ queries to decode entry $b_i$. Such a pair of $b$ and $b^{(i)}$ provides a good strategy for Bob to win the game in Lemma~\ref{lem:information_lower_dim_1} with a constant high probability using $O(m/d)$ queries, which gives a lower bound of $m$.

\section{Conclusions and Future Directions}
We provided nearly-matching upper and lower bounds for the query
complexity of least absolute deviation regression. In this setting, the learner is
allowed to use a sampling distribution that depends on the unlabeled
data. In the process, we
proposed \emph{\cuc}, a framework for showing sample complexity
guarantees in regression tasks, which includes a correction term that minimizes the effect
of outliers. Further, we extended our results to robust regression with any
$\ell_p$ loss for $p\in(1,2)$. Note that the guarantees in this paper
also apply to data-oblivious sampling, in which case we pay an additional
data-dependent factor in the sample complexity.

Many new directions for future work arise from our results. First of
all, we expect that the query complexity of $\ell_p$ regression
should match the guarantee for $\ell_1$ (our current bounds exhibit
quadratic dependence on the input dimension $d$ for $p>1$, and linear dependence
for $p=1$). Moreover, the query complexity question remains open for a
number of other robust losses, such as the Huber loss and the
Tukey loss. More broadly, we ask whether the {\cuc} framework can be
used to obtain sample complexity bounds that are robust to the presence of
outliers for other settings in
statistical 
learning, e.g., for classification and non-linear regression problems,
and with additive as well as multiplicative error bounds.

\subsection*{Acknowledgments}
We would like to acknowledge DARPA, IARPA, NSF, and ONR via its BRC on
RandNLA for providing partial support of this work.  Our conclusions
do not necessarily reflect the position or the policy of our sponsors,
and no official endorsement should be inferred.


\bibliographystyle{apalike} 
\bibliography{all.bib,../pap.bib}

\appendix

\section{Lewis Weights Bound Importance Weights}\label{sec:Lewis_weight_importance}
We finish the proof of Theorem~\ref{thm:Lewis_weight_importance} in this section. Specifically, given any matrix $A$, we show that Lewis weights $(w_1,\ldots,w_n)$ defined in Equation~\eqref{eq:def_Lewis_weight} bound the importance weights for each row:
\[
\underset{x \in \mathbb{R}^{d}}{\max} \frac{|a_i^{\top} x|^p}{\|A x\|_p^p} \in \left[ d^{-(1-p/2)} \cdot w_i, w_i \right].
\]

\begin{remark}
There are matrices that obtain both upper and lower bounds in the above inequality (up to constants). For example, $A=I$ gives the upper bound.

For the lower bound, consider $A \in \{\pm 1\}^{2^d \times d}$ constituted by $n=2^d$ distinct vectors in $\{\pm 1/\sqrt{d}\}^d$. Then for each $a_i$, we set $x=a_i$ such that $a_i^{\top} x=1$ but $w_i=d/n$ and $\|Ax\|_p^p \approx n \cdot \underset{Z \sim N(0,1/d)}{\E}[|Z|^p] = \Theta(n \cdot d^{-p/2})$.
\end{remark}

To prove Theorem~\ref{thm:Lewis_weight_importance}, we will show the following lemma for the special case where the Lewis weights are almost uniform and make a reduction from the general case to the almost uniform case. Recall that $a \approx_{\alpha} b$ means $a \in [b/\alpha,b \cdot \alpha]$ and $W=\diag(w_1,\ldots,w_n)$ denotes the diagonal Lewis-weight matrix when the matrix $A$ and parameter $p$ are fixed.
\begin{lemma}\label{lem:uniform_lewis_bounds_importance}
Suppose the $\ell_p$ Lewis weights of $A$ are almost uniform: for a parameter $\alpha \ge 1$, $w_i \approx_{\alpha} d/n$. Then for every row $i$, its importance weight satisfies:
\[
\underset{x \in \mathbb{R}^{d}}{\max} \frac{|a_i^{\top} x|^p}{\|A x\|^p_p} \in \left[ \alpha^{-O(1)} \cdot d^{p/2}/n, \alpha^{O(1)} \cdot d/n \right].
\]
\end{lemma}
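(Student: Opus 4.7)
The plan is to use the isotropic reparametrization supplied by the Lewis equation \eqref{eq:def_Lewis_weight}. Set $Q := A^{\top} W^{1-2/p} A$ and define the normalized rows $\tilde{a}_j := w_j^{1/2-1/p}\, Q^{-1/2} a_j$. A direct calculation from \eqref{eq:def_Lewis_weight} gives $\|\tilde{a}_j\|_2^2 = w_j$ and $\sum_j \tilde{a}_j \tilde{a}_j^{\top} = I$. Substituting $z = Q^{1/2} x$ yields $a_j^{\top} x = w_j^{1/p - 1/2}\,\tilde{a}_j^{\top} z$, so
\[
\frac{|a_i^{\top} x|^p}{\|A x\|_p^p} \;=\; \frac{w_i^{1 - p/2}\,|\tilde{a}_i^{\top} z|^p}{\sum_j w_j^{1 - p/2}\,|\tilde{a}_j^{\top} z|^p}.
\]
Under the hypothesis $w_j \approx_{\alpha} d/n$, the prefactors $w_j^{1-p/2}$ can be pulled outside up to a multiplicative factor of $\alpha^{O(1)}$, reducing both directions to estimates on the clean ratio $|\tilde{a}_i^{\top} z|^p / \sum_j |\tilde{a}_j^{\top} z|^p$, where the rows satisfy $\|\tilde{a}_j\|_2^2 \le \alpha \cdot d/n$ and resolve the identity.

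For the upper bound, fix any $z$. Cauchy--Schwarz gives $|\tilde{a}_i^{\top} z| \le \|\tilde{a}_i\|_2 \|z\|_2 \le \sqrt{\alpha d/n}\,\|z\|_2$, and letting $\tau := \max_j |\tilde{a}_j^{\top} z| \le \sqrt{\alpha d/n}\,\|z\|_2$, the fact that $p \le 2$ together with the isotropy identity $\sum_j |\tilde{a}_j^{\top} z|^2 = \|z\|_2^2$ yields
\[
\sum_j |\tilde{a}_j^{\top} z|^p \;=\; \sum_j |\tilde{a}_j^{\top} z|^{p-2}\cdot|\tilde{a}_j^{\top} z|^2 \;\ge\; \tau^{p-2}\,\|z\|_2^2.
\]
Substituting the numerator and denominator bounds into the clean ratio, the $\|z\|_2^p$ factors cancel, and one arrives at $\alpha^{O(1)}\cdot d/n$, uniformly in $z$.

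For the lower bound, I would take the witness $x = Q^{-1/2}\tilde{a}_i$, i.e.\ $z = \tilde{a}_i$. Then $|a_i^{\top} x|^p = w_i^{1+p/2}$ because $\tilde{a}_i^{\top}\tilde{a}_i = \|\tilde{a}_i\|_2^2 = w_i$. For the denominator, H\"older's inequality with conjugate exponents $\tfrac{2}{2-p}$ and $\tfrac{2}{p}$ gives
\[
\sum_j |\tilde{a}_j^{\top} \tilde{a}_i|^p \;\le\; n^{(2-p)/2}\bigg(\sum_j |\tilde{a}_j^{\top} \tilde{a}_i|^2\bigg)^{p/2} \;=\; n^{(2-p)/2}\, w_i^{p/2},
\]
where the last equality again uses $\sum_j \tilde{a}_j\tilde{a}_j^{\top} = I$. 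Combining with $w_i \approx_{\alpha} d/n$ and tracking exponents produces a lower bound of $\alpha^{-O(1)}\cdot d^{p/2}/n$ on this witness, and hence on the maximum.

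The main obstacle is managing the interplay between the $\alpha$-slack and the competing exponents in the weight prefactors $w_j^{1-p/2}$: the bounds are tight only once H\"older is applied in the correct direction on each side. A naive triangle inequality or power-mean step would introduce a stray factor of $d^{1-p/2}$ and blur the $d^{p/2}/n$ versus $d/n$ gap, whose sharpness is witnessed by the matched examples in the remark following Theorem~\ref{thm:Lewis_weight_importance}.
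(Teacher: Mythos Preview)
Your proof is correct, and the core estimates (Cauchy--Schwarz plus $\|v\|_p^p\ge\|v\|_2^2/\|v\|_\infty^{2-p}$ for the upper bound, the aligned witness plus H\"older for the lower bound) are exactly the ones the paper uses. Where you differ is in the setup: the paper assumes without loss of generality that $A^\top A=I$, then invokes a \emph{stability} property of Lewis weights (Claim~\ref{clm:non_uniform_Lewis_w}, relying on Lemma~5.3 of \cite{CP15_Lewis}) to conclude that almost-uniform $\ell_p$ Lewis weights force almost-uniform leverage scores $\|a_i\|_2^2\approx_{\alpha^{4/p-1}} d/n$; only then does it run the Cauchy--Schwarz/H\"older computation. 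Your reparametrization $\tilde a_j=w_j^{1/2-1/p}Q^{-1/2}a_j$ bypasses this detour entirely: the Lewis fixed-point equation \emph{directly} gives $\sum_j\tilde a_j\tilde a_j^\top=I$ and $\|\tilde a_j\|_2^2=w_j$, so you never need the stability lemma or the ``WLOG $A^\top A=I$'' reduction. This makes your argument more self-contained and avoids importing an external result. The trade-off is that the paper's route yields Claim~\ref{clm:non_uniform_Lewis_w} as a standalone byproduct, which it reuses elsewhere (e.g., for Fact~\ref{fact:almost_uniform} in the $\ell_p$ analysis), whereas your approach is tailored to this lemma.
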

In particular, if $\alpha=1$ (the Lewis weights are uniform), this indicates the importance weights are uniformly upper bounded. We defer the proof to Appendix~\ref{sec:almost_uniform_Lewis}. We remark that Claim~\ref{clm:non_uniform_Lewis_w} in Appendix~\ref{sec:almost_uniform_Lewis} shows that when the statistical leverage scores are almost uniform --- $a_i^{\top} (A^{\top} A)^{-1} a_i \approx_{\alpha} d/n$ for some $\alpha$, the $\ell_p$ Lewis weights have $w_i \approx_{\alpha^{O_p(1)}} d/n$ for all $i \in [n]$ and any $p \in [1,2]$. This bounds the sample complexity of uniform sampling in terms of $\alpha$ by plugging $\gamma=\alpha^{O(1)}$ in Theorem~\ref{thm:concentration_contraction} and Theorem~\ref{thm:property_ellp}.

While this could give a bound on the importance weights in terms of the Lewis weights for the non-uniform case, we will use the following properties to obtain the tight bound in Theorem~\ref{thm:Lewis_weight_importance} via a reduction.
\begin{claim}\label{clm:split_Lewis_weight}
Given $A \in \mathbb{R}^{n \times d}$ whose Lewis weights are $(w_1,\ldots,w_n)$, let $A' \in \mathbb{R}^{(n+k-1) \times d}$ be the matrix of splitting one row, say the last row $a_n$, into $k$ copies: $a'_i=a_i$ for $i<n$ and $a'_i=a_n/k^{1/p}$ for $i \ge n$. Then the Lewis weights of $A'$ are $(w_1,\ldots,w_{n-1},w_n/k,\ldots,w_n/k)$.
\end{claim}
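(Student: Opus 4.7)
The plan is to exploit the uniqueness of Lewis weights (as stated after \eqref{eq:def_Lewis_weight}): it suffices to exhibit a set of weights for $A'$ satisfying the defining equation, and then the weights must equal those proposed. So I will take the candidate $w'_i = w_i$ for $i<n$ and $w'_i = w_n/k$ for $i \in \{n,\ldots,n+k-1\}$, and verify the system \eqref{eq:def_Lewis_weight}.

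The key computation is to show that the Gram-like matrix is unchanged by the split, i.e., that
\[
M' := A'^{\top} \diag(w'_1,\ldots,w'_{n+k-1})^{1-2/p} A' = A^{\top} W^{1-2/p} A =: M.
\]
This follows by writing each side as a weighted sum of rank-one outer products. The first $n-1$ terms coincide trivially, while the $k$ copies of the split row each contribute
\[
(w_n/k)^{1-2/p} \cdot (a_n/k^{1/p})(a_n/k^{1/p})^{\top} = k^{-(1-2/p)} \cdot k^{-2/p} \cdot w_n^{1-2/p} \cdot a_n a_n^{\top}.
\]
Summing $k$ such copies yields $k \cdot k^{-1} \cdot w_n^{1-2/p} a_n a_n^\top = w_n^{1-2/p} a_n a_n^{\top}$, which is exactly the contribution of the original row $a_n$ to $M$. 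Hence $M'=M$.

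With this identity in hand, verifying the Lewis weight equations is immediate. For $i<n$, we have $a'_i{}^{\top} (M')^{-1} a'_i = a_i^{\top} M^{-1} a_i = w_i^{2/p} = (w'_i)^{2/p}$ by the Lewis weight equation for $A$. For each of the $k$ split copies,
\[
a'_i{}^{\top} (M')^{-1} a'_i = k^{-2/p} \cdot a_n^{\top} M^{-1} a_n = k^{-2/p} \cdot w_n^{2/p} = (w_n/k)^{2/p} = (w'_i)^{2/p},
\]
as required. By the uniqueness of the Lewis weights of $A'$, these must be the Lewis weights.

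I do not expect any real obstacle here: the whole argument is a direct verification, and the only mild observation is that the two scalings (the $k^{1/p}$ rescaling of the row and the $1/k$ rescaling of the weight) are exactly matched so that $M$ is preserved and the quadratic form picks up the correct $k^{-2/p}$ factor. If one wanted to avoid invoking uniqueness explicitly, one could instead use the variational/fixed-point characterization of Lewis weights from \cite{CP15_Lewis}, but the uniqueness route is cleaner.
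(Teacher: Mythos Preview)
Your proposal is correct and follows essentially the same approach as the paper: invoke uniqueness of Lewis weights, show that $A'^{\top}\diag(w')^{1-2/p}A' = A^{\top}W^{1-2/p}A$ via the rank-one expansion, and then verify the defining equation for each row. Your write-up is in fact slightly more explicit than the paper's, which only spells out the split-row case and leaves $i<n$ implicit.
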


And the same property holds for the importance weight.
\begin{claim}\label{clm:split_importance}
Given $A \in \mathbb{R}^{n \times d}$ whose importance weights are $(u_1,\ldots,u_n)$, let $A' \in \mathbb{R}^{(n+k-1) \times d}$ be the matrix of splitting the last row $a_n$ into $k$ copies: $a'_i=a_i$ for $i<n$ and $a'_i=a_n/k^{1/p}$ for $i \ge n$. Then the importance weights of $A'$ are $(u_1,\ldots,u_{n-1},u_n/k,\ldots,u_n/k)$.
\end{claim}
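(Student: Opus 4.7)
The plan is to unwind the definition of the importance weight and observe that splitting a row in this specific (norm-preserving) way leaves the denominator $\|A\beta\|_p^p$ of the importance weight supremum invariant. All of the work is then reduced to tracking what happens to the numerator row by row.

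First I would show that $\|A'\beta\|_p^p = \|A\beta\|_p^p$ for every $\beta\in\mathbb{R}^d$. This is a direct computation: the rows of $A'$ indexed by $i<n$ contribute the same terms $|a_i^\top\beta|^p$ as in $A$, while the $k$ copies of $a_n/k^{1/p}$ contribute
\[
\sum_{j=1}^{k}\bigl|(a_n/k^{1/p})^\top\beta\bigr|^p \;=\; k\cdot\frac{|a_n^\top\beta|^p}{k} \;=\; |a_n^\top\beta|^p,
\]
matching the single last-row contribution in $A$. This is precisely the reason for the $k^{1/p}$ scaling in the split.

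Next I would compute the importance weight of each row of $A'$. For $i<n$, the row $a'_i=a_i$ is unchanged, so
\[
u'_i \;=\; \sup_{\beta} \frac{|a_i^\top\beta|^p}{\|A'\beta\|_p^p} \;=\; \sup_{\beta} \frac{|a_i^\top\beta|^p}{\|A\beta\|_p^p} \;=\; u_i.
\]
For each of the $k$ split rows $a'_i=a_n/k^{1/p}$ (with $i\ge n$), the numerator is $|a_n^\top\beta|^p/k$, and the denominator is still $\|A\beta\|_p^p$, giving
\[
u'_i \;=\; \sup_{\beta}\frac{|a_n^\top\beta|^p/k}{\|A\beta\|_p^p} \;=\; \frac{u_n}{k}.
\]

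Since there is no real obstacle here (no optimization or approximation argument is needed once the norm-preservation identity is in hand), the argument is essentially a single-line check once the splitting convention is set up correctly. The only thing to be careful about is the $k^{1/p}$ normalization in the definition of the split rows, which is exactly what makes the $\ell_p^p$ sum invariant; this is the same normalization used in the companion Claim~\ref{clm:split_Lewis_weight} for Lewis weights, and the proofs run in parallel.
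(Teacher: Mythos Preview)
Your proposal is correct and follows essentially the same route as the paper: establish $\|A'\beta\|_p^p=\|A\beta\|_p^p$ from the $k^{1/p}$ normalization, then read off $u'_i=u_i$ for $i<n$ and $u'_i=u_n/k$ for the split rows directly from the definition. The paper's proof is just a terser version of the same computation.
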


We defer the proofs of these two claims to Section~\ref{sec:proofs_claims_splitting} and finish the proof of Theorem~\ref{thm:Lewis_weight_importance}.

\begin{proofof}{Theorem~\ref{thm:Lewis_weight_importance}}
Suppose the Lewis weights of $A$ are $(w_1,\ldots,w_n)$ and the importance weights are $(u_1,\ldots,u_n)$. Let $\eps$ be a tiny constant and $N_1=\lceil w_1/\eps \rceil,\ldots,N_n=\lceil w_n/\eps \rceil$. We define $A' \in \mathbb{R}^{N \times d}$ with $N=\sum_i N_i$ as
\[
\begin{bmatrix} 
a_1/N_1^{1/p} \\
\vdots \\
a_1/N_1^{1/p} \\
a_2/N_2^{1/p} \\
\vdots \\
a_n/N_n^{1/p}
\end{bmatrix}
\]
where there are $N_1$ rows of $a_1/N_1^{1/p}$, $N_2$ rows of $a_2/N_2^{1/p}$, and so on. From Claim~\ref{clm:split_Lewis_weight}, we have the Lewis weights of $A'$ are
\[
\left( \underbrace{w_1/N_1,\ldots,w_1/N_1}_{N_1},\underbrace{w_2/N_2,\ldots,w_2/N_2}_{N_2},\ldots,\underbrace{w_n/N_n,\ldots,w_n/N_n}_{N_n} \right).
\]
Since $w_1,\ldots,w_n$ are fixed, we know $w_i/N_i=w_i/\lceil w_i/\eps \rceil \in [\eps/(1+\eps/w_1),\eps]$. So let $\alpha$ be the parameter satisfying $w_i/N_i \approx_{\alpha} d/N$. If $w_1,\ldots,w_n$ are multiples of $\eps$, $d/N=\epsilon$ and $\alpha=1$. Since $\lim_{\eps \rightarrow 0} d/N=\eps$ (by the property $\sum_i w_i=d$ and the definition of $N$), we know $\lim_{\eps \rightarrow 0} \alpha=1$.

From Claim~\ref{clm:split_importance}, we have the importance weights of $A'$ are
\[
\left( \underbrace{u_1/N_1,\ldots,u_1/N_1}_{N_1},\underbrace{u_2/N_2,\ldots,u_2/N_2}_{N_2},\ldots,\underbrace{u_n/N_n,\ldots,u_n/N_n}_{N_n} \right).
\]

By Lemma~\ref{lem:uniform_lewis_bounds_importance}, we have $w_i/N_1 \in [\alpha^{-O(1)} \cdot d^{-(1-p/2)} \cdot u_1/N_1  , \alpha^{O(1)} \cdot u_1/N_1]$ for each $i$. By taking $\eps \rightarrow 0$ and $\alpha \rightarrow 1$, this shows both upper and lower bounds.
\end{proofof}

\subsection{Proofs of Claim~\ref{clm:split_Lewis_weight} and~\ref{clm:split_importance}}\label{sec:proofs_claims_splitting}
To prove Claim~\ref{clm:split_Lewis_weight}, we use the property that Lewis weights constitute the unique diagonal matrix satisfying \eqref{eq:def_Lewis_weight} \citep{CP15_Lewis}. So we only need to verify $(w_1,\ldots,w_{n-1},w_n/k,\ldots,w_n/k)$ satisfying \eqref{eq:def_Lewis_weight} for $A'$. First of all, we show the inverse in the equation is the same:
\begin{align*}
A'^{\top} (W')^{1-2/p} A' &=\sum_{i=1}^{n+k-1} (w'_i)^{1-2/p} \cdot (a'_i) \cdot (a'_i)^{\top} \\
& =\sum_{i=1}^{n-1} (w_i)^{1-2/p} a_i \cdot a_i^{\top} + \sum_{i=1}^k (w_n/k)^{1-2/p} \cdot (a_n/k^{1/p})\cdot (a_n/k^{1/p})^{\top} \\
& = \sum_{i=1}^{n-1} (w_i)^{1-2/p} a_i \cdot a_i^{\top} + k^{2/p} \cdot w_n^{1-2/p} \cdot (a_n/k^{1/p})\cdot (a_n/k^{1/p})^{\top} = \sum_{i=1}^{n} (w_i)^{1-2/p} a_i \cdot a_i^{\top}.
\end{align*}
Then it is straightforward to verify $(a_n/k^{1/p})^{\top} (A'^{\top} W'^{-1} A')^{1-2/p} \cdot (a_n/k^{1/p})=(w_n/k)^{2/p}$.

Next We prove Claim~\ref{clm:split_importance}. We note that for any $x$, we always have $\|A x\|^p_p=\|A' x\|^p_p$ and $|(a'_i)^{\top} x|^p=|a_i^{\top} x|^p$ for any $i<n$. These two indicate $u_i=u'_i$ for $i<n$.

Now we prove $u_n=u'_n/k$. For any $x$, we still have $\|A x\|_p^p=\|A' x\|_p^p$ but $|(a'_n)^{\top} x|^p=|a_n^{\top} x|^p/k$. These two indicate $u'_n=u_n/k$.

\subsection{Almost Uniform Lewis Weights}\label{sec:almost_uniform_Lewis}

Without loss of generality, we assume $A^{\top} A=I$. We recall the definition of leverage scores that will be used in this proof. Given $p=2$ and $A$ with rows $a_1,\ldots,a_n$, the leverage score of $a_i$ is $a_i^{\top} (A^{\top} A)^{-1} a_i$. Since we assume  $A^{\top} A = I$, this simplifies the score to $\|a_i\|_2^2$. 

Let us start with the case of the uniform Lewis weights for ease of exposition. Because $w_1=w_2=\cdots=w_n=d/n$, we simplify the equation of the Lewis weights
\[
a_i^{\top} \cdot (A^{\top} W^{-1} A)^{1-2/p} a_i = w_i^{2/p}
\]
to $a_i^{\top} \cdot w_i^{2/p-1} \cdot I \cdot a_i=w_i^{2/p}$. This indicates that the leverage score $\|a_i\|^2_2=w_i=d/n$ is also uniform. 

Then we show the upper bound $\frac{|a_i^{\top} x|^p}{\|A x\|_p^p} \le d/n$. By Cauchy-Schwartz,
\[
|a_i^{\top} x| \le \|a_i\|_2 \cdot \|x\|_2 = \sqrt{d/n} \cdot \|x\|_2.
\]
Next we lower bound $\|A x\|_p^p$ by
\begin{equation}\label{eq:p_power_norm}
\|A x\|_p^p \cdot \|A x\|^{2-p}_{\infty} \ge \|A x\|_2^2.
\end{equation}
Because $A^{\top} A = I$, the right hand side is $\|x\|_2^2$. So 
\[
\|A x\|_p^p \ge \frac{\|x\|_2^2}{(\|x\|_2 \cdot \sqrt{d/n})^{2-p}}=\frac{\|x\|^p_2}{(d/n)^{1-p/2}}.
\]
We combine the upper bound and lower bound to obtain
\[
\frac{|a_i^{\top} x|^p}{\|A x\|_p^p} \le \frac{(\sqrt{d/n} \cdot \|x\|_2)^p}{\|x\|^p_2/(d/n)^{1-p/2}}=d/n.
\]
The lower bound $\frac{|a_i^{\top} x|^p}{\|A x\|_p^p} \ge \frac{d^{p/2}}{n}$ follows from choosing $x=a_i$ and replacing \eqref{eq:p_power_norm} by the Holder's inequality $\|Ax\|_p^p \le (n)^{\frac{2-p}{2}} \cdot \|Ax\|_2^{p}$. 

In the non-uniform case, we will use the stability of Lewis weights (Definition 5.1 and 5.2 in \cite{CP15_Lewis}) to finish the proof. Consider any $\ov{W}=diag[\ov{w}_1,\ldots,\ov{w}_n]$ satisfies
\[
\forall i \in [n], a_i^{\top} \left( A^{\top} \ov{W}^{1-2/p} A\right)^{-1} a_i \approx_\alpha \ov{w}_i^{2/p}.
\]
Lemma 5.3 in \cite{CP15_Lewis} shows that for each $i\in[n]$, $\ov{w}_i \approx_{\alpha^{c_p}} w_i$ for the Lewis weights $(w_1,\ldots,w_n)$ where the constant $c_p=\frac{p/2}{1-|p/2-1|}$.

Back to our problem, if the $\ell_p$ Lewis weights of $A$ are almost uniform, we show the uniformity for its $\ell_q$ Lewis weights.

\begin{claim}\label{clm:non_uniform_Lewis_w}
Given $p$ and $q$ less than $4$, let $A \in \mathbb{R}^{n \times d}$ be a matrix whose $\ell_p$ Lewis weight $(w_1,\ldots,w_n)$ satisfies $w_i \approx_{\alpha} d/n$ for each $i \in [n]$. Then the $\ell_q$ Lewis weight of $A$ satisfies $w'_i \approx_{\alpha^C} d/n$ for constant $C=(4/p-1) \cdot c_q$. In particular, when $p=2$ and $q=1$, $C=1$.
\end{claim}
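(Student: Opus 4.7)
The plan is to reduce to the stability of Lewis weights (Lemma~5.3 of \cite{CP15_Lewis}), which states that any candidate weight sequence $(\overline{w}_i)$ that approximately satisfies the $\ell_q$ Lewis-weight fixed-point equation must itself be close to the true $\ell_q$ Lewis weights, with the slack amplified by the exponent $c_q$. It therefore suffices to show that the uniform weights $\overline{w}_i = d/n$ satisfy the $\ell_q$ Lewis-weight equation for $A$ up to a multiplicative slack of $\alpha^{4/p-1}$; the stability lemma will then automatically yield $w'_i \approx_{\alpha^{(4/p-1) c_q}} d/n$.

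First, I would reduce to the case $A^\top A = I$ by a change of basis (right-multiplying $A$ by $(A^\top A)^{-1/2}$ preserves Lewis weights of every order as well as leverage scores, since the Lewis-weight equation is invariant under invertible right-multiplication). Under this normalization, the leverage score $a_i^\top (A^\top A)^{-1} a_i$ simplifies to $\|a_i\|_2^2$, so it can be read off directly from the $\ell_p$ Lewis-weight equation. Since $w_i \approx_{\alpha} d/n$, the diagonal matrix $W^{1-2/p}$ is entrywise $\alpha^{|1-2/p|}$-close to $(d/n)^{1-2/p} I$, hence the PSD matrix $A^\top W^{1-2/p} A$ is within the same factor of $(d/n)^{1-2/p} \, A^\top A = (d/n)^{1-2/p} I$. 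Inverting and comparing the resulting expression $a_i^\top (A^\top W^{1-2/p} A)^{-1} a_i \approx (d/n)^{2/p - 1} \|a_i\|_2^2$ against the right-hand side $w_i^{2/p} \approx_{\alpha^{2/p}} (d/n)^{2/p}$ yields
\[
\|a_i\|_2^2 \approx_{\alpha^{|1-2/p|+2/p}} d/n = \alpha^{4/p - 1}\cdot (d/n),
\]
where in the final step I use $|1-2/p|+2/p = 4/p-1$ for $p\le 2$.

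Second, I plug these near-uniform leverage scores into the $\ell_q$ Lewis-weight equation with the candidate $\overline{w}_i = d/n$. Using $\overline{W} = (d/n) I$,
\[
a_i^\top (A^\top \overline{W}^{1-2/q} A)^{-1} a_i \;=\; (d/n)^{2/q - 1} \cdot \|a_i\|_2^2 \;\approx_{\alpha^{4/p-1}}\; (d/n)^{2/q} \;=\; \overline{w}_i^{2/q}.
\]
Thus $(\overline{w}_i)$ satisfies the $\ell_q$ fixed-point equation up to slack $\alpha^{4/p-1}$. Applying the stability lemma (Lemma~5.3 of \cite{CP15_Lewis}) to the $\ell_q$ setting gives the true $\ell_q$ Lewis weights $w'_i \approx_{(\alpha^{4/p-1})^{c_q}} \overline{w}_i = d/n$, i.e., the exponent $C = (4/p-1)\, c_q$ as claimed. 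In the special case $p=2$, $q=1$ one has $4/p - 1 = 1$ and $c_q = \frac{1/2}{1-|1/2-1|} = 1$, so $C = 1$.

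The conceptual content is modest: the argument is essentially bookkeeping. The main thing to be careful about is tracking exponents on $\alpha$ when passing between the entrywise-diagonal approximation of $W^{1-2/p}$, the induced PSD approximation of $A^\top W^{1-2/p} A$ and its inverse, and finally the scalar quadratic form against $a_i$; this is where the exponent $|1-2/p|+2/p$ arises. Once the arithmetic of exponents is handled, the conclusion is a direct invocation of the Lewis-weight stability lemma.
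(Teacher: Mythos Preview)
Your proposal is correct and follows essentially the same route as the paper: sandwich $A^\top W^{1-2/p} A$ using $w_i \approx_\alpha d/n$ to deduce that the leverage scores satisfy $a_i^\top (A^\top A)^{-1} a_i \approx_{\alpha^{4/p-1}} d/n$, then plug the uniform candidate $\overline{w}_i = d/n$ into the $\ell_q$ fixed-point equation and invoke the stability lemma of \cite{CP15_Lewis}. The only cosmetic difference is your preliminary normalization to $A^\top A = I$, which the paper omits but which changes nothing.
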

\begin{proof}
Since $w_i \approx_{\alpha} d/n$, $\alpha^{-1} d/n \cdot I \preceq W \preceq \alpha d/n \cdot I$. We plug this sandwich-bound into $A^{\top} W^{1-2/p} A$, which becomes between $(\alpha d/n)^{1-2/p} \cdot A^{\top} A$ and $(\alpha^{-1} d/n)^{1-2/p} \cdot A^{\top} A$. Similarly, we bound its inverse $(A^{\top} W^{1-2/p} A)^{-1}$ by
\[
(\alpha^{-1} d/n)^{2/p - 1} \cdot (A^{\top} A)^{-1} \preceq (A^{\top} W^{1-2/p} A)^{-1} \preceq (\alpha d/n)^{2/p-1} \cdot (A^{\top} A)^{-1}.
\] From the definition of $\ell_p$ Lewis weights $a_i^{\top} \left( A^{\top} W^{1-2/p} A\right)^{-1} a_i=w_i^{2/p}$, we have
\[
(\alpha^{-1} d/n)^{2/p - 1} \cdot a_i^{\top} (A^{\top} A)^{-1} a_i \le w_i^{2/p} \le (\alpha d/n)^{2/p  - 1} \cdot a_i^{\top} (A^{\top} A)^{-1} a_i.
\]
We combine the lower bound above with the property $w_i \approx_{\alpha} d/n$ to upper bound the leverage score $a_i^{\top} (A^{\top} A)^{-1} a_i$ by
\[
(\alpha \cdot d/n)^{2/p} \cdot (\alpha)^{2/p - 1} \cdot (d/n)^{1 - 2/p}=\alpha^{4/p - 1} \cdot d/n.
\]
Similarly, we lower bound the leverage score $a_i^{\top} (A^{\top} A)^{-1} a_i$ by $\alpha^{1-4/p} \cdot d/n$. These two imply the leverage scores are almost uniform: $a_i^{\top} (A^{\top} A)^{-1} a_i \approx_{\alpha^{4/p-1}} d/n$. So for the $\ell_q$ Lewis weights, we approximate it by $\ov{W}=d/n \cdot I$ and get
\[
a_i^{\top} \left( A^{\top} \ov{W}^{1-2/q} A \right)^{-1} a_i = (d/n)^{2/q-1} \cdot a_i^{\top} (A^{\top} A)^{-1} a_i \approx_{\alpha^{4/p-1}} (d/n)^{2/q}.
\]
From the stability of the Lewis weight, we know the actual $\ell_q$ Lewis weights $(w'_1,\ldots,w'_n)$ satisfy $w'_i \approx_{\alpha^{(4/p-1)\cdot c_q}} d/n$
\end{proof}

Now we restate Lemma~\ref{lem:uniform_lewis_bounds_importance} here.
\begin{lemma}\label{lem:non_uniform_import_p}
Given $p$, let the matrix $A$ satisfy (1) $A^{\top} A=I$ and (2) its Lewis weights $w_i \approx_\alpha d/n$ for each row $i$. We have the $\ell_p$ importance weights of $A$ satisfy
\[
\max_x \frac{|a_i^{\top} x|^p}{\|Ax\|_p^p} \in \left[ \alpha^{-C p/2} \cdot d^{p/2}/n , \alpha^{C} \cdot d/n \right]
\]
for $C=4/p-1$.
\end{lemma}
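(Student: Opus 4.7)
The plan is to extend the uniform-case argument sketched for $\alpha=1$ in the excerpt, tracking the $\alpha$-factors at each step. Without loss of generality assume $A^\top A = I$ (both importance weights and Lewis weights are invariant under a change of basis in $\R^d$), so the $i$th leverage score is simply $\|a_i\|_2^2$. The first step would be to convert near-uniformity of the Lewis weights to near-uniformity of the leverage scores. Applying Claim~\ref{clm:non_uniform_Lewis_w} with $q=2$ (noting that $c_2=1$ in that claim's notation) turns the hypothesis $w_i \approx_\alpha d/n$ into $\|a_i\|_2^2 \approx_{\alpha^C} d/n$ for $C = 4/p-1$.

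For the upper bound on $|a_i^\top x|^p / \|Ax\|_p^p$, I would apply Cauchy--Schwarz to get $|a_i^\top x|^p \le \|a_i\|_2^p \|x\|_2^p \le \alpha^{Cp/2}(d/n)^{p/2}\|x\|_2^p$. For the denominator, I would reuse the elementary bound $\|Ax\|_p^p \cdot \|Ax\|_\infty^{2-p} \ge \|Ax\|_2^2 = \|x\|_2^2$ (valid for $p \le 2$ and using $A^\top A = I$), paired with the coordinate-wise estimate $\|Ax\|_\infty \le (\max_j \|a_j\|_2)\|x\|_2 \le \alpha^{C/2}\sqrt{d/n}\,\|x\|_2$. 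Rearranging yields $\|Ax\|_p^p \ge \alpha^{-C(2-p)/2}\|x\|_2^p/(d/n)^{1-p/2}$. Multiplying the two bounds, the exponents $\alpha^{Cp/2}$ and $\alpha^{C(2-p)/2}$ combine to $\alpha^C$, and the powers of $d/n$ collapse to $d/n$, producing the claimed $\alpha^C \cdot d/n$.

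For the lower bound, it suffices to exhibit one good $x$. I would choose $x = a_i$, so that $|a_i^\top x|^p = \|a_i\|_2^{2p}$, while Hölder's inequality on an $n$-dimensional vector gives $\|Ax\|_p^p \le n^{1-p/2}\|Ax\|_2^p = n^{1-p/2}\|a_i\|_2^p$ (again using $A^\top A = I$). The ratio simplifies to $\|a_i\|_2^p / n^{1-p/2}$, and substituting the lower bound $\|a_i\|_2^p \ge \alpha^{-Cp/2}(d/n)^{p/2}$ produces exactly $\alpha^{-Cp/2} \cdot d^{p/2}/n$.

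I do not anticipate a real technical obstacle here: once Claim~\ref{clm:non_uniform_Lewis_w} is invoked to supply leverage-score uniformity, everything reduces to the $\alpha=1$ sketch already in the excerpt. The only delicate point is bookkeeping of signs in the exponent of $\alpha$, namely that the upper-bound derivation uses $\|a_i\|_2 \le \alpha^{C/2}\sqrt{d/n}$ (both in the Cauchy--Schwarz step and in controlling $\|Ax\|_\infty$), whereas the lower-bound derivation uses $\|a_i\|_2 \ge \alpha^{-C/2}\sqrt{d/n}$.
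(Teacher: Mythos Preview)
Your proposal is correct and follows essentially the same argument as the paper: invoke Claim~\ref{clm:non_uniform_Lewis_w} with $q=2$ to convert Lewis-weight near-uniformity into leverage-score near-uniformity with exponent $C=4/p-1$, then reuse the $\alpha=1$ sketch (Cauchy--Schwarz plus $\|Ax\|_p^p\|Ax\|_\infty^{2-p}\ge\|Ax\|_2^2$ for the upper bound, the choice $x=a_i$ plus H\"older for the lower bound) while tracking the $\alpha$-factors. The bookkeeping of exponents you flag is the only thing to watch, and you have it right.
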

\begin{proof}
For the upper bound, we have $|a_i^{\top} x| \le \|a_i\|_2 \cdot \|x\|_2$ for all $i$. In particular, this implies $\|Ax\|_{\infty} \le \sqrt{\alpha^{C} \cdot d/n} \cdot \|x\|_2$ for $C=4/p-1$ after plugging $q=2$ into Claim~\ref{clm:non_uniform_Lewis_w} to bound $\|a_i\|_2^2$ by the leverage score. Then the rest of the proof is the same as the uniform case shown in the beginning of this section: We lower bound 
\[
\|Ax\|_p^p \ge \|Ax\|_2^2/\|Ax\|^{2-p}_{\infty} = \|x\|_2^2/ (\sqrt{\alpha^{C} \cdot d/n} \cdot \|x\|_2)^{2-p}.
\]
Now we upper bound $\max_x \frac{|a_i^{\top} x|^p}{\|Ax\|_p^p} $ by
\[
\frac{(\sqrt{\alpha^{C} \cdot d/n} \cdot \|x\|_2)^p}{\|x\|_2^2/ (\sqrt{\alpha^{C} \cdot d/n} \cdot \|x\|_2)^{2-p}} \le \alpha^{C} \cdot d/n.
\]

For the lower bound, we choose $x=a_i$ such that $|a_i^{\top} x|=\|a_i\|_2 \cdot \|x\|_2$. Then by Holder's inequality, $\|Ax\|_p^p \le (n)^{\frac{2-p}{2}} \cdot \|Ax\|_2^{p}$. These two show
\[
\frac{|a_i^{\top} x|^p}{\|Ax\|_p^p} \ge \frac{(\sqrt{\alpha^{-C} \cdot d/n} \cdot \|x\|_2)^p}{(n)^{\frac{2-p}{2}} \cdot \|x\|_2^{p}} \ge \alpha^{-C \cdot p/2} \cdot d^{p/2}/n.
\]
\end{proof}

\section{Additional Proofs from Section~\ref{sec:analysis_ell1}}\label{sec:proof_thm_deviation}
Since any row with $p_i \ge 1$ will always be selected in $S$, those rows will not affect the random variable $\wt{L}(\beta)$ considered in this section. We restrict our attention to rows with $p_i < 1$ in this proof. As a warm up, we will first prove the following lemma, which is a simplified version of Lemma~\ref{lem:high_prob_guarantee}. 
\begin{lemma}\label{lem:concentration_deviation}
Given $A \in \mathbb{R}^{n \times d}$ and $\epsilon$, let $w_1,\ldots,w_n$ be the Lewis weight of each row $A_i$. Let $p_1,\ldots,p_n$ be a sequence of numbers upper bounding $w$, i.e.,
\[
p_i \ge w_i/u \text{ for } u =\Theta\left( \frac{\eps^2}{\log (m+d/\eps)} \right) \text{ where } m = \sum_i p_i.
\]
Suppose the coefficients $(s_1,\ldots,s_n)$ in $\wt{L}(\beta):=\sum_{i=1}^n s_i \cdot |a_i^{\top} \beta - y_i|$ is generated as follows: For each $i \in [n]$, with probability $p_i$, we set $s_i=1/p_i$ (when $p_i \ge 1$, $s_i$ is always 1). Given any subset $B_{\beta} \subset \mathbb{R}^d$ of $\beta$,
\[
\E \left[ \sup_{\beta \in B_{\beta}} \left\{ \wt{L}(\beta)-\wt{L}(\beta^*)-\big( L(\beta)-L(\beta^*) \big) \right\} \right] \le \eps \cdot \sup_{\beta \in B_{\beta}} \|A (\beta-\beta^*)\|_1.
\]
\end{lemma}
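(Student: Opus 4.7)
The plan is to reduce the expected supremum on the left-hand side to a symmetric Rademacher process via symmetrization and the Ledoux--Talagrand contraction principle, and then to control that process by the chaining argument developed for Lewis-weight sampling in \cite{Talagrand90,CP15_Lewis}. The first step is to set $v := \beta - \beta^*$ and rewrite the inner quantity as $\sum_{i=1}^n (s_i - 1)\phi_i(a_i^\top v)$, where $\phi_i(t) := |t - (y_i - a_i^\top\beta^*)| - |y_i - a_i^\top\beta^*|$ is $1$-Lipschitz with $\phi_i(0) = 0$. This structural rewriting is crucial: the labels $y_i$ (including possible outliers) enter only through the $1$-Lipschitz maps $\phi_i$ and will vanish after contraction. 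Since each $(s_i - 1)\phi_i(a_i^\top v)$ has mean zero in $s$ for fixed $v$, a standard symmetrization---introducing an independent copy $s'_i$, applying Jensen's inequality, and inserting Rademacher signs $\epsilon_i$ because $s_i - s'_i$ is symmetric---gives
\[
\E \sup_{\beta \in B_\beta} \sum_i (s_i - 1)\phi_i(a_i^\top v) \;\le\; 2\,\E \sup_{v \in V}\sum_i \epsilon_i\, s_i\, \phi_i(a_i^\top v),
\]
where $V := B_\beta - \beta^*$.

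Second, conditioning on $s$ and noting that $t \mapsto s_i \phi_i(t)$ is $s_i$-Lipschitz with $s_i \phi_i(0) = 0$, the Ledoux--Talagrand contraction principle (Theorem~4.12 in \cite{LTbook}, in its rescaled form) strips the $\phi_i$'s and yields
\[
\E_\epsilon \sup_{v \in V} \sum_i \epsilon_i s_i \phi_i(a_i^\top v) \;\le\; 2\,\E_\epsilon \sup_{v \in V}\sum_i \epsilon_i s_i\, a_i^\top v \;=\; 2\,\E_\epsilon \sup_{v \in V} \langle \epsilon,\, S A v \rangle,
\]
with $S = \diag(s_1,\ldots,s_n)$. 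At this point the $y_i$'s are completely absent, and what remains is a symmetric linear Rademacher process indexed by the reweighted subspace image $\{SAv : v \in V\} \subseteq \R^n$.

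Third, I bound this Rademacher process using the chaining argument for $\ell_1$ Lewis-weight sampling from \cite{Talagrand90} and Sections~7--8 of \cite{CP15_Lewis}. Dudley-type entropy integration over the $\ell_1$ ball $\{Av : v \in V\}$ in the metric induced by $SA$ converts the sampling calibration $u = \Theta\bigl(\eps^2 / \log(m + d/\eps)\bigr)$ into an excess Rademacher width of order $\eps \cdot \sup_{v \in V}\|Av\|_1$. The main technical obstacle is verifying this last estimate: the chaining bound of \cite{CP15_Lewis} is originally stated in terms of $\|SAv\|_1 - \|Av\|_1$, whereas here we must control the linear process $\langle \epsilon, SAv \rangle$. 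Both are expected-supremum quantities over the same reweighted subspace, and the same Lewis-weight covering numbers drive both, but the constants need to be traced carefully to confirm that the resulting factor is indeed $\eps$ and not something weaker. Combining the three steps yields the claimed inequality up to an absolute constant, which is absorbed into the choice of $u$.
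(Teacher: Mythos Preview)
Your approach is correct and essentially the same as the paper's: symmetrize, contract away the $1$-Lipschitz maps $\phi_i$ to reduce to a linear process over $SA\,v$, then invoke the Lewis-weight chaining of \cite{Talagrand90,CP15_Lewis}. The only substantive variation is that you apply the Ledoux--Talagrand Rademacher contraction directly, whereas the paper first Gaussianizes and then uses the Slepian--Fernique comparison theorem (verifying the increment condition $|s_j(\delta_j(\beta)-|y_j|)-s_j(\delta_j(\beta')-|y_j|)|\le s_j|a_j^\top(\beta-\beta')|$) to reach the same linear Gaussian process $\langle g,SA\beta\rangle$. Both routes are standard and equivalent up to constants; your Rademacher route is arguably cleaner since it skips the Gaussianization step.

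One technical point you elide in the third step: the chaining bound (the paper's Lemma~\ref{lem:Gaussian_proc_bounded_Lewis}) requires the matrix in the process to have Lewis weights uniformly bounded by $u$, but the \emph{random} matrix $SA$ does not have this property a priori. The paper handles this by augmenting $SA$ with a deterministic auxiliary matrix $A'$ (having $\tilde O(d/u)$ rows, Lewis weights at most $u$, and $\|A'x\|_1=O(\|Ax\|_1)$) so that the combined matrix $A''$ has bounded Lewis weights; one then bounds $\E_g\sup_v|\langle g,A''v\rangle|$ and uses $\|A''v\|_1\le\|SAv\|_1+O(\|Av\|_1)$ together with the subspace embedding $\E_S\|SAv\|_1\le(1+\eps)\|Av\|_1$. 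This is exactly the construction in Lemma~7.4 and Section~8 of \cite{CP15_Lewis} that you cite, so the gap is not fatal, but it is the one place where ``trace the constants carefully'' hides a genuine argument rather than just bookkeeping.
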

We remark that while Lemma~\ref{lem:concentration_deviation} implies that rescaling of $u$ to $u=u/\delta^2$ gives
\[
\text{ w.p. $1-\delta$}, \wt{L}(\beta)-\wt{L}(\beta^*) = L(\beta)-L(\beta^*) \pm \eps \cdot \sup_{\beta \in B_{\beta}}\|A (\beta-\beta^*)\|_1 \text{ for all } \beta \in B_{\beta},
\] 
Lemma~\ref{lem:high_prob_guarantee} has a better dependency on $\delta$. 

We need a few ingredients about random Gaussian processes to finish the proofs of Lemma~\ref{lem:concentration_deviation} and Lemma~\ref{lem:high_prob_guarantee}. Except the chaining arguments in \cite{CP15_Lewis}, the proof of Lemma~\ref{lem:concentration_deviation} will use the following Gaussian comparison theorem (e.g. Theorem 7.2.11 and Exercise 8.6.4 in \cite{Vershynin}). In the rest of this section, we use $g$ to denote an i.i.d.~Gaussian vector $N(0,1)^d$ when the dimension is clear.
\begin{theorem}[Slepian-Fernique]\label{thm:Slepian_Fernique}
Let $v_0,\ldots,v_n$ and $u_0,\ldots,u_n$ be two sets of vectors in $\mathbb{R}^d$ where $v_0=u_0=\vec{0}$. Suppose that 
\[
\|v_i-v_j\|_2 \ge \|u_i-u_j\|_2 \text{ for all } i,j=0,\ldots,n.
\]
Then $\E_g\left[\max_i \big| \langle v_i, g \rangle \big| \right] \ge C_0 \cdot \E_g \left[\max_i \big| \langle u_i,g \rangle \big| \right]$ for some constant $C_0$.
\end{theorem}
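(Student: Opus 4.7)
The plan is to deduce this from the classical Sudakov--Fernique inequality (the one-sided Gaussian comparison theorem, which does not involve absolute values and is what Vershynin's Theorem 7.2.11 provides), and then bridge from $\E\max_i \langle\cdot,g\rangle$ to $\E\max_i |\langle\cdot,g\rangle|$ by exploiting that both index sets contain the origin and that the standard Gaussian is sign-symmetric.

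First I would view the two families of inner products as centered Gaussian processes $X_i := \langle u_i,g\rangle$ and $Y_i := \langle v_i,g\rangle$ indexed by $i \in \{0,1,\ldots,n\}$. Their increment variances are $\E(X_i-X_j)^2 = \|u_i-u_j\|_2^2$ and $\E(Y_i-Y_j)^2 = \|v_i-v_j\|_2^2$, so the hypothesis $\|v_i-v_j\|_2 \ge \|u_i-u_j\|_2$ is precisely the Sudakov--Fernique condition applied to the process $Y$ dominating $X$. Invoking it yields
\[
\E \max_{0 \le i \le n} \langle v_i, g\rangle \;\ge\; \E \max_{0 \le i \le n} \langle u_i, g\rangle.
\]
Next, because $u_0 = v_0 = \vec 0$, both maxima above are non-negative almost surely. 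Writing $\max_i |\langle u_i,g\rangle| = \max\bigl(\max_i \langle u_i,g\rangle,\, \max_i \langle -u_i,g\rangle\bigr)$ and using $g \stackrel{d}{=} -g$, the two maxima on the right have equal expectation, giving the two-sided sandwich
\[
\E \max_i \langle u_i, g\rangle \;\le\; \E \max_i |\langle u_i, g\rangle| \;\le\; 2\, \E \max_i \langle u_i, g\rangle,
\]
and the identical bound for $v$. Chaining these with Sudakov--Fernique yields $\E\max_i |\langle v_i,g\rangle| \ge \E\max_i \langle v_i,g\rangle \ge \E\max_i \langle u_i,g\rangle \ge \tfrac12 \E\max_i |\langle u_i,g\rangle|$, proving the statement with $C_0 = 1/2$.

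The main obstacle is the passage to absolute values. The naive approach of doubling each index set to $\{u_0,\ldots,u_n,-u_0,\ldots,-u_n\}$ (so that the maxima of signed inner products equal the maxima of absolute values) breaks down because the hypothesis supplies the distance comparison only for same-sign pairs $\|v_i-v_j\|_2 \ge \|u_i-u_j\|_2$ and says nothing about cross-sign distances $\|v_i+v_j\|_2$ versus $\|u_i+u_j\|_2$, so Sudakov--Fernique cannot be applied to the doubled process. That is precisely why the symmetrization has to be performed at the level of expectations via $g \stackrel{d}{=} -g$ and the anchoring $u_0 = v_0 = 0$, and it is also the source of the harmless constant factor $C_0 = 1/2$ rather than $C_0 = 1$.
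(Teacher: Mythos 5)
Your argument is correct. The paper does not prove this statement at all --- it is quoted as a known comparison theorem, citing Theorem 7.2.11 and Exercise 8.6.4 of Vershynin's book --- so there is no in-paper proof to compare against; your derivation is exactly the standard way to obtain the absolute-value version from the one-sided Sudakov--Fernique inequality. Each step checks out: the increment condition $\E(\langle v_i,g\rangle-\langle v_j,g\rangle)^2=\|v_i-v_j\|_2^2\ge\|u_i-u_j\|_2^2$ is precisely the hypothesis of Sudakov--Fernique; the anchoring $u_0=v_0=\vec 0$ makes both signed maxima nonnegative, so $\max_i|\langle u_i,g\rangle|=\max\bigl(\max_i\langle u_i,g\rangle,\max_i\langle -u_i,g\rangle\bigr)\le \max_i\langle u_i,g\rangle+\max_i\langle -u_i,g\rangle$, and the sign-symmetry of $g$ gives the factor $2$; chaining yields $C_0=1/2$. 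Your remark about why one cannot simply double the index sets (the hypothesis gives no control on $\|v_i+v_j\|_2$ versus $\|u_i+u_j\|_2$) correctly identifies the only real subtlety.
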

The proof of Lemma~\ref{lem:high_prob_guarantee} follows the same outline except that we will use the following higher moments version for a better dependency on $\delta$.
\begin{corollary}[Corollary 3.17 of \cite{LTbook}]\label{cor:comparison_higher}
Let $v_0,\ldots,v_n$ and $u_0,\ldots,u_n$ be two sets of vectors satisfying the conditions in the above Theorem. Then for any $\ell>0$, 
\[4^{\ell} \cdot \E_g\left[\max_i \big| \langle v_i, g \rangle \big|^{\ell} \right] \ge \E_g \left[\max_i \big| \langle u_i,g \rangle \big|^{\ell} \right].\]
\end{corollary}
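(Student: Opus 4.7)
The plan is to upgrade the first-moment comparison from Theorem~\ref{thm:Slepian_Fernique} to an $\ell$-th moment comparison by combining it with Gaussian concentration of measure and then integrating via the layer-cake formula.

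I would first set up the following two centered Gaussian processes: $X_i := \langle v_i, g\rangle$ and $Y_i := \langle u_i, g\rangle$, with $M_v := \max_i |X_i|$ and $M_u := \max_i |Y_i|$. Two basic observations will drive the argument. First, Theorem~\ref{thm:Slepian_Fernique} immediately yields $\E[M_v] \ge C_0 \cdot \E[M_u]$. Second, setting $j = 0$ in the hypothesis $\|v_i - v_j\|_2 \ge \|u_i - u_j\|_2$ gives $\|v_i\|_2 \ge \|u_i\|_2$ for every $i$, so $\sigma_v := \max_i \|v_i\|_2 \ge \sigma_u := \max_i \|u_i\|_2$. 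Since $g \mapsto \max_i |\langle v_i, g\rangle|$ is $\sigma_v$-Lipschitz, the Borell--TIS inequality then implies that $M_v$ concentrates around $\E[M_v]$ with sub-Gaussian tails of variance proxy $\sigma_v^2$, and symmetrically $M_u$ concentrates with variance proxy $\sigma_u^2$.

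The core step is to establish the tail comparison $\Pr[M_u \ge t] \le \Pr[M_v \ge t/4]$ for every $t \ge 0$. For $t$ much smaller than $\E[M_v]$ this is immediate because the right-hand side is close to $1$; for moderate $t$ one uses $\E[M_v] \ge C_0 \cdot \E[M_u]$ together with the Borell--TIS lower-tail bound for $M_v$; and for $t \gg \E[M_v]$ the sub-Gaussian upper tail of $M_u$ is dominated by that of $M_v$ since $\sigma_v \ge \sigma_u$. Once the tail comparison is in hand, the corollary follows from layer-cake integration:
\[
\E[M_u^\ell] \;=\; \int_0^\infty \ell\, t^{\ell-1}\, \Pr[M_u \ge t]\, dt \;\le\; \int_0^\infty \ell\, t^{\ell-1}\, \Pr[M_v \ge t/4]\, dt \;=\; 4^\ell \cdot \E[M_v^\ell],
\]
via the substitution $s = t/4$ in the last equality.

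The main obstacle will be calibrating the constant $4$ in the tail comparison: it requires stitching together the ``bulk'' regime $t \lesssim \E[M_v]$ (handled by mean domination plus Gaussian left-tail concentration) with the ``tail'' regime $t \gg \E[M_v]$ (handled by variance-proxy domination), and verifying that both contributions come out to the same prefactor. A cleaner alternative, and the one actually followed in Ledoux--Talagrand, is to bypass the tail comparison altogether and instead prove a direct inequality $\E[F(X)] \ge c_F \cdot \E[F(Y)]$ for symmetric convex functionals $F$ on $\R^{n+1}$ via Gaussian interpolation, applied to the convex functional $F(\cdot) = (\max_i |\cdot|)^\ell$; the factor $4^\ell$ then emerges naturally from tracking the constants through the interpolation.
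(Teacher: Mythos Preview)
The paper does not prove this corollary; it is quoted verbatim as Corollary~3.17 of \cite{LTbook} and used as a black box in Section~\ref{sec:proof_cor_high_prob}. So there is no in-paper proof to compare against; the relevant benchmark is the Ledoux--Talagrand argument itself.

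Your tail-comparison route is plausible in outline but the gap you flag is real and not merely cosmetic. To get $\Pr[M_u \ge t] \le \Pr[M_v \ge t/4]$ in the large-$t$ regime you need a \emph{lower} bound on the tail of $M_v$, while Borell--TIS only supplies upper tails. You can manufacture a lower tail from the single coordinate $|\langle v_{i^*}, g\rangle|$ with $\|v_{i^*}\|_2 = \sigma_v$, but matching a one-coordinate Gaussian lower tail against the Borell--TIS upper tail for $M_u$ introduces constants depending on $C_0$ from Theorem~\ref{thm:Slepian_Fernique} and on the shape parameter $\E[M_u]/\sigma_u$, not a clean universal~$4$. Your argument will deliver $C^\ell$ for some absolute $C$ --- which, to be fair, is all the paper actually needs, since in Section~\ref{sec:proof_cor_high_prob} the constant is immediately absorbed into a rescaling of $\eps$ --- but not the stated $4^\ell$.

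Your ``cleaner alternative'' also needs a patch: the functional $x \mapsto (\max_i |x_i|)^\ell$ is not convex when $0 < \ell < 1$, so a convex-functional comparison cannot be applied to it directly for the full range $\ell > 0$ claimed in the statement. The Ledoux--Talagrand argument handles this by first running the comparison on the one-sided quantity $\sup_i X_i$ (where a Slepian-type tail inequality is available and layer-cake works for all $\ell>0$), and then passing to $\sup_i |X_i|$ via $\sup_i |X_i| \le \sup_i X_i + \sup_i(-X_i)$ together with the symmetry of centered Gaussian vectors; the factor $4^\ell$ falls out of the arithmetic of this two-step reduction rather than from interpolation applied to the non-convex functional.
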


We will use the following concentration bound for $\ell_1$ ball when the Lewis weights are uniformly small for all rows. It is an extension of Lemma 8.2 in \cite{CP15_Lewis}; but for completeness, we provide a proof in Section~\ref{sec:ell_1_concentration}.
\begin{lemma}\label{lem:Gaussian_proc_bounded_Lewis}
Let $A$ be a matrix with Lewis weight upper bounded by $u$. For any set $S \subseteq \mathbb{R}^{d}$,
\[
\E_g\left[\max_{\beta \in S} \bigg| \langle g, A \beta \rangle \bigg| \right] \lesssim \sqrt{u \cdot \log n} \cdot \max_{\beta \in S} \|A \beta\|_1.
\]
\end{lemma}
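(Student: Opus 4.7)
By homogeneity I may rescale so that $\max_{\beta\in S}\|A\beta\|_1=1$, reducing the task to showing $\E_g\max_{\beta\in S}|\langle g,A\beta\rangle|\lesssim\sqrt{u\log n}$. The first step is to convert the Lewis-weight hypothesis into a pointwise geometric bound: applying Theorem~\ref{thm:Lewis_weight_importance} with $p=1$ and $w_i\le u$ gives $|a_i^\top\beta|\le u\cdot\|A\beta\|_1$ for every $i$ and every $\beta$, and therefore $\|A\beta\|_2^2\le\|A\beta\|_\infty\cdot\|A\beta\|_1\le u\cdot\|A\beta\|_1^2$. In particular $\|A(\beta-\beta')\|_2\le\sqrt{u}\|A(\beta-\beta')\|_1\le\sqrt{u}(\|A\beta\|_1+\|A\beta'\|_1)$ for any $\beta,\beta'\in S$. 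This is the only Lewis-weight fact the rest of the argument will use.

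The plan is then to apply Slepian--Fernique (Theorem~\ref{thm:Slepian_Fernique}) to dominate the Gaussian process $X_\beta=\langle g,A\beta\rangle$ (whose $L^2$ increments equal $\|A(\beta-\beta')\|_2$) by an orthogonal process that is easy to evaluate. Introduce orthonormal vectors $\{e_\beta\}_{\beta\in S}$ in an auxiliary $\R^N$ and define the comparison vectors $v_\beta:=\sqrt{2u}\,\|A\beta\|_1\,e_\beta$. Combining the above inequality with $(a+b)^2\le 2(a^2+b^2)$ yields
\[
\|A(\beta-\beta')\|_2^2\le 2u\bigl(\|A\beta\|_1^2+\|A\beta'\|_1^2\bigr)=\|v_\beta-v_{\beta'}\|_2^2,
\]
so the Slepian--Fernique domination condition holds. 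For $h\sim N(0,I_N)$, the comparison process is simply a weighted maximum of independent Gaussians, and since $\|A\beta\|_1\le 1$,
\[
\E\max_\beta|\langle h,v_\beta\rangle|=\sqrt{2u}\,\E\max_\beta\|A\beta\|_1\,|h_\beta|\le\sqrt{2u}\cdot\E\max_\beta|h_\beta|\lesssim\sqrt{u\log|S|}.
\]
Theorem~\ref{thm:Slepian_Fernique} then gives $\E\max_\beta|\langle g,A\beta\rangle|\lesssim\sqrt{u\log|S|}$.

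The last ingredient is the discretization that replaces $S$ by a set of size $\mathrm{poly}(n)$, and this is where the main obstacle lies. If $|S|\le n^{O(1)}$ the previous display already delivers the claimed $\sqrt{u\log n}$ bound. In general, however, $S$ can be infinite; a naive volumetric $\epsilon$-net in the $\|A\cdot\|_2$ metric on the $d$-dimensional range of $A$ has size $n^{\Theta(d)}$, which would degrade the bound to $\sqrt{ud\log n}$ and wipe out the improvement the Lewis-weight hypothesis is meant to provide. To recover the $\log n$ factor rather than $d$, I would invoke the sparsification underlying Lemma~8.2 of~\cite{CP15_Lewis}: the set $\{A\beta:\|A\beta\|_1\le 1\}$ intersected with the range of $A$ admits a $\mathrm{poly}(n)$-sized approximating subset that preserves the supremum of every linear functional up to a constant, precisely because the Lewis-weight bound controls how concentrated the coordinates of $A\beta$ can be. Applying Slepian--Fernique to this sparsified index set in place of $S$ then yields the claimed $\sqrt{u\log n}$ bound and completes the proof.
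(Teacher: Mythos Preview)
Your argument is correct through the Slepian--Fernique comparison, yielding $\E\max_{\beta\in S}|\langle g,A\beta\rangle|\lesssim\sqrt{u\log|S|}$ for any \emph{finite} $S$. But the discretization step, which you flag yourself as ``the main obstacle,'' is a genuine gap, and your proposed fix is circular. You invoke ``the sparsification underlying Lemma~8.2 of \cite{CP15_Lewis}'' to produce a $\mathrm{poly}(n)$-sized net, but that lemma \emph{is} (essentially) the statement you are trying to prove; its proof does not proceed via any such sparsification. With your orthogonal comparison process $v_\beta=\sqrt{2u}\,\|A\beta\|_1\,e_\beta$, the supremum over infinite $S$ is infinite (a sup of infinitely many independent Gaussians), so discretization is not optional. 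And a volumetric net of the $d$-dimensional slice $\{A\beta:\|A\beta\|_1\le 1\}$ has size $(C/\epsilon)^d$, which gives $\sqrt{ud\log(1/\epsilon)}$ rather than $\sqrt{u\log n}$, as you note. There is no shortcut here: the orthogonal comparison throws away too much structure to recover the $\log n$ dependence.

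The paper's proof avoids Slepian--Fernique and nets entirely via $\ell_1$--$\ell_\infty$ duality. Define the Lewis-weighted projector $\Pi:=A(A^\top W^{-1}A)^{-1}A^\top W^{-1}$, which satisfies $\Pi A=A$. Then $\langle g,A\beta\rangle=\langle\Pi^\top g,A\beta\rangle$, so by H\"older
\[
\E_g\max_{\beta\in S}|\langle g,A\beta\rangle|\le \E_g\big[\|\Pi^\top g\|_\infty\big]\cdot\max_{\beta\in S}\|A\beta\|_1.
\]
Each coordinate $(\Pi^\top g)_i$ is a scalar Gaussian with variance $\|\Pi_i\|_2^2$, and a direct computation using the Lewis-weight defining equation $a_i^\top(A^\top W^{-1}A)^{-1}a_i=w_i^2$ shows $\|\Pi_i\|_2^2\le u$. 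Hence $\E\|\Pi^\top g\|_\infty\lesssim\sqrt{u\log n}$ as a maximum of $n$ Gaussians. The $\log n$ thus arises naturally from the $n$ coordinates of $A\beta$, not from any net cardinality, and the bound holds uniformly over arbitrary $S$ without discretization.
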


We finish the proof of Lemma~\ref{lem:concentration_deviation} here and defer the proof of Lemma~\ref{lem:high_prob_guarantee} to Section~\ref{sec:proof_cor_high_prob}.

\begin{proofof}{Lemma~\ref{lem:concentration_deviation}}
For convenience, we define 
\[
\delta_i(\beta):=|a_i^{\top} \beta - y_i| \text{ such that } \wt{L}(\beta)-\wt{L}(\beta^*)=\sum_i s_i (\delta_i(\beta)-|y_i|)
\] given the assumption $\beta^*=0$ and rewrite $L(\beta)-L(\beta^*)=\sum_i (\delta_i(\beta)-|y_i|)$ similarly. 

So, we have
\[
\E_{S} \left[ \sup_{\beta \in B_{\beta}} \left\{ \wt{L}(\beta)-\wt{L}(\beta^*)-\big( L(\beta)-L(\beta^*) \big) \right\} \right] = \E_{S}\left[ \sup_{\beta \in B_{\beta}} \bigg|\sum_{j=1}^n s_j \cdot \big( \delta_{j}(\beta) - |y_{j}| \big) - \sum_{i=1}^n \big( \delta_i(\beta) - |y_i| \big) \bigg| \right].
\]
Since each $\big( \delta_i(\beta) - |y_i| \big)$ is the expectation of $s_i \cdot \big( \delta_{i}(\beta) - |y_{i}| \big)$, from the standard symmetrization and Gaussianization in \cite{LTbook} (which is shown in  Section~\ref{sec:sym_gau} for completeness), this is upper bounded by
\[
\sqrt{2 \pi} \E_S \left[ \E_{g} \left[ \sup_{\beta \in B_{\beta}} \bigg|\sum_{j=1}^n g_j \cdot s_j \cdot \big( \delta_{j}(\beta) - |y_{j}|\big) \bigg| \right] \right].
\]

Then we fix $S$ and plan to apply the Gaussian comparison Theorem~\ref{thm:Slepian_Fernique} to the following process.
\[
\E_g \left[ \sup_{\beta \in B_{\beta}} \bigg|\sum_{j} g_j \cdot s_j \cdot \big( \delta_{j}(\beta) - |y_{j}|\big) \bigg| \right] = \E_g \left[ \sup_{\beta \in B_{\beta}} \bigg| \bigg\langle g, \big(s_j \cdot ( \delta_{j}(\beta) - |y_{j}|)\big)_{j \in [n]} \bigg\rangle \bigg| \right].
\]
Let us verify the condition of the Gaussian comparison Theorem~\ref{thm:Slepian_Fernique} to upper bound this by \[C_2 \cdot \E_g \left[ \sup_{\beta \in B_{\beta}} \bigg| \bigg\langle g, SA\beta \bigg\rangle \bigg| \right].\] Note that for any $\beta$ and $\beta'$, the $j$th term of $\beta$ and $\beta'$ in the above Gaussian process is upper bounded by
\[
\left| s_j \cdot \big( \delta_{j}(\beta) - |y_{j}|\big) - s_j \cdot \big( \delta_{j}(\beta') - |y_{j}|\big) \right| \le s_j \cdot \left| \delta_j(\beta)-\delta_j(\beta') \right| \le s_j \cdot \left| a_j^{\top} \beta - a_j^{\top} \beta' \right|.
\]
At the same time, for the $\vec{0}$ vector, we always have
\[
\left| s_j \cdot \big( \delta_{j}(\beta) - |y_{j}|\big) - 0 \right| \le s_j \cdot \left| |a_j^{\top} \beta - y_j| - |y_j| \right| \le s_j \cdot |a_j^{\top} \beta|.
\]
Hence, the Gaussian comparison Theorem~\ref{thm:Slepian_Fernique} implies that the Gaussian process is upper bounded by
\[
\E_g \left[ \sup_{\beta \in B_{\beta}} \bigg|\sum_{j} g_j \cdot s_j \cdot \big( \delta_{j}(\beta) - |y_{j}|\big) \bigg| \right] \le \E_g \left[ \sup_{\beta \in B_{\beta}} \bigg| \langle g, S A \beta \rangle \bigg| \right].
\]
Next we plan to use Lemma~\ref{lem:Gaussian_proc_bounded_Lewis} to bound this Gaussian process on $SA\beta$.

However, Lemma~\ref{lem:Gaussian_proc_bounded_Lewis} requires that $S A$ has bounded Lewis weight. To show this, the starting point is that if we replace $a_i$ in \eqref{eq:def_Lewis_weight} by $s_i \cdot a_i = \frac{1}{p_i} a_i$, its Lewis weight becomes $w_i/p_i \le u$ by our setup. So we only need to handle the inverse in the middle of \eqref{eq:def_Lewis_weight}. The rest of the proof is very similar to the proof of Lemma~7.4 in \cite{CP15_Lewis}: We add a matrix $A'$ into the Gaussian process to have this property of bounded Lewis weight. Recall that $u \le \frac{\eps^2}{C \log (m + d/\eps)}$ for a large constant $C$. By sampling the rows in $A$ with the Lewis weight and scaling every sample by a factor of $u$, we have the existence of $A'$ with the following 3 properties (see Lemma B.1 in \cite{CP15_Lewis} for the whole argument using the Lewis weight),
\begin{enumerate}
\item $A'$ has $\tilde{O}(d/u)$ rows and each row has Lewis weight at most $u$.
\item The Lewis weight $W'$ of $A'$ satisfies $A'^{\top} \ov{W'}^{1-2/p} A' \succeq A^{\top} \ov{W}^{1-2/p} A$.
\item $\|A' x\|_1 = O(\|A x\|_1)$ for all $x$.
\end{enumerate}
Let $A''$ be the union of $SA$ and $A'$ (so the $j$th row of $A''$ is $s_j \cdot a_j$ for $j \le [n]$ and $A'[j,*]$ for $j \ge n+1$). Because the first $n$ entries of $A'' \cdot \beta$ are the same as $SA\beta$ for any $\beta$ and the rest entries will only increase the energy, we have
\[
\E_g \left[ \sup_{\beta \in B_{\beta}} \bigg| \langle g, S A \beta \rangle \bigg| \right] \le \E_g \left[ \sup_{\beta \in B_{\beta}} \bigg| \langle g, A'' \cdot \beta \rangle \bigg| \right].
\]

To apply Lemma~\ref{lem:Gaussian_proc_bounded_Lewis}, let us verify the Lewis weight of $A''$ is bounded. First of all, $A''^{\top} \overline{W}''^{1-2/p} A'' \succeq A^{\top} \overline{W}^{1-2/p} A$ from Lemma 5.6 in \cite{CP15_Lewis}. Now for each row $s_j a_j$ in $A''$, its Lewis weight is upper bounded by
\[
\left( \frac{1}{p_j} a_j^{\top} (A^{\top} \overline{W}^{1-2/p} A)^{-1} \cdot \frac{1}{p_j} a_j \right)^{1/2} \le w_j/p_j \le u
\]
by the assumption $p_j \ge w_j/u$ in the lemma. At the same time, the Lewis weight of $A'$ is already bounded by $u$ from the definition, which indicates the rows in $A''$ added from $A'$ are also bounded by $u$. So Lemma~\ref{lem:Gaussian_proc_bounded_Lewis} implies 
\begin{align*}
\E_g \left[ \sup_{\beta \in B_{\beta}} \bigg| \langle g, A'' \beta \rangle \bigg| \right] & \le \sqrt{u \log (m+d/u)} \cdot \sup_{\beta \in B_{\beta}} \|A'' \beta\|_1 \\
& \le \sqrt{u \cdot \log (m+d/u)} \cdot \left( \sup_{\beta \in B_{\beta}} \|S A \beta\|_1 + O(\sup_{\beta \in B_{\beta}} \|A \beta\|_1) \right)
\end{align*}
by the definition of $A''$ and the last property of $A'$. Finally, we bring the expectation of $S$ back to bound $\E_S[\sup_{\beta \in B_{\beta}} \|S A \beta\|_1] \le (1+\eps)\sup_{\beta \in B_{\beta}} \|A \beta\|_1$ by Lemma 7.4 in \cite{CP15_Lewis} about the $\ell_1$ subspace embedding.

From all discussion above, the deviation \[\E_g \left[ \sup_{\beta \in B_{\beta}} \bigg| \langle g, A'' \beta \rangle \bigg| \right]\] is upper bounded by $O(\eps) \cdot \sup_{\beta \in B_{\beta}} \|A \beta\|_1$ given our choice of $u$.
\end{proofof} 

\subsection{Proof of Lemma~\ref{lem:high_prob_guarantee}}\label{sec:proof_cor_high_prob}
The proof follows the same outline of the proof of Lemma~\ref{lem:concentration_deviation} except using Corollary~\ref{cor:comparison_higher} to replace Theorem~\ref{thm:Slepian_Fernique} with a higher moment. We choose the moment $\ell=O(\log n/\delta)$ and plan to bound 
\[
\E_{S}\left[ \sup_{\beta \in B_{\beta}} \bigg|\sum_{j=1}^n s_j \cdot \big( \delta_{j}(\beta) - |y_{j}| \big) - \sum_{i=1}^n \big( \delta_i(\beta) - |y_i| \big) \bigg|^{\ell} \right]
\] in the rest of this section for a better dependence on the failure probability. We apply symmetrization and Gaussianization again with different constants:
\begin{align*}
& \E_{S}\left[ \sup_{\beta \in B_{\beta}} \bigg|\sum_{j=1}^n s_j \cdot \big( \delta_{j}(\beta) - |y_{j}| \big) - \sum_{i=1}^n \big( \delta_i(\beta) - |y_i| \big) \bigg|^{\ell} \right] \\
= & \E_{S}\left[ \sup_{\beta \in B_{\beta}} \bigg|\sum_{j=1}^n s_j \cdot \big( \delta_{j}(\beta) - |y_{j}| \big) - \E_{S'} \sum_{j=1}^n s'_j \cdot \big( \delta_{j}(\beta) - |y_{j}| \big)\bigg|^{\ell} \right]\\
& (\notag{\text{Use the convexity of $|\cdot|^{\ell}$ to move $\E_{S'}$ out}})\\
\le & \E_{S,S'}\left[ \sup_{\beta \in B_{\beta}} \bigg|\sum_{j=1}^n s_j \cdot \big( \delta_{j}(\beta) - |y_{j}| \big) - \sum_{j=1}^n s'_j \cdot \big( \delta_{j}(\beta) - |y_{j}| \big)\bigg|^{\ell} \right]\\
& (\notag{\text{Use the symmetry of $S$ and $S'$}})\\
\le & \E_{S,S',\eps \in \{\pm 1\}^n}\left[ \sup_{\beta \in B_{\beta}} \bigg|\sum_{j=1}^n \epsilon_j \cdot (s_j-s'_j) \cdot \big( \delta_{j}(\beta) - |y_{j}| \big) \bigg|^{\ell} \right]\\ 
& (\notag{\text{Split $S$ and $S'$}})\\
\le & \E_{S,S',\eps \in \{\pm 1\}^n}\left[ \sup_{\beta \in B_{\beta}} \bigg|\sum_{j=1}^n \epsilon_j \cdot s_j \cdot \big( \delta_{j}(\beta) - |y_{j}| \big) - \sum_{j=1}^n \epsilon_j \cdot s'_j \cdot \big( \delta_{j}(\beta) - |y_{j}| \big)\bigg|^{\ell} \right]\\
& (\notag{\text{Pay an extra $2^{\ell}$ factor to bound the cross terms}})\\
\le & 2^{\ell} \cdot \E_{S,\eps \in \{\pm 1\}^n}\left[ \sup_{\beta \in B_{\beta}} \bigg|\sum_{j=1}^n \epsilon_j \cdot s_j \cdot \big( \delta_{j}(\beta) - |y_{j}| \big) \bigg|^{\ell} \right] + 2^{\ell} \cdot \E_{S',\eps \in \{\pm 1\}^n}\left[ \sup_{\beta \in B_{\beta}} \bigg|\sum_{j=1}^n \epsilon_j \cdot s'_j \cdot \big( \delta_{j}(\beta) - |y_{j}| \big) \bigg|^{\ell} \right]\\
\le & 2^{\ell+1} \cdot \E_{S,\eps \in \{\pm 1\}^n}\left[ \sup_{\beta \in B_{\beta}} \bigg|\sum_{j=1}^n \epsilon_j \cdot s_j \cdot \big( \delta_{j}(\beta) - |y_{j}| \big) \bigg|^{\ell} \right]\\
& (\notag{\text{Gaussianize it}})\\
\le & 2^{\ell+1} \cdot \sqrt{\pi/2}^{\ell} \cdot \E_{S,g}\left[ \sup_{\beta \in B_{\beta}} \bigg|\sum_{j=1}^n g_j \cdot s_j \cdot \big( \delta_{j}(\beta) - |y_{j}| \big) \bigg|^{\ell} \right].
\end{align*}
Then we replace the vectors $\big(s_j \cdot (\delta_j(\beta)-|y_j|)\big)_j$ by $SA\beta$ using the Gaussian comparison Corollary~\ref{cor:comparison_higher} (we omit the verification here because it is the same as the verification in the proof of Lemma~\ref{lem:concentration_deviation}):
\[
C_1^{\ell} \cdot \E_S \E_g \left[ \sup_{\beta}  \bigg| \langle g, SA\beta \rangle \bigg|^{\ell}\right].
\]
The proofs of Lemma~8.4 and Theorem 2.3 in Section 8 of \cite{CP15_Lewis} show that
\[
\E_S \E_g \left[ \sup_{\beta}  \bigg| \langle g, SA\beta \rangle \bigg|^{\ell}\right] \le C_2^{\ell} \cdot \eps^{\ell} \cdot \delta \cdot 
\sup_{\beta \in B_{\beta}} \|A \beta\|_1^{\ell}.
\]
From all discussion above, we have
\[ \E_{S}\left[ \sup_{\beta \in B_{\beta}} \bigg|\sum_{j=1}^n s_j \cdot \big( \delta_{j}(\beta) - |y_{j}| \big) - \sum_{i=1}^n \big( \delta_i(\beta) - |y_i| \big) \bigg|^{\ell} \right] \le (C_1 C_2)^{\ell} \cdot \eps^{\ell} \delta \cdot \sup_{\beta \in B_{\beta}} \|A \beta\|_1^{\ell}.\]

This implies with probability $1-\delta$, the R.H.S. is at most $(C_1 C_2)^{\ell} \cdot \eps^{\ell} \cdot \sup_{\beta \in B_{\beta}} \|A \beta\|_1^{\ell}$. So $\sum_{j=1}^n s_j \cdot \big( \delta_{j}(\beta) - |y_{j}| \big) = \sum_{i=1}^n \big( \delta_i(\beta) - |y_i| \big) \pm (C_1 C_2) \cdot \eps \sup_{\beta \in B_{\beta}} \|A \beta\|_1$ for all $\beta \in B_{\beta}$. Finally, we finish the proof by rescaling $\eps$ by a factor of $C_1 \cdot C_2$.

\subsection{Proof of $\ell_1$ Concentration}\label{sec:ell_1_concentration}
We finish the proof of Lemma~\ref{lem:Gaussian_proc_bounded_Lewis} in this section.

\begin{proofof}{Lemma~\ref{lem:Gaussian_proc_bounded_Lewis}}



We consider the natural inner product induced by the Lewis weights $(w_1,\ldots,w_n)$ and define the weighted projection onto the column space of $A$ with $W=\diag(w_1,\ldots,w_n)$ as $\Pi := A \cdot (A^\top W^{-1} A)^{-1} \cdot A^\top W^{-1}$. It is straightforward to verify $\Pi A= A$. Thus we write $\langle g, A \beta \rangle$ as $\langle g, \Pi A \beta \rangle$. In the rest of this proof, let $\Pi_1,\ldots,\Pi_n$ denote its $n$ columns.

Then we upper bound the inner product in the Gaussian process by $\|\cdot\|_1 \cdot \|\cdot\|_{\infty}$:
\[
\E_g \left[ \max_{\beta \in S} \big| \langle \Pi^{\top} g, A \beta \rangle \big| \right] \le \E_g \left[ \max_{\beta \in S} \{ \|\Pi^{\top} g\|_{\infty} \cdot \|A \beta\|_1 \} \right].
\]

So we further simplify the Gaussian process as
\[
\E_g \left[ \|\Pi^{\top} g\|_{\infty} \right] \cdot \max_{\beta \in S} \|A \beta\|_1 \le \sqrt{2 \log n} \cdot \max_{i \in [n]} \|\Pi_i\|_2 \cdot \max_{\beta \in S} \|A \beta\|_1,
\]
where we observe the entry $i$ of $\Pi^{\top} g$ is a Gaussian variable with variance $\|\Pi_i\|^2_2$ and apply a union bound over $n$ Gaussian variables. In the rest of this proof, we bound $\|\Pi_i\|^2_2$. 
\begin{align*}
& \sum_j (A \cdot (A^\top W^{-1} A)^{-1} \cdot A^\top W^{-1})_{j,i}^2 \\
= & \sum_j (w_i^{-1} \cdot a_i^{\top} \cdot (A^\top W^{-1} A)^{-1} \cdot a_j)^2\\
\le & u \sum_j (w_i^{-1} \cdot a_i^{\top} \cdot (A^\top W^{-1} A)^{-1} \cdot a_j w_j^{-1/2})^2 \tag{$w_j \le u$ from the assumption}\\
\le & u w_i^{-2} \cdot \sum_j a_i^{\top} \cdot (A^\top W^{-1} A)^{-1} \cdot a_j w_j^{-1} a_j^{\top} \cdot (A^\top W^{-1} A)^{-1} \cdot a_i\\
\le & u w_i^{-2} \cdot a_i^{\top} (A^\top W^{-1} A)^{-1} \cdot \left( \sum_j a_j w_j^{-1} a_j^{\top} \right) \cdot (A^\top W^{-1} A)^{-1} \cdot a_i \\
\le & u w_i^{-2} a_i^{\top} (A^\top W^{-1} A)^{-1} \cdot a_i \tag{by the definition of Lewis weights}\\
\le & u w_i^{-2} \cdot w_i^2 = u.
\end{align*}
\end{proofof}

\subsection{Symmetrization and Gaussianization}\label{sec:sym_gau}
We start with a standard symmetrization by replacing $\sum_{i=1}^n \big( \delta_i(\beta) - |y_i| \big)]$ by its expectation $\E_{S'}\bigg[\sum_{j=1}^n s'_j \cdot \big( \delta_{j}(\beta) - |y_{j}| \big)\bigg]$:
\[
\E_{S}\left[ \sup_{\beta \in B_{\beta}} \bigg|\sum_{j=1}^n s_j \cdot \big( \delta_{j}(\beta) - |y_{j}| \big) - \E_{S'}\bigg[\sum_{j=1}^n s'_j \cdot \big( \delta_{j}(\beta) - |y_{j}| \big)\bigg] \bigg| \right].
\]
Using the covexity of the absolute function, we move out the expectation over $S'$ and upper bound this by
\begin{align*}
& \E_{S,S'}\left[ \sup_{\beta \in B_{\beta}} \bigg|\sum_{j=1}^n s_j \cdot \big( \delta_{j}(\beta) - |y_{j}| \big) - \sum_{j=1}^n s'_j \cdot \big( \delta_{j}(\beta) - |y_{j}| \big) \bigg| \right] \\
= & \E_{S,S'}\left[ \sup_{\beta \in B_{\beta}} \bigg|\sum_{j=1}^n s_j \cdot \big( \delta_{j}(\beta) - |y_{j}|\big) - s'_j \cdot \big( \delta_{j}(\beta) - |y_{j}| \big) \bigg| \right].
\end{align*}
Because $S$ and $S'$ are symmetric and each coordinate is independent, this expectation is equivalent to
\begin{align*}
& \E_{S,S',\sigma}\left[ \sup_{\beta \in B_{\beta}} \bigg|\sum_{j=1}^n \sigma_j \cdot s_j \cdot \big( \delta_{j}(\beta) - |y_{j}|\big) - \sigma_j \cdot s'_j \cdot \big( \delta_{j}(\beta) - |y_{j}| \big) \bigg| \right] \\
\le & \E_{S,S',\sigma}\left[ \sup_{\beta \in B_{\beta}} \bigg|\sum_{j=1}^n \sigma_j \cdot s_j \cdot \big( \delta_{j}(\beta) - |y_{j}|\big) \bigg| + \bigg|\sum_{j=1}^n  \sigma_j \cdot s'_j \cdot \big( \delta_{j}(\beta) - |y_{j}| \big) \bigg| \right]\\
\le & 2 \E_{S,\sigma} \left[ \sup_{\beta \in B_{\beta}} \bigg|\sum_{j=1}^n \sigma_j \cdot s_j \cdot \big( \delta_{j}(\beta) - |y_{j}|\big) \bigg| \right].
\end{align*}
Then we apply Gaussianization: Since $\E[|g_j|]=\sqrt{2/\pi}$, the expectation is upper bounded by
\[
\sqrt{2 \pi} \E_{S,\sigma} \left[ \sup_{\beta \in B_{\beta}} \bigg|\sum_{j=1}^n \E[|g_j|] \sigma_j \cdot s_j \cdot \big( \delta_{j}(\beta) - |y_{j}|\big) \bigg| \right].
\]
Using the convexity of the absolute function again, we move out the expectation over $g_j$ and upper bound this by
\[
\sqrt{2 \pi} \E_{S,\sigma,g} \left[ \sup_{\beta \in B_{\beta}} \bigg|\sum_{j=1}^n |g_j| \sigma_j \cdot s_j \cdot \big( \delta_{j}(\beta) - |y_{j}|\big) \bigg| \right].
\]
Now $|g_j| \sigma_j$ is a standard Gaussian random variable, so we simplify it to
\[
\sqrt{2 \pi} \E_{S,g} \left[ \sup_{\beta \in B_{\beta}} \bigg|\sum_{j=1}^n g_j \cdot s_j \cdot \big( \delta_{j}(\beta) - |y_{j}|\big) \bigg| \right].
\]

\section{Additional Proofs from Section~\ref{sec:analysis_ellp}}\label{sec:property_uniform_Lewis}
We finish the proof of Theorem~\ref{thm:property_ellp} in this section. As mentioned in Section~\ref{sec:analysis_ellp}, we state the following lemma for the 2nd step about matrices with almost uniform Lewis weights. It may be convenient to assume $\gamma=O(1)$, $\alpha = O(1)$, $w'_i \approx d/n$ such that the probability $\frac{m \cdot w'_i}{d}<1$ in this statement.

\begin{lemma}\label{lem:convergence_p}
Given any matrix $A$, let $(w'_1,\ldots,w'_n)$ be a $\gamma$-approximation of the Lewis weights $(w_1,\ldots,w_n)$ of $A$, i.e., $w'_i \approx_{\gamma} w_i$ for all $i \in [n]$. Further, suppose the Lewis weights are almost uniform: $w_i  \approx_{\alpha} d/n$ for a given parameter $\alpha$. 

Let $m=O \big( \frac{\alpha^{O(1)} \gamma \cdot d^2 \log d/\eps\delta}{\eps^2} + \frac{\alpha^{O(1)} \gamma \cdot d^{2/p}}{\eps^2 \delta} \big)$. For each $i\in[n]$, we randomly generate $s_i=\frac{d}{m \cdot w'_i}$ with probability $\frac{m \cdot w'_i}{d}$ and $0$ otherwise. Then with probability $1-\delta$, for $\wt{L}(\beta):=\sum_i s_i \cdot |a_i^{\top} \beta - y_i|^p$, we have
\[ \wt{L}(\beta) - \wt{L}(\beta^*) = L(\beta)-L(\beta^*) \pm \eps \cdot L(\beta) \text{ for any $\beta \in \mathbb{R}^d$}. \]
\end{lemma}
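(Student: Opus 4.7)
The plan is to establish Lemma~\ref{lem:convergence_p} by Taylor-expanding each row of the loss difference around $\beta^*$, controlling the first-order (cross) term via Claim~\ref{clm:inner_product}, and handling the nonnegative remainder term by a Bernstein-plus-discretization argument that exploits the importance-weight bound~\eqref{eq:importance_ellp}. As a preprocessing step I would assume without loss of generality that $\beta^* = 0$ (by the shift $y \mapsto y - A\beta^*$) and $A^\top A = I$ (by rotation, which leaves $w_i$, $w'_i$, and the sampling distribution invariant). Under these normalizations, and since $p>1$, the first-order optimality condition $\nabla L(0)=0$ reads $\sum_i p\,|y_i|^{p-1}\sign(y_i)\,a_i = 0$.

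The per-row Taylor expansion yields $|a_i^\top\beta - y_i|^p - |y_i|^p = -p\,|y_i|^{p-1}\sign(y_i)\,a_i^\top\beta + R_i(\beta)$ with $R_i(\beta) \geq 0$ by convexity of $|\cdot|^p$. Summing and using first-order optimality gives
\begin{equation*}
\wt{L}(\beta)-\wt{L}(\beta^*) - \bigl(L(\beta)-L(\beta^*)\bigr) \;=\; -\sum_i s_i\,p\,|y_i|^{p-1}\sign(y_i)\,a_i^\top\beta \;+\; \sum_i (s_i-1)\,R_i(\beta),
\end{equation*}
along with the identity $\sum_i R_i(\beta) = L(\beta)-L(\beta^*)$. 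The first sum is exactly what Claim~\ref{clm:inner_product} controls: with probability $1-\delta/2$, uniformly in $\beta$, it is $O(\eps\,\|A\beta\|_p\,\|y\|_p^{p-1})$ provided $m \gtrsim d^{2/p}/(\eps^2\delta)$. A two-case analysis on whether $\|A\beta\|_p \le 2\|y\|_p$ (so $\|y\|_p^p\le L(\beta)$ dominates) or $\|A\beta\|_p > 2\|y\|_p$ (so $L(\beta) \ge (\|A\beta\|_p/2)^p$ by the reverse triangle inequality), combined with Young's inequality, converts $\|A\beta\|_p\,\|y\|_p^{p-1}$ into $O(L(\beta))$, delivering the desired $O(\eps\,L(\beta))$ cross-term bound.

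For the remainder, I would use the elementary inequality $R_i(\beta) \leq C_p\,|a_i^\top\beta|^p$, valid for $p\in(1,2]$, together with~\eqref{eq:importance_ellp} which gives $|a_i^\top\beta|^p \le w_i\,\|A\beta\|_p^p \le \alpha(d/n)\,\|A\beta\|_p^p$. Since $\mathsf{Var}(s_i) \le d/(m w'_i) = O(\gamma\alpha n/m)$, the variance of $\sum_i(s_i-1)R_i(\beta)$ at a fixed $\beta$ is at most
\begin{equation*}
\max_i R_i(\beta)\cdot \tfrac{d}{m w'_i}\cdot \sum_i R_i(\beta) \;\lesssim\; \tfrac{\alpha d}{m}\,\|A\beta\|_p^p\,\bigl(L(\beta)-L(\beta^*)\bigr),
\end{equation*}
which is $O(\alpha d/m)\cdot L(\beta)^2$ once the same case analysis bounds $\|A\beta\|_p^p \lesssim L(\beta)$. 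Bernstein's inequality then yields an $\eps\,L(\beta)$ deviation at one point as soon as $m \gtrsim \alpha d\log(1/\delta)/\eps^2$. To upgrade to a uniform-in-$\beta$ guarantee, I would mimic the shell partitioning of Section~\ref{sec:concentration_contraction}: tile the $\|A\beta\|_p$-axis with geometric shells of ratio $1+\eps$ between $\|y\|_p$ and a threshold of order $\|y\|_p/(\eps\delta)$, place an $\eps$-net of size $\exp(O(d\log(d/\eps)))$ in each shell, union-bound Bernstein over the net, and use the $\ell_p$ subspace embedding from Fact~\ref{fact:subspace_emd} to transfer from a net point to a nearby $\beta$. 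Beyond the threshold the $\ell_p$ subspace embedding alone suffices because $L(\beta) \asymp \|A\beta\|_p^p$ and $\sum_i s_i R_i(\beta)$ is dominated by $(1\pm\eps)\|A\beta\|_p^p$. The net contributes the extra $d\log(d/\eps\delta)$, reproducing the claimed bound $m = \Omega(\alpha d^2\log(d/\eps\delta)/\eps^2 + d^{2/p}/(\eps^2\delta))$.

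The main obstacle I expect is making the net argument for the remainder work cleanly. Unlike $|a_i^\top\beta|^p$, the quantity $R_i(\beta)$ mixes $a_i^\top\beta$ and $y_i$ in a piecewise fashion, so one cannot apply the $\ell_p$ subspace embedding to it directly, and the Lipschitz transfer between a net point and a nearby $\beta$ must carry both the first-order and second-order components without accumulating extra error. The resulting $d^2$ dependence in the query complexity is an artifact of the naive net used here, and removing it is precisely the direction flagged as future work.
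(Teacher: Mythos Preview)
Your approach matches the paper's: the same Taylor decomposition into a cross term handled by Claim~\ref{clm:inner_product} and a nonnegative remainder handled by Bernstein at net points (this is exactly the paper's Claim~\ref{clm:concentration_single_beta}), followed by a net argument and a separate large-$\|A\beta\|_p$ regime. The obstacle you flag in your last paragraph is real and is precisely where the paper's execution differs from your sketch. Instead of trying to transfer $\sum_i(s_i-1)R_i(\beta)$ from a net point---which, as you note, is awkward because $R_i$ mixes $a_i^\top\beta$ and $y_i$ piecewise---the paper transfers $\wt{L}(\beta)-\wt{L}(\beta^*)$ directly: it introduces the weighted norm $\|x\|_{s,p}:=(\sum_i s_i|x_i|^p)^{1/p}$, so that $\wt{L}(\beta)=\|A\beta-y\|_{s,p}^p$, and then uses the triangle inequality for $\|\cdot\|_{s,p}$ together with the subspace embedding $\|A\gamma\|_{s,p}\approx_{1+\eps}\|A\gamma\|_p$ and a Markov bound $\|Sy\|_p^p\le\|y\|_p^p/\delta$ to move from $\beta'$ to nearby $\beta$. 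The decomposition into cross term plus remainder is invoked only \emph{at} net points, never in the transfer. Correspondingly, the paper uses a single $\eps'$-net (with $\eps'=\Theta(\eps\delta)$) on one ball $\{\|A\beta\|_p\le C_0\|y\|_p\}$, $C_0=\Theta(1/\eps\delta)$, rather than the geometric shells of Section~\ref{sec:concentration_contraction}; the shells are unnecessary here since the Bernstein error at a net point already scales as $\eps\|A\beta'\|_p^p$.
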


Now we are ready to finish the proof of Theorem~\ref{thm:property_ellp}.

\begin{proofof}{Theorem~\ref{thm:property_ellp}}
Because we could always adjust $w'_i$ and $m$ by a constant factor, let $\eps \rightarrow 0$ be a tiny constant such that $N_1=w'_1/\eps,N_2=w'_2/\eps,\ldots,N_n=w'_n/\eps$ are integers. We define $A' \in \mathbb{R}^{N \times d}$ with $N=\sum_i N_i$ rows as
\[
\begin{bmatrix} 
a_1/N_1^{1/p} \\
\vdots \\
a_1/N_1^{1/p} \\
a_2/N_2^{1/p} \\
\vdots \\
a_n/N_n^{1/p}
\end{bmatrix}
\]
where the first $N_1$ rows are of the form $a_1/N_1^{1/p}$, the next $N_2$ rows are of the form $a_2/N_2^{1/p}$, and so on.
Similarly, we define $y' \in \mathbb{R}^{N}$ as \[
\left( \underbrace{y_1/N_1^{1/p},\ldots,y_1/N_1^{1/p}}_{N_1},\underbrace{y_2/N_2^{1/p},\ldots,y_2/N_2^{1/p}}_{N_2},\ldots,\underbrace{y_n/N_n^{1/p},\ldots,y_n/N_n^{1/p}}_{N_n} \right). 
\]
By the definition, $L(\beta)=\|A \beta - y\|_p^p=\|A' \beta - y'\|_p^p$.

Then we consider $\wt{L}(\beta)$. First of all, the Lewis weights of $A'$ are 
\[
\left( \underbrace{w_1/N_1,\ldots,w_1/N_1}_{N_1},\underbrace{w_2/N_2,\ldots,w_2/N_2}_{N_2},\ldots,\underbrace{w_n/N_n,\ldots,w_n/N_n}_{N_n} \right). 
\]
Second, $(w'_1,\ldots,w'_n)$ on $A$ induces weights  \[
\ov{w} = \left( \underbrace{w'_1/N_1,\ldots,w'_1/N_1}_{N_1},\underbrace{w'_2/N_2,\ldots,w'_2/N_2}_{N_2},\ldots,\underbrace{w'_n/N_n,\ldots,w'_n/N_n}_{N_n} \right) \textit{ on } A'. 
\]
Then we consider $(s'_1,\ldots,s'_N)$ generated in the way described in Lemma~\ref{lem:convergence_p}: For $j \in [N]$, $s'_j=\frac{d}{m \cdot \ov{w}_j}$ with probability $\frac{m \cdot \ov{w}_j}{d}$ and $0$ otherwise. Let $i \in [n]$ be the row $a_i/N_i^{1/p}$ corresponding to $a'_j$, i.e., $j \in (N_1+\ldots+N_{i-1},N_1+\ldots+N_i]$. Since $\ov{w}_j=w'_i/N_i$, $s'_j=\frac{d \cdot N_i}{m \cdot w'_i}$ with probability $\frac{m \cdot w'_i}{N_i \cdot d}$. So the contribution of all $j$'s corresponding to $i$ is
\begin{align*}
& \sum_{j=N_1+\ldots+N_{i-1}+1}^{N_1+\ldots+N_{i}} s'_j \cdot | (a'_j)^{\top} \cdot \beta - y'_j|^p \\
= & \sum_{j=N_1+\ldots+N_{i-1}+1}^{N_1+\ldots+N_{i}} 1(s'_j>0) \frac{d \cdot N_i}{m \cdot w'_i} \cdot  |a_i^{\top}/N_i^{1/p} \cdot \beta - y_i/N_i^{1/p}|^p \\
= & \left( \sum_{j} 1(s'_j>0) \right) \cdot \frac{d}{m \cdot w'_i} \cdot |a_i^{\top} \beta - y_i|^p.
\end{align*}
Next the random variable $\left( \sum_{j} 1(s'_j>0) \right)$ generated by $s'_j$ converges to a Poisson random variable with mean $N_i \cdot \frac{m \cdot w'_i}{N_i \cdot d} = \frac{m \cdot w'_i}{d}$ when $\eps \rightarrow 0$ and $N_i \rightarrow +\infty$. So $s_i \sim \frac{d}{m \cdot w'_i} \cdot \poiss(\frac{m \cdot w'_i}{d})$ generated in this theorem will converge to the coefficient $\left( \sum_{j} 1(s'_j>0) \right) \cdot \frac{d}{m \cdot w'_i}$ in the above calculation. This implies that $\sum_{i=1}^n s_i \cdot |a_i^{\top} \beta - y_i|^p$ generated in Lemma~\ref{lem:convergence_p} for $A'$ with weights $\ov{w}$ converges to $\wt{L}(\beta) = \sum_j s'_j \cdot | (a'_j)^{\top} \cdot \beta - y'_j|^p$ generated in this theorem when $\eps\rightarrow 0$. 

From all discussion above, we have the equivalences of $L(\beta)=\|A \beta - y\|_p^p=\|A' \beta - y'\|_p^p$ and $\wt{L}(\beta)=\sum_{i=1}^n s_i \cdot |a_i^{\top} \beta - y_i|^p=\sum_j s'_j \cdot | (a'_j)^{\top} \cdot \beta - y'_j|^p$. Then Lemma~\ref{lem:convergence_p} (with the same $\gamma$ and $\alpha=1$) provides the guarantee of $\wt{L}$ for all $\beta$. 
\end{proofof}


Next we prove Lemma~\ref{lem:convergence_p}. We discuss a few properties and ingredients that will be used in this proof besides Fact~\ref{fact:almost_uniform} and Claim~\ref{clm:inner_product} mentioned earlier. 

The key property of $\beta^*$ that will be used in this proof is that the partial derivative in every direction is 0, i.e., $\partial L(\beta^*)=\vec{0}$. Since $\beta^*$ is assumed to be $\vec{0}$, this is equivalent to
\begin{equation}\label{eq:prop_beta*}
\forall j \in [d], \sum_{i=1}^n p \cdot |y_i|^{p-1} \cdot \sign(y_i) \cdot A_{i,j}=0.
\end{equation}

It will be more convenient to write $L(\beta)-L(\beta^*)$ in Lemma~\ref{lem:convergence_p} as $\sum_{i=1}^n \big( |a_i^{\top} \beta -y_i|^p - |y_i|^p \big)$ (given $\beta^*=0$) and similarly for $\wt{L}(\beta)-\wt{L}(\beta^*)$. Then we will use the following claim to approximately bound $|a_i^{\top} \beta - y_i|^p - |y_i|^p$ in the comparison of $L$ and $\wt{L}$.
\begin{claim}\label{clm:taylor_exp_ellp}
For any real numbers $a$ and $b$, 
\[
|a-b|^p - |a|^p + p \cdot |a|^{p-1} \cdot \sign(a) \cdot b = O(|b|^p).
\]
In particular, this implies that
for any $\beta$, $a_i$, and $y_i$, we always have
\[
|a_i^{\top} \beta - y_i|^p - |y_i|^p + p \cdot |y_i|^{p-1} \cdot \sign(y_i) \cdot a_i^{\top} \beta = O(|a_i^{\top} \beta|^p).
\]
\end{claim}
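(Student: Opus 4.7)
The plan is to reduce the second (vector) assertion to the first (real-variable) one by substituting $a=y_i$ and $b=a_i^\top\beta$, and then to prove the real-variable claim by splitting on the relative magnitudes of $a$ and $b$. Set $F(a,b)=|a-b|^p-|a|^p+p\,|a|^{p-1}\sign(a)\,b$; the goal is $|F(a,b)|=O(|b|^p)$ uniformly in $a\in\R$, $b\in\R$, $p\in(1,2]$.

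In the ``large $b$'' regime, when $|b|\ge |a|/2$, I would just bound each term separately by the triangle inequality: $|a-b|^p\le(|a|+|b|)^p\le(3|b|)^p$, $|a|^p\le(2|b|)^p$, and $p|a|^{p-1}|b|\le p(2|b|)^{p-1}|b|=p\,2^{p-1}|b|^p$. Summing these gives $|F(a,b)|=O(|b|^p)$ with an absolute constant depending only on $p\in(1,2]$.

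In the ``small $b$'' regime, when $|b|<|a|/2$, the segment from $a$ to $a-b$ stays in $\{x:|x|\ge |a|/2\}$, so $f(x)=|x|^p$ is $C^\infty$ there. Define $g(t)=|a-tb|^p$ on $[0,1]$; then $g'(0)=-p|a|^{p-1}\sign(a)\,b$ and $g''(t)=p(p-1)\,b^2\,|a-tb|^{p-2}$, so Taylor's theorem with integral remainder gives
\[
F(a,b)=g(1)-g(0)-g'(0)=\int_0^1(1-t)\,g''(t)\,dt.
\]
Using $|a-tb|\ge |a|/2$ for $t\in[0,1]$ and $p-2\le 0$, I bound $|g''(t)|\le p(p-1)\,2^{2-p}\,|a|^{p-2}\,b^2$. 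Since $|a|\ge 2|b|$ and $p-2\le 0$, the factor $|a|^{p-2}$ satisfies $|a|^{p-2}\le (2|b|)^{p-2}$, so $|a|^{p-2}b^2\le 2^{p-2}|b|^p$. Integrating yields $|F(a,b)|=O(|b|^p)$, completing the real-variable claim; substituting $a=y_i$, $b=a_i^\top\beta$ and using $|a-b|^p=|b-a|^p$ yields the vector form stated in Claim~\ref{clm:taylor_exp_ellp}.

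The only subtle point is that $|x|^p$ is not $C^2$ at the origin when $p<2$, but confining the Taylor expansion to the small-$b$ regime keeps the line segment bounded away from $0$, so the second derivative is well-defined and integrable; the large-$b$ regime is handled separately by crude triangle-inequality estimates. So no serious obstacle is expected, only careful bookkeeping of constants that are uniform for $p\in(1,2]$.
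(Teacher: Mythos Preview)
Your proposal is correct and follows essentially the same approach as the paper: both split at $|b|=|a|/2$, use the identical triangle-inequality bounds in the large-$b$ regime, and bound the small-$b$ regime via a second-order Taylor expansion of $|x|^p$ along the segment from $a$ to $a-b$, exploiting $|a-tb|\ge|a|/2$ and $p-2\le 0$ to get $|a|^{p-2}b^2=O(|b|^p)$. The only cosmetic difference is that the paper writes the remainder as $\int_{a-b}^{a}(p\,a^{p-1}-p\,x^{p-1})\,dx$ and bounds the integrand via the mean value theorem, whereas you use the integral-remainder form on $g(t)=|a-tb|^p$ directly; these are equivalent packagings of the same argument.
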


We will use the following claim about random events associated with $s_1,\ldots,s_n$ to finish the proof. Recall that $s_i$ is generated according to the weight $w'_i$. So one property about $w'_i$ that will be used in this proof is $w'_i \approx_{\gamma} w_i$ and $w_i \approx_{\alpha} d/n$, implying that $w'_i \approx_{\gamma \alpha} d/n$. The claim considers the concentration for the approximation in terms of Claim~\ref{clm:taylor_exp_ellp}.
\begin{claim}\label{clm:concentration_single_beta}
Given any $\beta$, with probability at least $1-exp\big( -\Omega(\frac{\eps^2 m}{\alpha^{O(1)} \cdot \gamma \cdot d}) \big)$ (over $s_1,\ldots,s_m$),
\begin{align*}
& \sum_{i=1}^n s_i \cdot \left( |a_i^{\top} \beta - y_i|^p - |y_i|^p + p \cdot |y_i|^{p-1} \cdot \sign(y_i) \cdot a_i^{\top} \beta \right) \\
& = \sum_{i=1}^n \left( |a_i^{\top} \beta - y_i|^p - |y_i|^p + p \cdot |y_i|^{p-1} \cdot \sign(y_i) \cdot a_i^{\top} \beta \right) \pm \eps \|A \beta\|_p^p.
\end{align*}
\end{claim}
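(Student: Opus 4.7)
\medskip

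\noindent\textbf{Proof proposal.} The plan is to apply Bernstein's inequality to the random variable
\[
Z = \sum_{i=1}^n (s_i - 1)\cdot f_i(\beta),\qquad\text{where}\qquad f_i(\beta) := |a_i^\top\beta-y_i|^p - |y_i|^p + p\cdot |y_i|^{p-1}\sign(y_i)\cdot a_i^\top\beta,
\]
noting that $\E[s_i]=\Pr[s_i>0]\cdot\frac{d}{mw_i'}=1$, so $\E[Z]=0$ and the claim amounts to showing $|Z|\leq \eps\|A\beta\|_p^p$ with high probability. The main ingredients are the pointwise Taylor bound $|f_i(\beta)|=O(|a_i^\top\beta|^p)$ from Claim~\ref{clm:taylor_exp_ellp}, together with the importance-weight bound $|a_i^\top\beta|^p \le w_i\cdot \|A\beta\|_p^p$ from Theorem~\ref{thm:Lewis_weight_importance}, which combined with the almost-uniform assumption $w_i\approx_\alpha d/n$ and $w_i'\approx_\gamma w_i$ yields $|a_i^\top\beta|^p \le \alpha\cdot (d/n)\cdot\|A\beta\|_p^p$ and $\frac{d}{mw_i'}\le \frac{\gamma\alpha\cdot n}{m}$.

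First, I would bound the magnitude of each summand. Since $s_i\in\{0,\tfrac{d}{mw_i'}\}$, we have
\[
|(s_i-1)f_i(\beta)| \le \tfrac{d}{mw_i'}\cdot |f_i(\beta)| = O\!\left(\tfrac{\gamma\alpha\cdot n}{m}\right)\cdot O\!\left(\tfrac{\alpha d}{n}\cdot\|A\beta\|_p^p\right) = O\!\left(\tfrac{\gamma\alpha^2 d}{m}\right)\cdot\|A\beta\|_p^p =: M.
\]
Next, I would bound the total variance. Using $\E[s_i^2]=\tfrac{d}{mw_i'}$ and $f_i(\beta)^2 = O(|a_i^\top\beta|^{2p})$, together with the factorization $|a_i^\top\beta|^{2p}\le |a_i^\top\beta|^p\cdot w_i\|A\beta\|_p^p$, I get
\[
\Var\bigl((s_i-1)f_i(\beta)\bigr) \le \tfrac{d}{mw_i'}\cdot O(|a_i^\top\beta|^{2p}) \le O\!\left(\tfrac{\gamma d}{m}\right)\cdot \|A\beta\|_p^p\cdot |a_i^\top\beta|^p.
\]
Summing over $i\in[n]$ and using $\sum_i |a_i^\top\beta|^p = \|A\beta\|_p^p$, the total variance is bounded by $V = O(\tfrac{\gamma d}{m})\cdot \|A\beta\|_p^{2p}$.

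Finally, Bernstein's inequality with $t=\eps\|A\beta\|_p^p$ gives
\[
\Pr[|Z|>t] \le 2\exp\!\left(-\tfrac{t^2/2}{V+Mt/3}\right) \le 2\exp\!\left(-\Omega\!\left(\min\bigl\{\tfrac{\eps^2 m}{\gamma d},\ \tfrac{\eps m}{\gamma\alpha^2 d}\bigr\}\right)\right),
\]
which, for $\eps<1$, reduces to $\exp\!\bigl(-\Omega(\tfrac{\eps^2 m}{\alpha^{O(1)}\gamma d})\bigr)$, matching the stated bound. The only subtle point, and the place to be careful, is the variance estimate: one must avoid bounding $|f_i(\beta)|^2$ directly by $(\alpha d/n)^2\|A\beta\|_p^{2p}$ (which would introduce an extra factor of $n/d$ after summation) and instead keep one factor of $|a_i^\top\beta|^p$ free so that the sum collapses back to $\|A\beta\|_p^p$. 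Tracking the constant powers of $\alpha$ from Fact~\ref{fact:almost_uniform} (in particular the $C_p=4/p-1$ exponent from the leverage-score comparison) yields the $\alpha^{O(1)}$ factor appearing in the exponent.
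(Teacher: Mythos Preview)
Your proposal is correct and follows essentially the same route as the paper: apply Bernstein's inequality to $Z=\sum_i (s_i-1)f_i(\beta)$, bound each $|f_i(\beta)|$ by $O(|a_i^\top\beta|^p)$ via the Taylor estimate (Claim~\ref{clm:taylor_exp_ellp}), and control both $M$ and the variance using the importance-weight inequality $|a_i^\top\beta|^p\le w_i\|A\beta\|_p^p$ together with $w_i'\approx_\gamma w_i\approx_\alpha d/n$. The only cosmetic differences are in how you chain the inequalities (your $M$ bound carries an extra $\alpha^2$ while your variance bound saves one compared to the paper, but both land at $\alpha^{O(1)}$), and your closing remark about the $C_p=4/p-1$ exponent is unnecessary here---the leverage-score part of Fact~\ref{fact:almost_uniform} is only used in Claim~\ref{clm:inner_product}, not in this claim.
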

Note that the extra term on the R.H.S., $\sum_i p \cdot |y_i|^{p-1} \cdot \sign(y_i) \cdot a_i^{\top} \beta$, is always zero for any $\beta$ from \eqref{eq:prop_beta*}. So the last piece in this proof is to bound its counterpart on the left hand side by Claim~\ref{clm:inner_product} mentioned earlier.

We defer the proof of Claim~\ref{clm:taylor_exp_ellp} to Section~\ref{sec:proof_taylor_exp}, the proof of Claim~\ref{clm:concentration_single_beta} to Section~\ref{sec:proof_single_beta}, and the proof of Claim~\ref{clm:inner_product} to Section~\ref{sec:proof_inner_product} separately.

\begin{proofof}{Lemma~\ref{lem:convergence_p}}
Recall that we assume $\beta^*=0$ in this proof. Let $C_0=\Theta(1/\epsilon\delta)$ with a sufficiently large constant, $\eps'=\eps \cdot \delta/50$ and $B_{\eps'}$ be an $\eps'$-net (in $\ell_p$ norm) of the $\ell_p$ ball with radius $C_0 \|y\|_p$ such that for any $\beta$ with $\|A \beta\|_p \le C_0 \|y\|_p$, $\exists \beta' \in B_{\eps'}$ satisfies $\|A (\beta-\beta')\|_p \le \eps' \|y\|_p$. We set $m=O \big( \frac{\alpha^{O(1)} \gamma \cdot d^2 \log C_0/\eps'}{\eps^2} + \frac{\alpha^{O(1)} \gamma \cdot d^{2/p}}{\eps^2 \delta} \big)$ such that we have the following properties of $s_1,\ldots,s_n$:
\begin{enumerate}
\item Claim~\ref{clm:concentration_single_beta} holds for all $\beta \in B_{\eps'}$. 
\item For all $\beta$, we always have $\sum_i |a_i^{\top} \beta|^p \approx_{1+\epsilon} \sum_i s_i \cdot |a_i^{\top} \beta|^p $. In other words, the subspace embedding $\|S A \beta\|_p^p \approx_{1+\eps} \|A \beta\|_p^p$ for all $\beta$.
\item By Claim~\ref{clm:inner_product}, with probability $1-\delta$, the R.H.S. of Equation \eqref{eq:bound_inner} is $\le \eps \cdot \|A \beta\|_p \cdot \|y\|_p^{p-1}$, which is upper bounded $\eps (\|A \beta\|_p^p + \|y\|_p^p)$.
\item Moreover, since $\E_s[\wt{L}(\beta^*)]=L(\beta^*)$, we assume $\wt{L}(\beta^*):=\|Sy\|_p^p = \sum_{i=1}^n s_i |y_i|^p$ is $\le \frac{1}{\delta} \|y\|_p^p$ w.p.~$1-\delta$ by the Markov's inequality.
\end{enumerate} 

First, we argue the conclusion $\wt{L}(\beta) - \wt{L}(\beta^*) = L(\beta)-L(\beta^*) \pm O(\eps) \cdot (\|A \beta\|_p^p + \|y\|_p^{p})$ holds for all $\beta \in B_{\eps'}$. As mentioned earlier, we rewrite $L(\beta)-L(\beta^*)$ as 
\begin{equation}\label{eq:lhs_cross}
\sum_{i=1}^n \left( |a_i^{\top} \beta - y_i|^p - |y_i|^p + p \cdot |y_i|^{p-1} \cdot \sign(y_i) \cdot a_i^{\top} \beta \right)
\end{equation}
since the cross term is zero from \eqref{eq:prop_beta*}. Then we rewrite $\wt{L}(\beta) - \wt{L}(\beta^*)$ as
\[
\underbrace{\sum_{i=1}^n s_i \cdot \left( |a_i^{\top} \beta - y_i|^p - |y_i|^p + p \cdot |y_i|^{p-1} \cdot \sign(y_i) \cdot a_i^{\top} \beta \right)}_{T_1} - \underbrace{\sum_{i=1}^n s_i \cdot p \cdot |y_i|^{p-1} \cdot \sign(y_i) \cdot a_i^{\top} \beta}_{T_2}.
\]
By Claim~\ref{clm:concentration_single_beta}, the first term $T_1$ is $\eps \cdot \|A \beta\|_p^p$-close to \eqref{eq:lhs_cross}. Moreover, by Property 3 from Claim~\ref{clm:inner_product} mentioned above, the second term $T_2$ is always upper bounded by $\eps (\|y\|_p^p + \|A \beta\|_p^p)$.

Second, we argue $\wt{L}(\beta) - \wt{L}(\beta^*) = L(\beta)-L(\beta^*) \pm O(\eps+\eps') \cdot \big( \|A \beta\|_p^p + \|y\|_p^p \big)$ holds for all $\beta$ with $\|A \beta\|_p \le C_0 \|y\|_p$. Let us fix such a $\beta$ and define $\beta'$ to be the closest vector to it in $B_{\eps'}$ with $\|A \beta - A \beta'\|_p \le \eps' \|y\|_p$. We rewrite 

\begin{equation}\label{eq:net_p_norm}
L(\beta) - L(\beta^*)=\|A \beta' + A (\beta-\beta') - y \|_p^p - L(\beta^*).
\end{equation}

 By triangle inequality, we bound 
\[\|A \beta' + A (\beta-\beta') - y \|_p \in \left[ \|A \beta' - y\|_p - \|A(\beta-\beta')\|_p, \|A \beta' - y\|_p + \|A(\beta-\beta')\|_p \right]
\] and use the approximation in Claim~\ref{clm:taylor_exp_ellp} with $a=\|A \beta' - y\|_p$ and $b=\|A (\beta-\beta')\|_p$ to approximate its $p$th power in \eqref{eq:net_p_norm} as
\begin{align*}
& L(\beta) - L(\beta^*)\\
= & \|A \beta' - y\|_p^p - L(\beta^*) \pm O\left(p \cdot \|A \beta'-y\|_p^{p-1} \cdot \|A (\beta-\beta')\|_p + \|A (\beta-\beta')\|_p^p\right)\\
= & \|A \beta' - y\|_p^p - L(\beta^*) \pm O\left( p \cdot \|A \beta'-y\|_p^{p-1} \cdot \eps' \|y\|_p + \eps'^p \cdot \|y\|_p^p \right).\\
= & \|A \beta' - y\|_p^p - L(\beta^*) \pm \eps' \cdot O\left( \|A \beta'\|_p^{p-1} \cdot \|y\|_p + \|y\|_p^p + \|y\|_p^p \right) \tag{by triangle inequality}\\
= & L(\beta')-L(\beta^*) \pm \eps' \cdot O(\|A \beta\|_p^{p-1} \cdot \|y\|_p + \|y\|_p^p).
\end{align*}
Now we approximate $\wt{L}(\beta) - \wt{L}(\beta^*)$ by defining a weighted $L_p$ norm $\|x\|_{s,p}$ for a vector $x \in \mathbb{R}^n$ as $(\sum_i s_i |x_i|^p)^{1/p}$. Note that $\|\cdot\|_{s,p}$ satisfies the triangle inequality. At the same time, $\wt{L}(\beta)=\|A \beta - y\|_{s,p}$ by the definition and $\|A \beta\|_p^p \approx_{1+\epsilon} \|A \beta\|_{s,p}^p$ by the 2nd property mentioned in this proof. By the same argument as above, we have
\[
\wt{L}(\beta)-\wt{L}(\beta^*)=\wt{L}(\beta') - \wt{L}(\beta^*) \pm \eps' \cdot O(\|A \beta\|_{s,p}^{p-1} \cdot \|y\|_{s,p} + \|y\|_{s,p}^p).
\]
Since $\beta'$ has $\wt{L}(\beta') - \wt{L}(\beta^*) = L(\beta')-L(\beta^*) \pm O(\eps)\cdot (\|A \beta'\|_p^p +\|y\|^{p}_p)$, from all the discussion above, the error between $\wt{L}(\beta) - \wt{L}(\beta^*)$ and $L(\beta)-L(\beta^*)$ is at most
\begin{align*}
& O(\eps)\cdot (\|A \beta'\|_p^p + \|y\|^{p}_p) + \eps' \cdot O(\|A \beta\|_{s,p}^{p-1} \cdot \|y\|_{s,p} + \|y\|_{s,p}^p + \|A \beta\|_p^{p-1} \cdot \|y\|_p + \|y\|_p^p)\\
= &  O(\eps)\cdot (\|A \beta'\|_p^p + \|y\|^{p}_p) + \eps'/\delta \cdot O(\|A \beta\|_{p}^{p-1} \cdot \|y\|_{p} + \|y\|_{p}^p)
\end{align*}
where we replace $\|y\|^p_{s,p}=\|S y\|_p^p$ by $\frac{1}{\delta} \|y\|_p^p$ and apply the subspace embedding Property 2 to $\|A \beta\|_{s,p}$. Given $\eps'=O(\eps \delta)$, the error is at most $\eps \cdot O\big( \|A \beta\|_p^p + \|y\|_p^p \big)$


The last case of $\beta$ is $\|A \beta\|_p > C_0 \|y\|_p$, where we bound $\wt{L}(\beta) - \wt{L}(\beta^*) = L(\beta)-L(\beta^*) \pm O(\eps) \cdot \|A \beta\|_p^p$. Note that $L(\beta):=\|A \beta - y\|_p^p$ is in $(\|A \beta\|_p \pm \|y\|_p)^p$ by the triangle inequality. From the approximation in Claim~\ref{clm:taylor_exp_ellp}, this is about
\[
\|A \beta\|_p^p \pm p \|A \beta\|_p^{p-1} \cdot \|y\|_p \pm O(\|y\|_p^p),
\]
which bounds 
\[
L(\beta)-L(\beta^*)=\|A \beta\|_p^p \pm p \|A \beta\|_p^{p-1} \cdot \|y\|_p \pm O(\|y\|_p^p)=\|A \beta\|_p^p \pm O(1/C_0) \|A \beta\|_p^p.
\] 

Similarly, we have $\wt{L}(\beta) - \wt{L}(\beta^*)=\|A \beta\|_{s,p}^p \pm p \|A \beta\|_{s,p}^{p-1} \cdot \|y\|_{s,p} \pm O(\|y\|_{s,p}^p)$ for the weighted $\ell_p$ norm defined as above. Given $\|y\|_{s,p}^p \le \frac{1}{\delta} \|y\|_p^p$ and $\|A \beta\|_{s,p}^p = (1 \pm \epsilon) \|A \beta\|_p^p$, so
\[
\wt{L}(\beta) - \wt{L}(\beta^*)= \|A \beta\|^p_p \pm \epsilon \|A\beta\|_p^p \pm O(\frac{1}{\delta C_0}) \|A \beta\|_p^p.
\]
When $C_0 = \Theta(1/\eps\delta)$ for a sufficiently large constant, the error is $O(\epsilon) \cdot \|A \beta\|_p^p$.

Finally we conclude that the error is always bounded by $\epsilon \cdot L(\beta):=\epsilon \cdot \|A \beta - y\|_p^p$. 
\begin{enumerate}
\item When $\|A \beta\|_p \le 2\|y\|_p$, $L(\beta) \ge \|y\|_p^p$ by the definition of $\beta^*$. So $\epsilon \cdot O(\|y\|_p^p + \|A \beta\|_p^p)=O(\epsilon) \cdot L(\beta)$. 
\item Otherwise $\|A \beta\|_p > 2 \|y\|_p$. Then $L(\beta) \ge (\|A \beta\|_p - \|y\|_p)^p \ge \|A \beta\|_p^p/4$, we also have $\epsilon \cdot (\|y\|_p^p + \|A \beta\|_p^p)=O(\epsilon) \cdot L(\beta)$.
\end{enumerate}
\end{proofof}


\subsection{Proof of Claim~\ref{clm:taylor_exp_ellp}}\label{sec:proof_taylor_exp}
Without loss of generality, we assume $a>0$. We consider two cases: 
\begin{enumerate}
\item $|b|<a/2$: We rewrite $|a-b|^p - |a|^p + p \cdot |a|^{p-1} \cdot b$ as $\int_{a-b}^{a} p \cdot a^{p-1} - p x^{p-1} \mathrm{d} x$. Now we bound $p \cdot a^{p-1} - p x^{p-1}$ using Taylor's theorem, whose reminder is $f'(\zeta) \cdot (a-x)$ for $f(t)=p \cdot t^{p-1}$ and some $\zeta \in (x,a)$. So we upper bound the integration by
\[
\int_{a-b}^{a} p \cdot a^{p-1} - p x^{p-1} \mathrm{d} x \le \int_{a-b}^a p (p-1) \cdot |\zeta_x|^{p-2} \cdot |a-x| \mathrm{d} x.
\]
Since $p \in (1,2]$, $x \in (a-b,a)$ and $\zeta_x \in (x,a)$, we always upper bound $|\zeta_x|^{p-2}$ by $\frac{1}{|a/2|^{2-p}}$. So this integration is upper bounded by $\frac{p(p-1) \cdot b^2}{2|a/2|^{2-p}}=O(|b|^{p})$ given $|b|<a/2$.
\item $|b| \ge a/2$: We upper bound 
\begin{align*}
|a-b|^p - |a|^p + p \cdot |a|^{p-1} \cdot b \le & (|a|+|b|)^p + |a|^p + p \cdot |a|^{p-1} \cdot |b| \\
\le & (3|b|)^p + (2|b|)^p + p \cdot 2^{p-1} \cdot |b|^p = O(|b|^p).
\end{align*}
\end{enumerate}

\subsection{Proof of Claim~\ref{clm:concentration_single_beta}}\label{sec:proof_single_beta}
Since $s_i=\frac{d}{m \cdot w'_i}$ with probability $\frac{m \cdot w'_i}{d}$ (otherwise $0$), $\E[s_i]=1$ and
\[
\E \sum_{i=1}^n s_i \left( |a_i^{\top} \beta - y_i|^p - |y_i|^p + p \cdot |y_i|^{p-1} \sign(y_i) \cdot a_i^{\top} \beta \right)= \sum_{i=1}^n \left( |a_i^{\top} \beta - y_i|^p - |y_i|^p + p \cdot |y_i|^{p-1} \sign(y_i) a_i^{\top} \beta \right).
\]
To bound the deviation, we plan to use the following version of Bernstein's inequality for $X_i=(s_i-1) \cdot \left( |a_i^{\top} \beta - y_i|^p - |y_i|^p + p \cdot |y_i|^{p-1} \cdot \sign(y_i) \cdot a_i^{\top} \beta \right)$:
\[  \Pr\Big(\sum_i X_i\geq t\Big)\leq\exp\bigg(-\frac{\frac12
    t^2}{\sum_i\E[X_i^2] + \frac13Mt}\bigg).\]
First of all, we bound $M=\sup |X_i|$ as
\begin{align*}
& \frac{d}{m \cdot w'_i} \left| |a_i^{\top} \beta - y_i|^p - |y_i|^p + p \cdot |y_i|^{p-1} \cdot \sign(y_i) \cdot a_i^{\top} \beta \right|\\
\le & \frac{d}{m \cdot w'_i} \cdot C |a_i^{\top} \beta|^{p} \tag{by Claim~\ref{clm:taylor_exp_ellp}}\\
\le & \frac{d}{m} \cdot C \cdot \frac{|a_i^{\top} \beta|^{p}}{w'_i}\\
\le & \frac{d}{m} \cdot C \gamma \cdot \frac{|a_i^{\top} \beta|^{p}}{w_i} \tag{recall $w'_i \approx_{\gamma} w_i$}\\
\le & \frac{d}{m} \cdot C \gamma \cdot \|A \beta\|^{p}_p \tag{by Property~\eqref{eq:importance_ellp}}
\end{align*}
Next we bound $\sum_i \E[X_i^2]$ as
\begin{align*}
& \sum_{i=1}^n \E\left[ (s_i-1)^2 \cdot (a_i^{\top} \beta - y_i|^p - |y_i|^p + p \cdot |y_i|^{p-1} \cdot \sign(y_i) \cdot a_i^{\top} \beta )^2\right]\\
\le & \sum_{i=1}^n \E[(s_i-1)^2] \cdot C^2 \cdot |a_i^{\top} \beta|^{2p} \tag{by Claim~\ref{clm:taylor_exp_ellp}}\\
\le & \sum_{i=1}^n (1+\frac{d}{m \cdot w'_i}) \cdot C^2 \cdot (\frac{\alpha \cdot d}{n} \|A \beta\|_p^p) \cdot |a_i^{\top} \beta|^{p} \tag{by Property~\eqref{eq:importance_ellp}}\\
\le & \frac{2 d \cdot \alpha \gamma \cdot n}{m \cdot d} \cdot C^2 (\frac{\alpha \cdot d}{n} \|A \beta\|_p^p) \cdot \sum_{i=1}^n |a_i^{\top} \beta|^{p} \tag{by $w' \approx_{\alpha \gamma} d/n$}\\
= & \frac{2 C^2 \cdot \alpha^{2} \gamma \cdot d}{m} \|A \beta\|_p^{2p}.
\end{align*}
So for $m=\Omega(\alpha^{2} \gamma d/\eps^2)$ and $t=\eps \|A \beta\|_p^p$, we have the deviation is at most $t$ with probability $1 - exp(-\Omega(\frac{\eps^2 m}{\alpha^{2} \gamma \cdot d}))$.

\subsection{Proof of Claim~\ref{clm:inner_product}}\label{sec:proof_inner_product}
By Cauchy-Schwartz, we upper bound 
\begin{align*}
& \sum_{i=1}^n s_i p \cdot |y_i|^{p-1} \cdot \sign(y_i) \cdot \sum_j A_{i,j} \beta_j \\
= & \sum_j \beta_j \cdot (\sum_i s_i p \cdot |y_i|^{p-1} \cdot \sign(y_i) \cdot A_{i,j}) \\
\le & \left\| \left( \sum_i s_i p \cdot |y_i|^{p-1} \cdot \sign(y_i) \cdot A_{i,j} \right)_{j \in [d]} \right\|_2 \cdot \|\beta\|_2.
\end{align*}
Now we bound the $\ell_2$ norm of those two vectors separately.
\begin{fact}
$\E \left\| \left( \sum_i s_i p \cdot |y_i|^{p-1} \cdot \sign(y_i) \cdot A_{i,j} \right)_{j \in [d]} \right\|^2_2 \le \frac{2\alpha d}{m} \cdot \sum_i |y_i|^{2p-2}$.
\end{fact}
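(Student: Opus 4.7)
The plan is to exploit that the random vector $v\in\R^d$ with coordinates $v_j := \sum_i s_i\cdot p|y_i|^{p-1}\sign(y_i)A_{i,j}$ has mean zero, so that $\E[\|v\|_2^2]$ reduces to a sum of per-coordinate variances which can be controlled via the near-uniformity of both Lewis weights and leverage scores.

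First I would check that $\E[v_j]=0$. Both sampling schemes used in the paper satisfy $\E[s_i]=1$ (the scaled Bernoulli scheme of Lemma~\ref{lem:convergence_p} and the scaled Poisson scheme of Theorem~\ref{thm:property_ellp}). Hence $\E[v_j]=\sum_i p|y_i|^{p-1}\sign(y_i)A_{i,j}$, which is precisely the $j$-th partial derivative of $L$ at $\beta^{*}=0$ and therefore vanishes by the optimality identity~\eqref{eq:prop_beta*}. Consequently $\E[v_j^2]=\Var[v_j]$.

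Next, by independence of the $s_i$'s across $i$,
\[
\Var[v_j] = \sum_i p^2 |y_i|^{2p-2} A_{i,j}^2 \cdot \Var[s_i].
\]
A direct calculation gives $\Var[s_i]\le d/(m w'_i)$ in both sampling schemes (Bernoulli yields $d/(m w'_i)-1$; scaled Poisson yields exactly $d/(m w'_i)$). Since $w'_i\approx_{\gamma}w_i\approx_{\alpha}d/n$, we have $1/w'_i\le\alpha\gamma n/d$, and thus $\Var[s_i]\le\alpha\gamma n/m$. Summing the identity above over $j$ and using $\sum_j A_{i,j}^2=\|a_i\|_2^2$ gives
\[
\E[\|v\|_2^2] \le \frac{\alpha\gamma n p^2}{m}\sum_i |y_i|^{2p-2}\|a_i\|_2^2.
\]

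Finally I would invoke Fact~\ref{fact:almost_uniform}: since the Lewis weights are $\alpha$-close to uniform and $A^\top A = I$, the leverage scores satisfy $\|a_i\|_2^2\le \alpha^{C_p}\cdot d/n$ with $C_p=4/p-1$. Substituting and cancelling the $n$'s yields
\[
\E[\|v\|_2^2] \le \frac{\alpha^{1+C_p}\gamma p^2 d}{m}\sum_i |y_i|^{2p-2},
\]
which matches the claimed bound once $\gamma$, $p^2$, and the extra power of $\alpha$ are absorbed into the leading constant, consistent with how $\alpha^{O(1)}$ and $\gamma$ are tracked throughout Lemma~\ref{lem:convergence_p}. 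I do not anticipate a substantive obstacle here; the main care point is confirming that the variance upper bound $\Var[s_i]\le d/(mw'_i)$ holds uniformly for both sampling variants, and that the mean-zero cancellation survives when the \emph{approximate} Lewis weights $w'_i$ are used in place of the true weights (which it does, since approximation only enters through the variance estimate, not through the unbiasedness of $v$).
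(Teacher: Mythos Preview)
Your proposal is correct and follows essentially the same approach as the paper: subtract the mean (which vanishes by the optimality identity~\eqref{eq:prop_beta*}), expand the variance using independence of the $s_i$'s, bound $\Var[s_i]$ via the near-uniformity of the $w'_i$, and then sum out $\sum_j A_{i,j}^2=\|a_i\|_2^2$ using the leverage-score bound from Fact~\ref{fact:almost_uniform}. The only cosmetic differences are that the paper writes $\E[(s_i-1)^2]\le \tfrac{d}{m w'_i}+1$ (absorbing the ``$+1$'' into the factor $2$) and silently drops the $p^2\le 4$, whereas you track it explicitly; both arrive at $\alpha^{1+C_p}\gamma\cdot d/m$ up to constants, which is then folded into the $\alpha^{O(1)}$ bookkeeping as you note.
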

\begin{proof}
We rewrite the $\ell_2$ norm square as
\begin{align*}
& \E \left\| \left( \sum_i s_i p \cdot |y_i|^{p-1} \cdot \sign(y_i) \cdot A_{i,j} \right)_{j \in [d]} \right\|^2_2 - \sum_j \E \left[ \sum_i p \cdot |y_i|^{p-1} \cdot \sign(y_i) \cdot A_{i,j} \right]^2 \tag{the make up term is always zero by Property~\eqref{eq:prop_beta*}}\\
= & \sum_j \E \left[ \left( \sum_i (s_i-1) p \cdot |y_i|^{p-1} \cdot \sign(y_i) \cdot A_{i,j} \right)^2 \right] \\
= & \sum_j \sum_i \E \left( (s_i-1) p \cdot |y_i|^{p-1} \cdot \sign(y_i) \cdot A_{i,j} \right)^2 \tag{use $\E[s_i]=1$ and the independence between different $s_i$'s}\\
\le & \sum_j \sum_i (\frac{d}{m \cdot w'_i}+1) \cdot |y_i|^{2p-2} \cdot A_{i,j}^2 \\
\le & \sum_i |y_i|^{2p-2} \cdot (\frac{d}{m \cdot w'_i}+1) \cdot \sum_j A_{i,j}^2 \tag{use the definition of leverage score in \eqref{eq:leverage_score_ell2} and $w'_i \approx_{\alpha \gamma} d/n$}\\
\le & \sum_i |y_i|^{2p-2} \cdot \frac{2\alpha \gamma n}{m} \cdot \frac{\alpha^{C_p} d}{n} \\
= & \frac{2\alpha^{1+C_p} \gamma d}{m} \cdot \sum_i |y_i|^{2p-2}.
\end{align*}
\end{proof}

\begin{fact}
Suppose $A^{\top} A=I_d$ and its leverage score is almost uniform: $\|a_i\|_2^2 \approx_{\alpha} d/n$. Then
$\|\beta\|_2 \le \|A \beta\|_p \cdot (\frac{\alpha^{C_p} \cdot d}{n})^{\frac{2-p}{2p}}$.
\end{fact}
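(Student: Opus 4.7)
\medskip

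\noindent The plan is to combine the isometry $A^\top A = I_d$ with the standard interpolation inequality between $\ell_p$, $\ell_2$, and $\ell_\infty$ norms. First, observe that $\|\beta\|_2^2 = \beta^\top A^\top A \beta = \|A\beta\|_2^2$, so it suffices to bound $\|A\beta\|_2$ in terms of $\|A\beta\|_p$. For any vector $x\in\R^n$ and $p\in(1,2]$, we have the elementary bound
\[
\|x\|_2^2 = \sum_i |x_i|^p \cdot |x_i|^{2-p} \le \|x\|_\infty^{2-p} \cdot \|x\|_p^p,
\]
which I will apply to $x = A\beta$.

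\medskip

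\noindent Next, I would bound $\|A\beta\|_\infty$ using the (almost) uniform leverage score assumption. By Cauchy--Schwarz and the fact that $\|a_i\|_2^2 \le \alpha \cdot d/n \le \alpha^{C_p} \cdot d/n$ (the latter since $C_p = 4/p - 1 \ge 1$ for $p\in(1,2]$ and $\alpha \ge 1$), we have
\[
\|A\beta\|_\infty = \max_i |a_i^\top \beta| \le \max_i \|a_i\|_2 \cdot \|\beta\|_2 \le \sqrt{\alpha^{C_p} \cdot d/n}\cdot \|\beta\|_2.
\]
Substituting back and using $\|\beta\|_2 = \|A\beta\|_2$ yields
\[
\|\beta\|_2^2 \le \|A\beta\|_p^p \cdot \bigl(\alpha^{C_p}\cdot d/n\bigr)^{(2-p)/2} \cdot \|\beta\|_2^{2-p}.
\]

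\medskip

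\noindent Finally, I would cancel the common power of $\|\beta\|_2$ on both sides (which is valid provided $\|\beta\|_2 > 0$; the case $\beta = 0$ is trivial). This leaves $\|\beta\|_2^p \le \|A\beta\|_p^p \cdot (\alpha^{C_p}\cdot d/n)^{(2-p)/2}$, and taking $p$-th roots gives exactly the claimed inequality
\[
\|\beta\|_2 \le \|A\beta\|_p \cdot \Bigl(\frac{\alpha^{C_p} \cdot d}{n}\Bigr)^{\frac{2-p}{2p}}.
\]
There is no real obstacle here; this is a short three-line interpolation argument, and the only thing to be careful about is matching the exponent $C_p$ in the statement (which is why I use the slightly weaker bound $\|a_i\|_2^2 \le \alpha^{C_p} d/n$ rather than $\alpha \cdot d/n$, to be consistent with how the fact will be invoked in combination with Claim~\ref{clm:non_uniform_Lewis_w}, where leverage scores are derived from Lewis weights and acquire the exponent $C_p$).
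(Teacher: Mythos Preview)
Your proposal is correct and follows essentially the same three-step interpolation argument as the paper: use the isometry $\|\beta\|_2=\|A\beta\|_2$, apply the elementary bound $\|x\|_2^2\le \|x\|_p^p\cdot\|x\|_\infty^{2-p}$, and then control $\|A\beta\|_\infty$.

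The only noteworthy difference is in how $\|A\beta\|_\infty$ is bounded. You use Cauchy--Schwarz together with the leverage-score hypothesis directly, obtaining $|a_i^\top\beta|\le \sqrt{\alpha^{C_p}d/n}\,\|\beta\|_2$; this leaves a residual factor $\|\beta\|_2^{2-p}$ on the right-hand side that you then cancel. The paper instead invokes the importance-weight bound (Property~\eqref{eq:importance_ellp}), which gives $|a_i^\top\beta|^p\le (\alpha d/n)\|A\beta\|_p^p$ and hence bounds $\|A\beta\|_\infty$ directly in terms of $\|A\beta\|_p$, avoiding the cancellation step. Your route has the advantage of using only the stated hypothesis on leverage scores, whereas the paper's route tacitly relies on the surrounding assumption that the Lewis weights are almost uniform; in the context where the fact is applied both are available, so the distinction is cosmetic.
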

\begin{proof}
Since $A^{\top} A=I_d$, $\|\beta\|_2=\|A\beta\|_2$. Then we upper bound $\|A \beta\|^2_2$:
\begin{align*}
\|A \beta\|_2^2 & = \sum_{i=1}^n |a_i^{\top} \beta|^2\\
& \le (\sum_i |a_i^{\top} \beta|^p) \cdot \max_i |a_i^{\top} \beta|^{2-p}\\
& \le \|A \beta\|_p^p \cdot (\frac{\alpha \cdot d}{n} \cdot \|A \beta\|_p^p)^{\frac{2-p}{p}} \tag{by Property~\eqref{eq:importance_ellp}}\\
& \le \|A \beta\|_p^2 \cdot (\frac{\alpha \cdot d}{n})^{\frac{2-p}{p}}.
\end{align*}
\end{proof}

Now we are ready to finish the proof. From the 1st fact, with probability $1-\delta$,
\[
\left\| \left( \sum_i s_i p \cdot |y_i|^{p-1} \cdot \sign(y_i) \cdot A_{i,j} \right)_{j \in [d]} \right\|_2 \le 10 \sqrt{\frac{2\alpha^{O(1)} \gamma d}{\delta m} \cdot \|y\|_{2p-2}^{2p-2}}.
\]
From the 2nd fact, we always have
\[
\|\beta\|_2 \le \|A \beta\|_p \cdot (\frac{\alpha \cdot d}{n})^{\frac{2-p}{2p}}.
\]
So with probability $1-\delta$,
\begin{align*}
\sum_{i=1}^n s_i p \cdot |y_i|^{p-1} \cdot \sign(y_i) \cdot \sum_j A_{i,j} \beta_j \le & \left\| \left( \sum_i s_i p \cdot |y_i|^{p-1} \cdot \sign(y_i) \cdot A_{i,j} \right)_{j \in [d]} \right\|_2 \cdot \|\beta\|_2\\
\le & 20 \sqrt{\frac{\alpha^{O(1)} \gamma d}{\delta m}} \cdot \|y\|_{2p-2}^{p-1} \cdot \|A \beta\|_p \cdot (\frac{\alpha \cdot d}{n})^{\frac{2-p}{2p}}\\
\le & C \sqrt{\frac{\alpha^{O(1)} \gamma d}{\delta m}} \cdot \|A \beta\|_p \cdot (\alpha \cdot d)^{\frac{2-p}{2p}} \cdot \|y\|_{2p-2}^{p-1} \cdot (1/n)^{\frac{2-p}{2p}}.
\end{align*}
Finally we use Holder's inequality to bound the last two terms $\|y\|_{2p-2}^{p-1} \cdot (1/n)^{\frac{2-p}{2p}}$ in the above calculation. We set $q_1=\frac{p}{2-p}$ and $q_2=\frac{p}{2p-2}$ (such that $1=1/q_1+1/q_2$) to obtain
\[
(\sum_i |y_i|^{2p-2}) \le (\sum_i 1)^{1/q_1} \cdot (\sum_i |y_i|^{(2p-2) \cdot q_2})^{1/q_2} = n^{\frac{2-p}{p}} \cdot \|y\|_p^{2p-2}.
\]
So we further simplify the above calculation
\[
C \sqrt{\frac{\alpha^{O(1)} \gamma d}{\delta m}} \cdot \|A \beta\|_p \cdot (\alpha \cdot d)^{\frac{2-p}{2p}} \cdot \|y\|_{2p-2}^{p-1} \cdot (1/n)^{\frac{2-p}{2p}}\le C \sqrt{\frac{\alpha^{{O(1)}} \cdot \gamma \cdot d^{2/p}}{\delta m}} \cdot \|A \beta\|_p \cdot \|y\|_p^{p-1}.
\]

\section{Additional Proofs from Section~\ref{sec:lower_bound}}\label{app:proof_lower}
We finish the proof of Theorem~\ref{thm:information_lower_bound} in this section.

\begin{proofof}{Theorem~\ref{thm:information_lower_bound}}
For contradiction, we assume there is an algorithm $P$ that outputs an $(1+\eps/200)$-approximation with probability $1-\delta$ using $m$ queries on $y$. We first demonstrate $m = \Omega(\frac{\log 1/\delta}{\eps^2})$. We pick $\alpha \in \{\pm 1\}$ and generate $y$ as follows:
\begin{enumerate}
\item $(y_1,\ldots,y_{n/d})$ are generated from $D_{\alpha}$ defined in Section~\ref{sec:lower_bound}.
\item The remaining entries are 0, i.e. $y_i=0$ for $i>n/d$.
\end{enumerate}
So $\beta^*=(\alpha,0,\ldots,0)$ with probability $1-\delta/d$. For any $(1+\eps/200)$-approximation $\wt{\beta}$, its first entry has $\sign(\wt{\beta}_1) = \alpha$ from Claim~\ref{clm:distinguish_two_dist}. By the lower bound in Lemma~\ref{lem:information_lower_dim_1}, the algorithm must make $\Omega(\frac{\log 1/\delta}{\eps^2})$ queries to $y_1,\ldots,y_{n/d}$.

Then we show $m=\Omega(d/\eps^2)$. For convenience, the rest of the proof considers a fixed $\delta=0.01$. By Yao's minmax principle, we consider a deterministic algorithm $P$ in this part. Now let us define the distribution $D_b$ over $\{\pm 1\}^n$ for $b \sim \{\pm 1\}^d$ as follows:
\begin{enumerate}
\item We sample $b \sim \{ \pm 1\}^d$.
\item We generate $y \sim D_b$ where $D_b = D_{b_1} \circ D_{b_2} \circ \cdots \circ D_{b_d}$.
\end{enumerate}
For any $b$, when $n>100d \log d/\eps^2$ and $n'=n/d$, with probability $0.99$, for each $i \in [d]$, $D_{b_i}$ will generate $n'$ bits where the number of bits equaling $b_i$ is in the range of $[(1/2+\eps/2)n',(1/2+3\eps/2)n']$. We assume this condition in the rest of this proof. From Claim~\ref{clm:distinguish_two_dist}, $\beta^*$ minimizing $\|A \beta - y\|_1$ will have $b_i=\sign(\beta^*_i)$ for every $i \in [d]$. 

Next, given $A$ and $y$, let $\tilde{\beta}$ be the output of $P$. We define $b'$ as $b'_i=\sign(\tilde{\beta}_i)$ for each coordinate $i$.
We show that $b'$ will agree with $b$ (the string used to generate $y$) on 0.99 fraction of bits when $P$ outputs an $(1+\eps/200)$-approximation. As discussed before, $\|A \tilde{\beta}-y\|_1$ is the summation of $d$ subproblems for $d$ coordinates separately, i.e., $L_1(\tilde{\beta}_1),\ldots,L_d(\tilde{\beta}_d)$. In particular, $\|A \tilde{\beta}-y\|_1 \le (1+\epsilon/200)\|A \beta^*-y\|_1$ implies
\[
\sum_{i=1}^d L_i(\tilde{\beta}_i) \le (1+\eps/200) \sum_{i=1}^d L_i(\beta^*_i).
\]
At the same time, we know $L_i(\tilde{\beta}_i) \ge L_i(\beta^*_i)$ and $L_i(\beta^*_i) \in [(1-\eps)n',(1-3\eps)n']$ for any $i$. This implies that for at least $0.99$ fraction of $i \in [d]$, $L_i(\tilde{\beta}_i) \le (1+\eps) L_i(\beta^*_i)$: Otherwise the approximation ratio is not $(1+\eps/200)$ given
\begin{equation}
0.99 \cdot (1-\eps)+0.01 \cdot (1-3\eps) \cdot (1+\eps) > (1+\eps/200) \cdot \big( 0.99 \cdot (1-\eps)+0.01(1-3\eps) \big).
\end{equation}
 From Claim~\ref{clm:distinguish_two_dist}, for such an $i$, $\tilde{\beta}_i$ will have the same sign with $\beta^*_i$. So $b'$ agree with $b$ on at least $0.99$ fraction of coordinates. 

For each $b$, let $m_i(b)$ denote the expected queries of $P$ on $y_{(i-1)(n/d)+1},\ldots,y_{i(n/d)}$ (over the randomness of $y \sim D_b$). Since $P$ makes at most $m$ queries, we have $\E_y[\sum_i m_i(b)] \le m$.

Now for each $b \in \{\pm 1\}^d$ and coordinate $i \in [d]$, we say that the coordinate $i$ is good in $b$ if (1) $\E_{y}[m_i(b)] \le 60m/d$; and (2) $b'_i=b_i$ with probability 0.8 over $y \sim D_b$ when $b'=\sign(\tilde{\beta})$ defined from the output $\tilde{\beta}$ of $P$. 

Let $b^{(i)}$ denote the flip of $b$ on coordinate $i$. We plan to show the existence of $b$ and a coordinate $i \in [d]$ such that $i$ is good in both $b$ and $b^{(i)}$: Let us consider the graph $G$ on the Boolean cube that corresponds to $b \in \{ \pm 1\}^d$ and has edges of $(b, b^{(i)})$ for every $b$ and $i$. To show the existence, for each edge $(b,b^{(i)})$ in the Boolean cube, we remove it if $i$ is not good in $b$ or $b^{(i)}$. We prove that after removing all bad events by a union bound, $G$ still has edges inside. 

For the 1st condition (1) $\E[m_i(b)] \le 60m/d$, we will remove at most $d/60$ edges from each vertex $b$. So the fraction of edges removed by this condition is at most $1/30$. For the 2nd condition, from the guarantee of $P$, we know
\[
\Pr_{b,y}[\tilde{\beta} \text{ is an $(1+\eps/200)$ approximation}] \ge 0.99.
\]
So for $0.8$ fraction of points $b$, we have $\Pr_{y \sim D_b}[\tilde{\beta} \text{ is an $(1+\eps/200)$ approximation}] \ge 0.95$ (Otherwise we get a contradiction since $0.8+0.2 \cdot 0.95=0.99$). Thus, when $y \sim D_b$ for such a string $b$, w.p. 0.95, $b'$ will agree with $b$ on at least $0.99$ fraction of coordinates from the above discussion. In another word, for such a string $b$, \[
\E_{y \sim D_b}[\sum_i 1(b_i=b'_i)] \ge 0.95 \cdot 0.99 n.
\] 
This indicates that there are at least $0.65$ fraction of coordinates in $[d]$ that satisfy $\Pr_y[b_i=b'_i] \ge 0.8$ (otherwise we get a contradiction since $0.65+0.35 \cdot 0.8=0.93$). Hence, such a string $b$ will have $0.65$ fraction of \emph{good} coordinates.

Back to the counting of bad edges removed by the 2nd condition, we will remove at most $2 \cdot (0.2+0.8 \cdot 0.35)=0.96$ fraction of edges. Because $1/30+0.96<1$, we know the existence of $b$ and $b^{(i)}$ such that $i$ is a good coordinate in $b$ and $b^{(i)}$.

Now we use $b, b^{(i)}$ and $P$ to construct an algorithm for Bob to win the game in Lemma~\ref{lem:information_lower_dim_1}. From the definition, $P$ makes $60m/d$ queries in expectation on entries $y_{(i-1)(n/d)+1},\ldots,y_{i(n/d)}$ and outputs $b'_i=b_i$ with probability 0.8. Since halting $P$ at $20 \cdot 60m/d$ queries on entries $y_{(i-1)(n/d)+1},\ldots,y_{i(n/d)}$ will reduce the success probability to $0.8-0.05=0.75$. We assume $P$ makes at most $1200m/d$ queries with success probability 0.75. Now we describe Bob's strategy to win the game: 
\begin{enumerate}
\item Randomly sample $b_j \sim \{\pm 1\}$ for $j \neq i$.
\item Simulate the algorithm $P$: each time when $P$ asks the label $y_{\ell}$ for $\ell \in [(j-1)n/d+1,jn/d]$.
\begin{enumerate}
\item If $j=i$, Bob queries the corresponding label from Alice.
\item Otherwise, Bob generates the label using $D_{b_j}$ by himself.
\end{enumerate}
\item Finally, we use $P$ to produce $b' \in \{\pm 1\}^n$ and outputs $b'_i$.
\end{enumerate}
Bob wins this game with probability at least
\[
\Pr_{a}[a=-1] \cdot \Pr_{y}[b'_i=-1] + \Pr_{a}[a=1] \cdot \Pr_{y}[b'_i=1] \ge 0.75
\]
from the properties of $i$ in $b$ and $b^{(i)}$. So we know $1200m/d=\Omega(1/\eps^2)$, which lower bounds $m=\Omega(d/\eps^2)$.
\end{proofof}

\end{document}